\title{Double Doubly Robust Thompson Sampling \\
for Generalized Linear Contextual Bandits}
\author{
    Wonyoung Kim,\textsuperscript{\rm 1}
    Kyungbok Lee, \textsuperscript{\rm 2}
    Myunghee Cho Paik \textsuperscript{\rm 2} \textsuperscript{\rm 3} \thanks{Corresponding author}
}
\theoremstyle{plain}
\newtheorem{thm}{Theorem}[section]
\newtheorem{lem}[thm]{Lemma}
\newtheorem{cor}[thm]{Corollary}
\theoremstyle{definition}
\newtheorem{assumption}{Assumption}
\theoremstyle{remark}
\newtheorem{rem}[thm]{Remark}
\global\long\def\Expectation{\text{\ensuremath{\mathbb{E}}}}%
\global\long\def\Probability{\mathbb{P}}%
\global\long\def\Var{\mathbb{V}}%
\global\long\def\Real{\mathbb{R}}%
\global\long\def\CE#1#2{\Expectation\left[\left.#1\right|#2\right]}%
\global\long\def\CV#1#2{\Var\left[\left.#1\right|#2\right]}%
\global\long\def\CP#1#2{\Probability\left(\left.#1\right|#2\right)}%
\global\long\def\Indicator#1{\mathbb{I}\left(#1\right)}%
\global\long\def\abs#1{\left|#1\right|}%
\global\long\def\norm#1{\left\Vert #1\right\Vert }%
\global\long\def\Mineigen#1{\lambda_{\text{min}}\left(#1\right)}%
\global\long\def\Maxeigen#1{\lambda_{\text{max}}\left(#1\right)}%
\global\long\def\Trace#1{\text{tr}\left(#1\right)}%
\global\long\def\Regret#1{\texttt{regret}(#1)}%
\global\long\def\Optimalarm#1{a_{#1}^{*}}%
\global\long\def\Filtration#1{\mathcal{F}_{#1}}%
\global\long\def\History#1{\mathcal{H}_{#1}}%
\global\long\def\Context#1#2{X_{#1,#2}}%
\global\long\def\XX#1#2{\boldsymbol{X}_{#1,#2}}%
\global\long\def\Prederror#1#2{\mathcal{D}_{#1}(#2)}%
\global\long\def\APrederror#1#2{\mathcal{D}^{A}_{#1}(#2)}%
\global\long\def\Diff#1#2{\Delta_{#1}(#2)}%
\global\long\def\Setofcontexts#1{\mathcal{X}_{#1}}%
\global\long\def\Reward#1{Y_{#1}}%
\global\long\def\Action#1{a_{#1}}%
\global\long\def\SelectionP#1#2{\pi_{#1,#2}}%
\global\long\def\Tildepi#1#2{\tilde{\pi}_{#1,#2}}%
\global\long\def\DRreward#1#2#3{Y_{#1,#2}^{#3}}%
\global\long\def\BetaSampled#1#2{\tilde{\beta}_{#1,#2}}%
\global\long\def\Impute#1{\breve{\beta}_{#1}}%
\global\long\def\Estimator#1{\widehat{\beta}_{#1}}%
\global\long\def\Error#1{\eta_{#1}}%
\global\long\def\DRError#1#2#3{\eta_{#1,#2}^{#3}}%
\global\long\def\NMLE#1{\widehat{\beta}_{#1}^{S}}%
\global\long\def\Sampledreward#1#2{\tilde{Y}_{#1,#2}}%
\global\long\def\Candidatearm#1{\tilde{a}_{#1}}%
\global\long\def\Exploration{\mathcal{T}_{*}}%
\global\long\def\MarginExp{\mathcal{T}_{0}}%
\begin{document}
\maketitle

\begin{abstract}
We propose a novel algorithm for generalized linear contextual bandits (GLBs) with an $\tilde{O}(\sqrt{\kappa^{-1} \phi^{-1} T})$ regret over $T$ rounds where $\phi$ is the minimum eigenvalue of the covariance of contexts and $\kappa$ is a lower bound of the variance of rewards. 
In several identified cases of $\phi^{-1}=O(d)$, where $d$ is the dimension of contexts, our result is the first regret bound for generalized linear  bandits (GLBs) achieving the order $\sqrt{d}$ without discarding the observed rewards.
Previous approaches achieve the regret bound of order $\sqrt{d}$ by discarding the observed rewards, whereas our algorithm achieves the bound incorporating contexts from all arms in our double doubly-robust (DDR) estimator.
The DDR estimator is a subclass of doubly-robust estimator but with a tighter error bound.
We also provide an $O(\kappa^{-1} \phi^{-1} \log (NT) \log T)$ regret bound for $N$ arms under a probabilistic margin condition.
This is the first regret bound under the margin condition for linear models or GLMs when contexts are different for all arms but coefficients are common.
We conduct empirical studies using synthetic data and real examples, demonstrating the effectiveness of our algorithm.
\end{abstract}

\section{Introduction}
In multi-armed bandits (MABs), a learner repeatedly chooses an action or arm from action sets given in an environment and observes the reward for the chosen arm.   
The goal is to find a rule for choosing arms to maximize the expected cumulative rewards.
A linear contextual bandit is an MAB with context vectors for each arm in which the expected reward is a linear function of the corresponding context vector.
Popular contextual bandit algorithms include upper confidence bound (\citet{abbasi2011improved}, \texttt{LinUCB}) and Thompson sampling (\citet{agrawal2013thompson}, \texttt{LinTS}) whose theoretical properties have been studied \citep{auer2002using,chu2011contextual,abbasi2011improved,agrawal2014thompson}.
More recently, extensions to generalized linear models (GLMs) have received significant attention.  
The study by \citet{filippi2010parametric} is one of the pioneering studies to propose contextual bandits for GLMs with regret analysis.
\citet{abeille2017linear} extended \texttt{LinTS} to GLM rewards.  
In the GLM, the variance of a reward is related to its mean and the regret bound typically depends on the lower bound of the variance, $\kappa$.  
\citet{faury2020improved} demonstrated the regret bound free of $\kappa$ for a logistic case.

When we focus on the dependence of $d$ on the regret bound, an $\tilde{O}(\kappa^{-1}\sqrt{dT})$ regret bound has been achieved by \citet{li2017provably}, where $\tilde{O}$ denotes big-$O$ notation up to logarithmic factors.
For logistic models, \citet{junimproved2021} achieved $\tilde{O}(\sqrt{dT})$.
These two methods used the approach of \citet{auer2002using} whose main idea is to carefully compose independent samples to develop an estimator and derive the bound using this independence.  
However, to maintain independence, the developed estimator ignores many observed rewards.
Despite of this limitation, no existing algorithms have achieved a regret bound sublinear in $d$ without using the approach of \citet{auer2002using}.
We propose a novel contextual bandit algorithm for generalized linear rewards with $\tilde{O}(\sqrt{\kappa^{-1}dT})$ in several practical cases.
To our knowledge, this regret bound is firstly achieved without relying on the approach of \citet{auer2002using}.

Our proposed algorithm is the first among \texttt{LinTS} variants with a regret bound of order $\sqrt{d}$ for linear or generalized linear payoff.
The proposed algorithm is equipped with a novel estimator called double doubly robust (DDR) estimator which is a subclass of doubly robust (DR) estimators with a tighter error bound than conventional DR estimators.
The DR estimators use contexts from unselected arms after imputing predicted responses using a class of imputation estimators.
Based on these estimators, \citet{kim2021doubly} have recently proposed a \texttt{LinTS} variant for linear rewards which achieves an $\tilde{O}(d\sqrt{T})$ regret bound for several practical cases.
With our proposed DDR estimator, we develop a novel \texttt{LinTS} variant for generalized linear rewards which improves the regret bound by a factor of $\sqrt{d}$.

\begin{table*}[t]
\caption{The main orders of regret bounds for GLMs/logistic bandit algorithms 
(See Appendix~\ref{sec:full_table} for the comparison with more algorithms). 
For details about the assumptions, see Section~\ref{subsec:regret_bound}}
\label{tab:regret_orders}
\footnotesize
\begin{center}
\begin{tabular}{|c|c|c|c|}
\hline
\textbf{Algorithm} & \textbf{Regret Upper Bound}    &   \textbf{Model} & \textbf{Assumptions} \\ \hline
\begin{tabular}[c]{@{}c@{}} \texttt{GLM-UCB} \citep{filippi2010parametric}\end{tabular} & $O(\kappa^{-1} d \sqrt{T} \log^{3/2} T)$ & GLM & 1-3 \\ \hline
\begin{tabular}[c]{@{}c@{}} \texttt{SupCB-GLM} \citep{li2017provably}\end{tabular}         & $O(\kappa^{-1}\sqrt{d T \log NT} \log T$)                                                           & GLM & 1-5\\ \hline
\begin{tabular}[c]{@{}c@{}} {\tt TS}{\tt (GLM)} \citep{abeille2017linear} \end{tabular}       & $O(\kappa^{-1}d^{3/2}\sqrt{T}\log T)$ & GLM & 1-3 \\ \hline
\begin{tabular}[c]{@{}c@{}} \texttt{Logistic UCB-2} \citep{faury2020improved}\end{tabular} & $O(d \sqrt{T} \log T)$ & Logistic & 1-3\\ \hline
\begin{tabular}[c]{@{}c@{}} \texttt{SupLogistic} \citep{junimproved2021} \end{tabular}      & $O(\sqrt{dT \log NT}\log T )$ & Logistic & 1-5\\ \hline
\texttt{DDRTS-GLM} (Proposed)  & $O(\sqrt{\kappa^{-1} d T \log NT})$ & GLM & 1-5, $\phi^{-1}\!=\!O(d)$ \\ \hline
\end{tabular}
\end{center}
\end{table*}

We also demonstrate a logarithmic cumulative regret bound under a margin condition.
Obtaining bounds under this margin condition is challenging because it requires a tight prediction error bound of order $1/\sqrt{t}$ for each round $t$.
Our DDR estimator, which has a tighter error bound than DR estimators, enables us to derive a logarithmic cumulative regret bound under the margin condition.

In our experiments, we demonstrate that our proposed algorithm performs better with less a priori knowledge than several generalized linear contextual bandit algorithms.
Many contextual bandit algorithms require the knowledge of time horizon $T$ \citep{li2017provably, junimproved2021} or a set which includes the true parameter \citep{filippi2010parametric, jun2017scalable,faury2020improved}.
This hinders the application of the algorithms to the real examples.
Our proposed algorithm does not require the knowledge of $T$ or the parameter set and widely applicable with superior performances.

The main contributions of this paper are as follows:
\begin{itemize}
    \item 
    We propose a novel generalized linear contextual bandit (GLB) algorithm that has $\tilde{O}(\sqrt{\kappa^{-1} d T})$ in several identified cases (Theorem~\ref{thm:regret_bound}).
    This is the first regret bound for GLBs with the order $\sqrt{d}$ without relying on the arguments of \citet{auer2002using}. 

    \item
    We provide a novel estimator called DDR estimator which is a subclass of the DR estimators but has a tighter error bound than conventional DR estimators.
    The proposed DDR estimator uses an explicit form of the imputation estimator, which is also doubly robust and guarantees our novel theoretical results.
    
    \item 
    We provide novel theoretical analyses that extend DR Thompson sampling (\citet{kim2021doubly}, \texttt{DR-LinTS}) to GLBs and improve its regret bound by a factor of $\sqrt{d}$.
    Our analyses are different from those of \texttt{DR-LinTS} in deriving a new regret bound capitalizing on independence (Lemma~\ref{lem:prediction_error}) and new maximal elliptical potential lemma (Lemma~\ref{lem:elliptical_potential_lemma}).
    
    \item 
    We provide an $O(\kappa^{-1} d \log NT \log T)$ regret bound under a probabilistic margin condition (Theorem~\ref{thm:fast_regret_bound}).  
    This is the first logarithmic regret bound for linear and generalized linear payoff with arm-specific contexts.
    
    \item 
    Simulations using synthetic datasets and analyses with two real datasets show that the proposed method outperforms existing GLMs/logistic bandit methods. 

\end{itemize}

\section{Generalized linear contextual bandit problem}

The GLB problem considers the case in which the reward follows GLMs. 
\citet{filippi2010parametric} presented a GLB problem with a finite number of arms.
In this setting, the learner faces $N$ arms and each arm is associated with contexts.
For each $i\in\{1,\ldots,N\}:=[N]$, we denote a $d$-dimensional context for the $i$-th arm at round $t$ by $\Context it\in\Real^{d}$.
At round $t$, the learner observes the contexts $\{\Context 1t,\ldots,\Context Nt\}:=\Setofcontexts{t}$, pulls one arm $\Action t\in[N]$, and observes $\Reward t$.  
Let $\History t$ be the history at round $t$ that contains the contexts $\{\Setofcontexts{\tau}\}_{\tau=1}^{t}$, chosen arms $\{\Action{\tau}\}_{\tau=1}^{t-1}$ and the corresponding rewards $\{\Reward{\tau}\}_{\tau=1}^{t-1}$.
We assume that the distribution of the reward is
\[
\CP{Y_{t}=y}{\History t,\Action t} \propto \exp\left\{ y\theta_{\Action{t},t}-b\left(\theta_{\Action{t},t}\right) \right\},
\]
where $\theta_{\Action{t},t}=\xi_{a_t,t}:=\Context{\Action t}t^{T}\beta^{*}$, with the function $b$ known and assumed to be twice differentiable.
Then we have $\CE{\Reward t}{\History t,\Action t}\!=\!b^{\prime}\left(\theta_{a_t,t}\right)=\mu(\xi_{a_t,t})$ and $\CV{Y_{t}}{\History t,\Action t}=b^{\prime\prime}\left(\theta_{a_t,t}\right)$, for some unknown $\beta^{*}\in\Real^{d}$.
Let $\Optimalarm t:=\arg\max_{i=1,\ldots,N}\left\{ \mu(\Context it^{T}\beta^{*})\right\}$ be the optimal arm that maximizes the expected reward at round $t$.  
We define the regret at round $t$ by $\Regret t:=\mu(\Context{\Optimalarm t}t^{T}\beta^{*})-\mu(\Context{\Action t}t^{T}\beta^{*})$.
Our goal is to minimize the sum of regrets over $T$ rounds, $R(T):=\sum_{t=1}^{T}\Regret t$.
The total number of rounds $T$ is finite but possibly unknown.



\section{Related works}

Table~\ref{tab:regret_orders} summarizes the comparison of main regret orders for GLMs/logistic bandit algorithms.
\citet{filippi2010parametric} extended {\tt LinUCB} and proposed an algorithm for GLB (\texttt{GLM-UCB}) with a regret bound of $O(\kappa^{-1} d \sqrt{T} \log^{3/2}T)$, where $\kappa$ is a lower bound of $\mu^{\prime}$.
\citet{abeille2017linear} extended {\tt LinTS} to GLBs and demonstrated a regret bound of $O(\kappa^{-1} d^{3/2}\sqrt{T}\log T)$.
In contrast to linear models, the Gram matrix in GLMs depends on the mean and thus the regret bound has the factor of $\kappa^{-1}$ which can be large.
\citet{faury2020improved} solved this problem by proposing an UCB-based algorithm for logistic models with a regret bound of $O(d \sqrt{T} \log T)$ free of $\kappa$.


Other related works have focused on achieving $\tilde{O}(\sqrt{dT})$ regret bounds using the argument of \citet{auer2002using}.  
The \texttt{SupCB-GLM} algorithm \citep{li2017provably} is extended from  
{\tt GLM-UCB} \citep{filippi2010parametric} yielding an $O(\kappa^{-1} \sqrt{dT\log T N} \log T)$ regret bound, whereas \texttt{SupLogistic} \citep{junimproved2021} is extended from
{\tt Logistic UCB-2} \citep{faury2020improved}  with an  $O(\sqrt{d T \log T N} \log T)$ regret bound.
\texttt{SupCB-GLM} and \texttt{SupLogistic} have the best-known regret bounds for contextual bandits with GLM and logistic models, respectively.
However, they do not incorporate many observed rewards to achieve independence, resulting in additional rounds for their estimators to achieve certain precision level.

The DR method has been employed in the bandit literature for linear payoffs \citep{kim2019doubly,dimakopoulou2019balanced,kim2021doubly}.  
Except in the work by \citet{dimakopoulou2019balanced}, the merit of this technique is the use of full contexts along with pseudo-rewards.  
Our approach shares the merit of using full contexts but also uses a new estimator with a tighter error bound and more elaborate pseudo-rewards than conventional DR methods. 
This calls for new regret analyses which are summarized in Lemmas~\ref{lem:prediction_error} and~\ref{lem:elliptical_potential_lemma}.

Under a {\it probabilistic} margin condition, \citet{bastani2020online} and \citet{bastani2021mostly} presented $O(\log T)$ regret bounds for (generalized) linear contextual bandits when contexts are common to all arms but coefficients are arm-specific.  
In a setting in which contexts are arm-specific but with common coefficients, previous works have shown regret bounds under {\it deterministic} margin conditions \citep{dani2008stochastic,abbasi2011improved}.  
In the latter setting, our bound is the first regret bound under a {\it probabilistic} margin condition for linear models or GLMs.

\section{Proposed methods}

\subsection{Subclass of doubly robust estimator: double doubly-robust (DDR) Estimator}

For linear contextual bandits, \citet{kim2021doubly} employed a DR estimator whose merit is to use all contexts, selected or unselected.  
To use unselected contexts in the DR estimator, the authors replaced the unobserved reward for the unselected context with a pseudo-reward defined as follows:
\begin{equation}
\DRreward it{\Impute t}:=\left\{ 1-\frac{\Indicator{\Action t=i}}{\SelectionP it}\right\} \mu(\Context it^{T}\Impute t)+\frac{\Indicator{\Action t=i}}{\SelectionP it}\Reward t,
\label{eq:pseudo_reward}
\end{equation}
where $\SelectionP it=\CP{\Action{t}=i}{\History t}>0$ \footnote{The selection probability is nonzero for all arms in Thompson sampling with Gaussian prior.}is the selection probability and $\Impute t$ is an imputation estimator for $\beta^{*}$ at round $t$.
\citet{kim2021doubly} studied the case of $\mu(x)=x$ and proposed to set $\Impute{t}$ as any $\History t$-measurable estimator that renders $\DRreward it{\Impute t}$ unbiased for the conditional mean because of $\mathbb{E}(\mathbb{I}(a_t=i)|\History t)=\SelectionP{i}{t}$. 
This choice of imputation estimators covers a wide class of DR estimators.
In this paper, we propose a subclass of the DR estimators called DDR estimator for GLB problem.
The proposed subclass estimator has an explicit form of $\Impute{t}$ which is crucial to our novel theoretical results.

To introduce our imputation estimator $\Impute{t}$, we begin by developing a new bounded estimator,
\[
\NMLE t:= 
    \begin{cases}
        \widehat{\beta}_{t}^{A} & \text{if }\norm{\widehat{\beta}_{t}^{A}}_{2}\le S\\
        S\frac{\widehat{\beta}_{t}^{A}}{\norm{\widehat{\beta}_{t}^{A}}_{2}} & \text{otherwise}
    \end{cases}
\]
for some $S > 0$, where $\widehat{\beta}^{A}_{t}$ is the solution to the score equation, $\sum_{\tau=1}^{t}\left\{ Y_{\tau}-\mu\left(\Context{\Action{\tau}}{\tau}^{T}\beta\right)\right\} \Context{\Action{\tau}}{\tau}=0$.
Now the imputation estimator $\Impute{t}$ at round $t$ is the solution to
\[
\sum_{\tau=1}^{t}\sum_{i=1}^{N}\left\{ \DRreward i{\tau}{\NMLE t}-\mu\left(\Context i{\tau}^{T}\beta\right)\right\} \Context i{\tau}-\lambda \beta = 0,
\]
where $\lambda>0$ is a regularization parameter. 
Regardless of the value of $S$, the pseudo-reward $\DRreward i{\tau}{\NMLE t}$ is unbiased.
Different from DR estimators, the imputation estimate $\Impute{t}$ is also robust and satisfies
\begin{equation}
\norm{\Impute{t}-\beta^{*}}_2 \le \sqrt{\frac{\kappa}{Nd}},
\label{eq:imputation_estimator_condition}
\end{equation}
when $t \ge \Exploration = \Omega(\kappa^{-3} \phi^{-2} N  d^2 \log T)$.
The proof of~\eqref{eq:imputation_estimator_condition} and detailed expression of $\Exploration$ is in Appendix~\ref{subsec:imputation_estimator_bound}.
With this newly defined imputation estimator $\Impute{t}$ and the corresponding pseudo-reward~\eqref{eq:pseudo_reward}, the proposed DDR estimator $\Estimator{t}$ is defined as the solution to,
\begin{equation}
U_{t}(\beta):=\sum_{\tau=1}^{t}\sum_{i=1}^{N}\left\{\DRreward i{\tau}{\Impute t}-\mu(\Context i{\tau}^{T}\beta)\right\}\Context i{\tau}-\lambda\beta=0.
\label{eq:score_equation}
\end{equation}
This DDR estimator uses not only a robust imputation estimator, but also a more elaborate pseudo-reward than that in DR estimators. 
Specifically, for all $\tau\in[t]$, DR estimators use $\DRreward{i}{\tau}{\Impute{\tau}}$, whereas our DDR estimator computes $\DRreward{i}{\tau}{\Impute{t}}$, updating pseudo-rewards on the basis of the most up-to-date imputation estimator.  
Both the imputation estimator with a tighter estimation error bound and elaborate pseudo-rewards result in a subsequent reduction of the prediction error bound for the DDR estimator, which plays a crucial role in reducing $\sqrt{d}$ in the regret bound compared to that of \citet{kim2021doubly}.

\subsection{Double doubly robust Thompson Sampling algorithm}

Our proposed algorithm, double doubly robust Thompson sampling algorithm for generalized linear bandits (\texttt{DDRTS-GLM}) is presented in Algorithm~\ref{alg:DRTS}.
At each round $t\ge 2$, the algorithm samples $\BetaSampled it$ from the distribution $\mathcal{N}(\Estimator{t-1},v^{2}V_{t-1}^{-1})$ for each $i\in[N]$ independently.
Define $\Sampledreward{i}{t}:=\mu (\Context{i}{t}^T\BetaSampled{i}{t})$ and let $m_{t}:=\arg\max_{i} \Sampledreward{i}{t}$ be a candidate action.  
After observing $m_t$, compute $\Tildepi{m_t}{t}:=\Probability(\Sampledreward{m_t}t=\max_{i}\Sampledreward it|\History t)$. 
If $\Tildepi{m_t}{t}>\gamma$, the arm $m_t$ is selected, i.e., $a_t=m_t$. 
Otherwise, the algorithm resamples $\BetaSampled it$ until it finds another arm satisfying $\Tildepi{i}{t} > \gamma $ up to a predetermined fixed value $M_{t}$.

\texttt{DDRTS-GLM} requires additional computations because of the resampling and the DDR estimator.
The computation for resampling is invoked when $\Tildepi{m_t}{t} \le \gamma$ but this does not occur often in practice.
In computing $\Tildepi{m_t}{t}$, we refer to Section \ref{sec:pi_computation} in \citet{kim2021doubly} which proposed a Monte-Carlo estimate and showed that the estimate is efficiently computable.
Furthermore, the estimate is consistent and does not affect the theoretical results.
Most additional computations in \texttt{DDRTS-GLM} occurs in estimating $\Estimator{t}$ which requires the imputation estimator $\Impute{t}$ and contexts of all arms.
This additional computation of \texttt{DDRTS-GLM} is a minor cost of achieving a regret bound sublinear to $d$ and superior performance compared to existing GLB algorithms.

\begin{algorithm}[t]
\caption{Double Doubly Robust Thompson Sampling for Generalized Linear Contextual Bandits (\texttt{DDRTS-GLM})}
\begin{algorithmic}[1]
\label{alg:DRTS}
\small
\STATE \textbf{INPUT:} exploration parameter $v$, regularization parameter $\lambda$, the number of maximum possible resampling $M_t>0$, threshold value for resampling $\gamma\in[(N+1)^{-1},N^{-1})$, $S>0$.
\STATE Initialize $V_1:= \lambda I_d$, $\Estimator{t}=0_d$ and sample $\Action 1$ from $\{1,\ldots,N\}$, randomly
\FOR{ $t\ge2$ } 
\STATE Initialize $n=1$ and observe contexts $\Setofcontexts{t}$
\STATE Sample $\BetaSampled 1t, \ldots \BetaSampled{N}{t}$ from  $\mathcal{N}(\Estimator{t-1},v^{2}V_{t-1}^{-1})$, independently and observe $m_t=\arg\max_{i} \mu(\Context it\BetaSampled it)$
\IF{ $\Tildepi{m_t}t\le\gamma$ and $n\le M_t$} 
\STATE Set $n \leftarrow n+1$ and go to line 5
\ELSE
\STATE Set $\Action{t}=m_t$, play arm $\Action{t}$ and observe reward $\Reward{m_t}$
\ENDIF 
\STATE Compute $V_{t}\!=\!\sum_{\tau=1}^{t}\sum_{i=1}^{N}\!\mu^{\prime}(\Context i{\tau}^{T}\Estimator{t-1})\Context i\tau\Context i\tau^{T}\!+\!\lambda I_d$, $\Impute{t}$ and $\DRreward i{\tau}{\Impute{t}}$, then solve~\eqref{eq:score_equation} to update $\Estimator{t}$
\ENDFOR
\end{algorithmic}
\end{algorithm}

\section{Regret analysis}

In this section, we present two regret bounds for \texttt{DDRTS-GLM}: an $O(\sqrt{\kappa^{-1}d T\log NT})$ regret bound (Theorem~\ref{thm:regret_bound}) and an $O(\kappa^{-1}d\log NT \log T)$ regret bound under a margin condition (Theorem~\ref{thm:fast_regret_bound}).

\subsection{A regret bound of \texttt{DDRTS-GLM}}
\label{subsec:regret_bound}
We provide the following assumptions.

\begin{assumption}[Boundedness]
There exists $S^{*}>0$ such that $\norm{\Context it}_{2} \le 1$ and $\norm{\beta^{*}}_{2}\le S^{*}$ for all $i\in[N]$ and $t\in[T]$.
The value of $S^{*}$ is possibly unknown.
\label{assump:boundedness}
\end{assumption}

\begin{assumption}[Bounded rewards]
There exists $B>0$ such that $\abs{\Reward t}\le B$ for all $t\in[T]$ almost surely.
\label{assump:bounded_rewards}
\end{assumption}

\begin{assumption}[Mean function]
Define a set of vicinity of all possible $\beta^{*}$ by $\mathcal{B}_{r}^{*}:=\{\beta:\|\beta\|_{2}\le r+S^{*}\}$.
Then there exists $r>0$ such that the mean function $\mu$ is twice continuously differentiable on $\{x^{T}\beta:\|x\|_{2}\le1, \beta\in\mathcal{B}_{r}^{*} \}$, and $\kappa:=\inf_{\|x\|_{2}\le 1,\beta\in\mathcal{B}_{r}^{*}}\mu^{\prime}\left(x^{T}\beta\right)>0$.
This implies that $|\mu\prime|\le L_{1}$ and $|\mu^{\prime\prime}|\le L_{2}$ on the bounded set $\{x^{T}\beta:\|x\|_{2}\le1, \beta\in\mathcal{B}_{r}^{*} \}$ for some $L_{1},L_{2} \in (0,\infty)$. 
\label{assump:mean_function}
\end{assumption}

\begin{assumption}[Independently identically distributed contexts]
Define the set of all contexts at round $t\in[T]$ by $\Setofcontexts{t}:=\{\Context{1}{t},\ldots,\Context{N}{t}\}$.
Then the stochastic contexts, $\Setofcontexts{1},\ldots,\Setofcontexts{T}$ are independently generated from a fixed distribution $\mathcal{P}_{X}$.
At each round $t$, the contexts $\Context 1t,\ldots,\Context Nt$ can be correlated with each other. 
\label{assump:iid_contexts}
\end{assumption}

\begin{assumption}[Positive definiteness of the covariance of the contexts]
There exists a positive constant $\phi>0$ such that $\Mineigen{\Expectation[N^{-1}\sum_{i=1}^{N}\Context it\Context it^{T}]}\ge\phi$ for all $t$. 
\label{assump:minimum_eigenvalue}
\end{assumption}

Assumptions \ref{assump:boundedness}-\ref{assump:mean_function} are standard in the GLB literature (see e.g. \citet{filippi2010parametric,jun2017scalable,selfcon2021}) except that Assumption~\ref{assump:boundedness} does not require the knowledge of $S^{*}$.
Assumptions~\ref{assump:iid_contexts} and~\ref{assump:minimum_eigenvalue} were used by \citet{li2017provably} and \citet{junimproved2021} which achieved regret bounds that have only $\sqrt{d}$.
Under these assumptions we present the regret bound of {\tt DDRTS-GLM} in the following theorem.
\begin{thm}
\label{thm:regret_bound}
(A regret bound of {\tt DDRTS-GLM}) Suppose Assumptions 1-5 hold.
For any $\gamma \in [1/(N+1),1/N)$ and $\delta\in(0,1)$ set $v=(\kappa/L_1)\{2\log (N/(1-\gamma N))\}^{-1/2}$ and $M_t=\log(t^2/\delta) / \log(1/(1-\gamma))$ in Algorithm~\ref{alg:DRTS}.
Then with probability at least $1-8\delta$, the regret bound of {\tt DDRTS-GLM} is bounded by
\begin{equation}
\begin{split}
R(T) \le & L_{1}S^{*}\Exploration+O\left(\phi^{-3}\kappa^{-3}\log^{2}T\right)\\
& +(16L_1+32L_1^2)\sqrt{\frac{6T}{\kappa\phi}\log\frac{2NT}{\delta}}.
\end{split}
\label{eq:regret_bound_phi}
\end{equation}
\end{thm}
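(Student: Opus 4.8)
The plan is to split the horizon at the warm-up time $\Exploration$ and treat the two phases separately. For the initial rounds $t<\Exploration$ I would bound the per-round regret crudely: since $\mu$ is $L_1$-Lipschitz on the relevant set (Assumption~\ref{assump:mean_function}) and $\norm{\Context it}_2\le 1$, $\norm{\beta^*}_2\le S^*$ (Assumption~\ref{assump:boundedness}), each round contributes at most $O(L_1 S^*)$, which accumulates to the leading $L_1 S^*\Exploration$ term. The substance of the argument lies in the main phase $t\ge\Exploration$, where the imputation guarantee~\eqref{eq:imputation_estimator_condition} is in force and hence the DDR prediction-error bound of Lemma~\ref{lem:prediction_error} becomes available.

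For the main phase I would adopt the standard Thompson-sampling regret decomposition into a prediction-error part and an optimism part,
\[
\Regret t = \bigl[\mu(\Context{\Optimalarm t}{t}^T\beta^*)-\Sampledreward{\Action t}{t}\bigr]+\bigl[\Sampledreward{\Action t}{t}-\mu(\Context{\Action t}{t}^T\beta^*)\bigr],
\]
where $\Sampledreward{\Action t}{t}=\mu(\Context{\Action t}{t}^T\BetaSampled{\Action t}{t})$. The second bracket is controlled by Lipschitzness: $|\Sampledreward{\Action t}{t}-\mu(\Context{\Action t}{t}^T\beta^*)|\le L_1|\Context{\Action t}{t}^T(\BetaSampled{\Action t}{t}-\beta^*)|$, which I split into the sampling deviation $|\Context{\Action t}{t}^T(\BetaSampled{\Action t}{t}-\Estimator{t-1})|$, bounded by $v\norm{\Context{\Action t}{t}}_{V_{t-1}^{-1}}$ times a Gaussian tail, and the estimation error $|\Context{\Action t}{t}^T(\Estimator{t-1}-\beta^*)|$, bounded directly by Lemma~\ref{lem:prediction_error}.

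For the optimism bracket I would invoke Gaussian anti-concentration: with the prescribed $v=(\kappa/L_1)\{2\log(N/(1-\gamma N))\}^{-1/2}$, the sampled reward at the optimal arm exceeds $\mu(\Context{\Optimalarm t}{t}^T\beta^*)$ with at least a fixed probability, so that the candidate $m_t$ is optimistic and the first bracket is non-positive on that event. The residual contribution on the complementary event is reduced, via the usual saturated/unsaturated-arm argument, to a multiple of $\norm{\Context{\Action t}{t}}_{V_{t-1}^{-1}}$. The resampling loop, which terminates within $M_t=\log(t^2/\delta)/\log(1/(1-\gamma))$ steps except on an event of probability $\delta/t^2$, together with the threshold $\gamma$ guarantees $\SelectionP it\ge\gamma\gtrsim 1/N$, keeping the inverse-propensity weights in the pseudo-reward~\eqref{eq:pseudo_reward} bounded throughout.

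Finally I would sum over $t\ge\Exploration$. Using the high-probability lower bound $\Mineigen{V_{t-1}}=\Omega(\kappa\phi N t)$ on the DDR Gram matrix (obtained by matrix concentration of $N^{-1}\sum_i\Context i\tau\Context i\tau^T$ around its expectation under Assumptions~\ref{assump:iid_contexts}--\ref{assump:minimum_eigenvalue}, with the plug-in weights bounded below by $\kappa$), the prediction-error contributions sum to the main $O(\sqrt{\kappa^{-1}\phi^{-1}T\log(NT/\delta)})$ term --- here the $\sqrt{N}$ inflation of the pseudo-reward variance from the inverse propensities cancels the $N$ in the Gram-matrix growth, so no factor of $N$ survives, and the $L_1,L_1^2$ factors arise from the first-order Lipschitz step and from converting between the Euclidean and $V_t$-weighted geometries. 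The $\sum_t\norm{\Context{\Action t}{t}}_{V_{t-1}^{-1}}$ terms of the optimism part are handled by the new maximal elliptical potential lemma (Lemma~\ref{lem:elliptical_potential_lemma}), while the lower-order cost of controlling the deviation of $V_t$ from its expectation yields the $O(\phi^{-3}\kappa^{-3}\log^2 T)$ term; a union bound over the Gram-matrix, estimator-concentration, Gaussian-tail and resampling-termination events across all rounds accounts for the total failure probability $8\delta$. I expect the optimism step to be the main obstacle: the Gram matrix $V_t$ is formed with plug-in weights $\mu^{\prime}(\Context i\tau^T\Estimator{t-1})$ rather than $\mu^{\prime}(\Context i\tau^T\beta^*)$, so reconciling it with the anti-concentration argument while ensuring the summed $\norm{\Context{\Action t}{t}}_{V_{t-1}^{-1}}$ terms do not reintroduce a $\sqrt{d}$ factor --- exactly what Lemma~\ref{lem:elliptical_potential_lemma} is designed to prevent --- is the delicate part.
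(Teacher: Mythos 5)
Your overall architecture (split at $\Exploration$, crude $L_1S^*$ bound on the warm-up phase, Lemma~\ref{lem:prediction_error} plus Lemma~\ref{lem:elliptical_potential_lemma} for the main phase, union bound for the $8\delta$) matches the paper, but the core of your main-phase argument --- the classical optimism decomposition --- does not go through with the prescribed exploration parameter, and this is precisely the point the paper's construction is designed to avoid. You claim that with $v=(\kappa/L_1)\{2\log(N/(1-\gamma N))\}^{-1/2}$ the sampled reward at the optimal arm exceeds $\mu(\Context{\Optimalarm t}{t}^{T}\beta^{*})$ with a fixed probability. Quantitatively this is false: you need the Gaussian perturbation $\Context{\Optimalarm t}{t}^{T}(\BetaSampled{\Optimalarm t}{t}-\Estimator{t-1})$, whose standard deviation is $v\norm{\Context{\Optimalarm t}{t}}_{V_{t-1}^{-1}}$, to overcome the estimation error $\Context{\Optimalarm t}{t}^{T}(\beta^{*}-\Estimator{t-1})$, which by Lemma~\ref{lem:prediction_error} is of order $\sqrt{N\log(NT/\delta)}\norm{\Context{\Optimalarm t}{t}}_{W_{t-1}^{-1}}$. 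Since $\norm{\cdot}_{V_{t-1}^{-1}}$ and $\norm{\cdot}_{W_{t-1}^{-1}}$ agree up to $\sqrt{L_1/\kappa}$ factors while $v=O(1)$, the required deviation is $\Theta(\sqrt{N\log(NT/\delta)})$ standard deviations, so the optimism probability is $\exp(-\Omega(N\log NT))$, not a constant, and the $1/p$ inflation in the saturated/unsaturated accounting blows up. Making your route work would force $v\propto\sqrt{N\log NT}$ (or $\sqrt{d}$), reintroducing exactly the factor the theorem removes.

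The paper circumvents this by never requiring optimism against the true optimal reward. It defines the super-unsaturated set $S_t$ in~\eqref{eq:super_unsaturated_arms} with a threshold that already contains the two prediction errors $\Prederror{i}{t}+\Prederror{\Optimalarm t}{t}$, so that in Lemma~\ref{lem:superunsaturated_arms} the Gaussian sample only has to clear the residual term $\sqrt{\kappa L_1}\,(\norm{\Context it}_{W_{t-1}^{-1}}^{2}+\norm{\Context{\Optimalarm t}{t}}_{W_{t-1}^{-1}}^{2})^{1/2}$, which is matched in scale to $v$; this yields $\Probability(\Action t\in S_t)\ge 1-\gamma$ per draw, amplified by resampling. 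Membership $\Action t\in S_t$ then gives the deterministic bound~\eqref{eq:regret_decompostion} directly in terms of $\Prederror{i}{t}$ and $W_{t-1}^{-1}$-norms, with the sampled $\BetaSampled{i}{t}$ playing no further role in the regret decomposition. Two smaller corrections: the decomposition is carried out in the $W_{t-1}$ geometry (true weights $\mu^{\prime}(\Context i{\tau}^{T}\beta^{*})$, all arms), with the plug-in matrix $\widehat{W}_{t-1}$ appearing only inside Lemma~\ref{lem:superunsaturated_arms} via $W_t\preceq(L_1/\kappa)\widehat{W}_t$; and the $O(\phi^{-3}\kappa^{-3}\log^{2}T)$ term comes from summing the second-order Taylor remainders $\frac{D_{\mu,B,\lambda,S}}{\phi^{3}\kappa^{3}(t-1)}\log\frac{4T}{\delta}$ in the prediction-error bound over $t$, not from the deviation of the Gram matrix from its expectation.
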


The term $\Exploration$ represents the number of rounds required for the imputation estimator to satisfy~\eqref{eq:imputation_estimator_condition}.
Since the order of $\Exploration$ is $O(\log T)$ with respect to $T$, the first term in~\eqref{eq:regret_bound_phi} is not the main order term.
For $\phi$ in the second and third terms, Lemma~\ref{lem:phi_d_condition} identifies the cases of $\phi^{-1}=O(d)$. 

\begin{lem}
\label{lem:phi_d_condition}
For $i\in[N]$, let $p_i$ be the density for the marginal distribution of $X_i \in \Real^{d}$.
Suppose that $0 < p_{\min} < p_i(x)$ for all $i\in[N]$ and $x$ such that $\|x\|_2 \le 1$.
Then we have
\[
\Mineigen{\Expectation\left[N^{-1}\sum_{i=1}^{N}X_{i}X_{i}^{T}\right]}\ge  \frac{p_{\min}\text{vol}\left(\mathcal{B}_{d}\right)}{\left(d+2\right)},
\]
where $\mathcal{B}_d$ represents the $l_2$-unit ball in $\Real^{d}$.
\end{lem}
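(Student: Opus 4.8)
The plan is to reduce the eigenvalue bound to a variational problem and then to a single deterministic integral over the unit ball $\mathcal{B}_d$. Writing $M:=\Expectation[N^{-1}\sum_{i=1}^{N}X_iX_i^{T}]$, I note that $M$ is symmetric and positive semidefinite, so its smallest eigenvalue admits the Rayleigh-quotient characterization $\Mineigen{M}=\min_{\norm u_2=1}u^{T}Mu$. Expanding the quadratic form gives $u^{T}Mu=N^{-1}\sum_{i=1}^{N}\Expectation[(u^{T}X_i)^{2}]$, so it suffices to lower bound $\Expectation[(u^{T}X_i)^{2}]$ uniformly over unit vectors $u$ and over $i\in[N]$; averaging over $i$ and minimizing over $u$ then transfers the bound to $\Mineigen{M}$.

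First I would discard the probability mass outside the unit ball and invoke the density lower bound: since $(u^{T}x)^{2}\ge 0$ and $p_i(x)>p_{\min}$ whenever $\norm x_2\le 1$,
\[
\Expectation[(u^{T}X_i)^{2}]=\int_{\Real^{d}}(u^{T}x)^{2}p_i(x)\,dx\ge \int_{\mathcal{B}_d}(u^{T}x)^{2}p_i(x)\,dx\ge p_{\min}\int_{\mathcal{B}_d}(u^{T}x)^{2}\,dx.
\]
This isolates a purely geometric quantity $\int_{\mathcal{B}_d}(u^{T}x)^{2}\,dx$ that no longer depends on $i$ or on the context distribution.

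The key observation is that, by the rotational symmetry of the ball, this integral is independent of the chosen unit vector $u$. Taking $u=e_1$ and using that $\int_{\mathcal{B}_d}x_j^{2}\,dx$ is identical for every coordinate $j$, I would write $\int_{\mathcal{B}_d}x_1^{2}\,dx=d^{-1}\int_{\mathcal{B}_d}\norm x_2^{2}\,dx$. Passing to polar coordinates and recalling that the surface area of the unit sphere equals $d\cdot\text{vol}(\mathcal{B}_d)$ gives $\int_{\mathcal{B}_d}\norm x_2^{2}\,dx=d\cdot\text{vol}(\mathcal{B}_d)\int_0^{1}r^{d+1}\,dr=d\,\text{vol}(\mathcal{B}_d)/(d+2)$, whence $\int_{\mathcal{B}_d}(u^{T}x)^{2}\,dx=\text{vol}(\mathcal{B}_d)/(d+2)$. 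Combining with the previous display yields $\Expectation[(u^{T}X_i)^{2}]\ge p_{\min}\,\text{vol}(\mathcal{B}_d)/(d+2)$ for all $i$ and all unit $u$, and the Rayleigh-quotient characterization delivers the claim.

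There is no serious obstacle here: the argument is elementary once the variational reduction is made. The only point requiring mild care is the \emph{uniformity} of the pointwise bound in both $u$ and $i$, which is precisely what allows the estimate to survive the minimization over $u$ and the average over $i$ without loss; this is automatic, since the final lower bound $p_{\min}\,\text{vol}(\mathcal{B}_d)/(d+2)$ is a constant free of both indices. The rotational-symmetry step is what renders the geometric integral tractable, as it replaces an a priori $u$-dependent integral by a fixed multiple of $\int_{\mathcal{B}_d}\norm x_2^{2}\,dx$.
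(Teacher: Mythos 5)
Your proof is correct and follows essentially the same route as the paper: lower bound the density by $p_{\min}$ on the unit ball and reduce to the deterministic integral $\int_{\mathcal{B}_d}xx^{T}\,dx=\frac{\text{vol}(\mathcal{B}_d)}{d+2}I_d$, which the paper simply cites from Lemma 2 of \citet{bastani2021mostly} while you compute it explicitly via rotational symmetry and polar coordinates. Your Rayleigh-quotient phrasing is just the scalarized form of the paper's matrix inequality, and your computed value of the integral matches.
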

\begin{rem}
When $p_i$ is the uniform density then $\phi^{-1}=d+2$.
For the truncated multivariate normal distribution with mean $0_d$ and covariance $\Sigma$, $\phi^{-1}=\left(d+2\right)\exp\left(\frac{\Mineigen{\Sigma}^{-1}-\Maxeigen{\Sigma}^{-1}}{2}\right)$.
\end{rem}

When there is a lower bound for the marginal density of $X_i$, the main order of the regret bound is $O(\sqrt{\kappa^{-1}dT \log NT})$.
The best known regret bound for GLBs is the $O(\kappa^{-1}\sqrt{d T \log N} \log T)$ regret bound of {\tt SupCB-GLM}, and our bound is improved by $\kappa^{-1/2} \log T$.
To our knowledge, our bound is the best among previously proven bounds for GLB algorithms.  
Furthermore, this is the first regret bound sublinear in $d$ among \texttt{LinTS} variants.
For logistic bandits, our regret bound is comparable with the bound of \texttt{SupLogistic} in terms of $d$.
Even though our bound has extra $\kappa^{-1/2}$, \texttt{SupLogistic} has extra $\log T$. 
If $\log T>\kappa^{-1/2}$, the proposed method has a tighter bound than \texttt{SupLogistic}.

In the case of linear payoffs, our regret bound is $O(\sqrt{d T \log N T})$, which is tighter than that of previously known linear contextual bandit algorithms.
Although the lower bound $\Omega(\sqrt{dT\log N \log T})$ obtained by \citet{li2019nearly} is larger than our upper bound, this is not a contradiction because the lower bound does not apply to our setting because of Assumptions~\ref{assump:iid_contexts} and~\ref{assump:minimum_eigenvalue}.

\subsection{Key derivations for the regret bound}
In this subsection, we show how the improvement of the regret bound is possible.
For each $i$ and $t$, let $\Diff it := \mu(\Context{\Optimalarm t}{t}^T\beta^{*}) - \mu(\Context{i}{t}^T\beta^{*})$, and 
$\Prederror{i}{t}:=|\mu(\Context{i}{t}^T{\Estimator{t-1}})-\mu(\Context{i}{t}^T\beta^{*})|$.
Denote the weighted Gram matrix by $W_{t}:=\sum_{\tau=1}^{t}\sum_{i=1}^{N} \mu^{\prime}(\Context{i}{\tau}^T\beta^{*}) \Context{i}{\tau}\Context{i}{\tau}^{T}+\lambda I$.
We define a set of super-unsaturated arms at round $t$ as
\begin{equation}
\begin{split}
S_{t}:=\bigg\{ i\in[N]: & \Diff it \le \Prederror{i}t+\Prederror{\Optimalarm t}t \\
&+\sqrt{\kappa L_{1}}\sqrt{\norm{\Context{i}t}_{W_{t-1}^{-1}}^{2}+\norm{\Context{\Optimalarm t}t}_{W_{t-1}^{-1}}^{2}} \bigg\}.
\end{split}
\label{eq:super_unsaturated_arms}
\end{equation}
The following lemma shows that $a_t$ is in $S_t$ with well-controlled $\pi_{a_t,t}$ with high probability.
\begin{lem}
\label{lem:superunsaturated_arms}
Suppose Assumptions~\ref{assump:boundedness} and~\ref{assump:mean_function} hold.
Let $S_t$ be the super-unsaturated arms defined in~\eqref{eq:super_unsaturated_arms}.
For any $\gamma \in [1/(N+1),1/N)$ and $\delta\in(0,1)$ set $v=(\kappa/L_1)\{2\log (N/(1-\gamma N))\}^{-1/2}$ and $M_t=\log(t^2/\delta) / \log(1/(1-\gamma))$ in Algorithm~\ref{alg:DRTS}.
Then the action $\Action{t}$ selected by \texttt{DDRTS-GLM} satisfies
\begin{equation}
\Probability\left(\bigcap_{t=\Exploration}^{T}\left\{ \Action t\in S_{t}\right\} \cap\bigcap_{t=1}^{T}\left\{ \SelectionP{\Action t}t>\gamma\right\} \right)\ge1-\delta.
\label{eq:action_in_superunsaturated_arms}
\end{equation}
\end{lem}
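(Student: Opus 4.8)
The plan is to recognise that the single event $\{\Tildepi{\Action t}t>\gamma\}$ drives \emph{both} conclusions, and then to show it holds at every round with the stated probability. Accordingly I would prove two facts: (a) a \emph{deterministic} implication that, for $t\ge\Exploration$, any arm $i$ with $\Tildepi it>\gamma$ is super-unsaturated, $i\in S_t$; and (b) that the resampling loop produces $\Tildepi{\Action t}t>\gamma$ for all $t$ with probability at least $1-\delta$, while $\Tildepi{\Action t}t>\gamma$ forces $\SelectionP{\Action t}t>\gamma$. Since the event in~\eqref{eq:action_in_superunsaturated_arms} then contains $\bigcap_{t=1}^{T}\{\Tildepi{\Action t}t>\gamma\}$, combining (a) and (b) finishes the proof.

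For (a) I would first use that $\mu=b'$ is nondecreasing ($\mu'=b''\ge0$) and strictly increasing on the relevant region, so the candidate satisfies $m_t=\argmax_i\mu(\Context it^{T}\BetaSampled it)=\argmax_i\Context it^{T}\BetaSampled it$ and hence $\Tildepi it\le\CP{\Context it^{T}\BetaSampled it\ge\Context{\Optimalarm t}t^{T}\BetaSampled{\Optimalarm t}t}{\History t}$. Because $\BetaSampled it$ and $\BetaSampled{\Optimalarm t}t$ are independent $\mathcal{N}(\Estimator{t-1},v^2V_{t-1}^{-1})$ vectors, this conditional probability is $\Phi$ of a ratio, so $\Tildepi it>\gamma$ yields $\Context{\Optimalarm t}t^{T}\Estimator{t-1}-\Context it^{T}\Estimator{t-1}\le v\,\abs{\Phi^{-1}(\gamma)}\sqrt{\norm{\Context it}_{V_{t-1}^{-1}}^{2}+\norm{\Context{\Optimalarm t}t}_{V_{t-1}^{-1}}^{2}}$, with $\abs{\Phi^{-1}(\gamma)}\le\sqrt{2\log(1/\gamma)}\le\sqrt{2\log(N+1)}$ since $\gamma\ge1/(N+1)$. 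I would then convert the left side into $\Diff it$ via the $L_1$-Lipschitz property of $\mu$ together with $\Prederror it$ and $\Prederror{\Optimalarm t}t$, and convert $V_{t-1}^{-1}$ into $W_{t-1}^{-1}$ via the spectral comparison $V_{t-1}\succeq(\kappa/L_1)W_{t-1}$. Inserting $v=(\kappa/L_1)\{2\log(N/(1-\gamma N))\}^{-1/2}$ and using $\log(N/(1-\gamma N))\ge\log(N+1)$ (again from $\gamma\ge1/(N+1)$) collapses the product $L_1\,v\,\abs{\Phi^{-1}(\gamma)}\sqrt{L_1/\kappa}$ to exactly $\sqrt{\kappa L_1}$, which reproduces the defining inequality of $S_t$ in~\eqref{eq:super_unsaturated_arms}. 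The spectral comparison holds only on $\{t\ge\Exploration\}$: there $\Estimator{t-2}\in\mathcal{B}_r^{*}$, so $\mu'(\Context i\tau^{T}\Estimator{t-2})\ge\kappa$ by Assumption~\ref{assump:mean_function} while $\mu'(\Context i\tau^{T}\beta^{*})\le L_1$, giving $V_{t-1}\succeq(\kappa/L_1)W_{t-1}$; this is exactly why the first intersection in~\eqref{eq:action_in_superunsaturated_arms} starts at $\Exploration$.

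For (b) I would observe that $\sum_i\Tildepi it=1$ and $\gamma<1/N$ force $G_t:=\{i:\Tildepi it>\gamma\}\neq\emptyset$, so each independent resample returns a candidate in $G_t$ with probability $\sum_{i\in G_t}\Tildepi it>\gamma$. Thus the loop fails to output an arm of $G_t$ with probability at most $(1-\gamma)^{M_t}$, and with $M_t=\log(t^2/\delta)/\log(1/(1-\gamma))$ this equals $\delta/t^2$; summing over $t$ bounds the total failure probability by $\delta$ (up to adjusting the numerical constant inside $M_t$). On the complementary event, $\Tildepi{\Action t}t>\gamma$ at every round. Finally $\SelectionP{\Action t}t\ge\Tildepi{\Action t}t$ whenever $\Tildepi{\Action t}t>\gamma$, because ``$\Action t$ is the argmax already on the first draw'' is one way the algorithm outputs $\Action t$ and has probability $\Tildepi{\Action t}t$; hence $\SelectionP{\Action t}t>\gamma$ for all $t$, and by (a) also $\Action t\in S_t$ for $t\ge\Exploration$.

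The main obstacle is the constant-tracking inside (a): one must simultaneously control $\abs{\Phi^{-1}(\gamma)}$, the spectral comparison $V_{t-1}\succeq(\kappa/L_1)W_{t-1}$, and the Lipschitz transfer between the linear scale $\Context it^{T}\Estimator{t-1}$ and the mean scale $\mu(\Context it^{T}\Estimator{t-1})$ so that, after inserting the prescribed $v$, the residual coefficient is \emph{exactly} $\sqrt{\kappa L_1}$ and not something larger. A secondary technical point is justifying $\Estimator{t-2}\in\mathcal{B}_r^{*}$ for $t\ge\Exploration$, which is where~\eqref{eq:imputation_estimator_condition} and the estimation-error control for $\Estimator{}$ enter.
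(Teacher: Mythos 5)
Your proposal is correct and matches the paper's proof in all essentials: the same decomposition into (i) $\{i:\tilde{\pi}_{i,t}>\gamma\}\subseteq S_t$ for $t\ge\mathcal{T}_*$ via a Gaussian comparison of the sampled rewards against the optimal arm together with the spectral bound $W_{t-1}\preceq (L_1/\kappa)\widehat{W}_{t-1}$, (ii) the resampling bound $(1-\gamma)^{M_t}=\delta/t^2$ summed over $t$, and (iii) $\pi_{a_t,t}\ge\tilde{\pi}_{a_t,t}>\gamma$. The only cosmetic difference is that you run step (i) as a per-arm contrapositive through $\Phi^{-1}(\gamma)$ (obtaining a factor $\le\sqrt{\kappa L_1}$ rather than ``exactly'' that constant, which suffices), whereas the paper lower-bounds $\mathbb{P}(\tilde{a}_t\in S_t\mid\mathcal{H}_t)$ by a product of Gaussian tail bounds over $j\notin S_t$ and then deduces $\tilde{\pi}_{j,t}\le\gamma$; both reduce to the same inequality and the same choice of $v$, and your remark about needing the estimator to lie in $\mathcal{B}_r^*$ for the spectral comparison is a fair point the paper leaves implicit.
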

\begin{rem}
The second event in~\eqref{eq:action_in_superunsaturated_arms} helps bound $\Indicator{\Action{t}=i}/\SelectionP{i}{t}$ in the pseudo-reward~\eqref{eq:pseudo_reward}.
\end{rem}

If $\Action{t}$ is in $S_t$, the instantaneous regret is bounded by
\begin{equation}
\begin{split}
\Regret t &= \Diff{\Action{t}}{t} \\
&\le 2\max_{i\in[N]}\left\{ \Prederror it + \sqrt{\kappa L_{1}}\norm{\Context it}_{W_{t-1}^{-1}}\right\}.
\end{split}
\label{eq:regret_decompostion}
\end{equation}
\citet{kim2021doubly} adopted a similar approach but had a different super-unsaturated set, resulting in a different bound for $\Regret t$.
For comparison, in the case of $\mu(x)=x$, \citet{kim2021doubly} proposed DR estimator $\widehat{\beta}_{t}^{DR}$ to derive
\begin{align*}
\Regret{t} \le & 2\norm{\widehat{\beta}_{t-1}^{DR}-\beta^{*}}_2 \\
&+\sqrt{\norm{\Context{\Action{t}}{t}}_{W_{t-1}^{-1}}^2+\norm{\Context{\Optimalarm{t}}{t}}_{W_{t-1}^{-1}}^2},
\end{align*}
and bounded the $l_2$ estimation error by $O(\phi^{-1} t^{-1/2})$ resulting in an $O(d \sqrt{T})$ regret bound in cases when $\phi^{-1}=O(d)$.
This bound has an additional $\sqrt{d}$ compared to~\eqref{eq:regret_decompostion} because $\|\widehat{\beta}^{DR}_{t-1}-\beta^{*}\|_2$ is greater than $\max_{i\in[N]}\Prederror{i}{t}$ because of  Cauchy-Schwartz inequality and using a less accurate imputation estimators.
To obtain faster rates on $d$, we propose the DDR estimator and directly bound the $\Regret t$ with $\Prederror{i}{t}$ as in~\eqref{eq:regret_decompostion} without using Cauchy-Schwartz inequality.
In the following lemma, we show a bound for the prediction error $\Prederror{i}{t}$.
\begin{lem}
\label{lem:prediction_error}
(Prediction error bound for DDR estimator)
Suppose Assumptions 1-5 hold and the event in~\eqref{eq:action_in_superunsaturated_arms} holds.
Then for each $t >\Exploration$, with probability at least $1-8\delta/T$
\begin{equation}
\begin{split}
\Prederror it \le & \mu^{\prime}\left(\Context it^{T}\beta^{*}\right)\left(2+4L_1\right)\sqrt{3N\log\frac{2NT}{\delta}}\norm{\Context it}_{W_{t-1}^{-1}} \\ 
& +\frac{D_{\mu,B,\lambda,S}}{\phi^{3}\kappa^{3}\left(t-1\right)}\log\frac{4T}{\delta}.
\end{split}
\label{eq:prediction_error_bound}
\end{equation}
for all $i\in[N]$, where $D_{\mu,B,\lambda,S}$ is a constant defined in Section~\ref{subsec:proof_of_pred_error}.
\end{lem}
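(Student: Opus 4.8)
The plan is to bound $\Prederror{i}{t}$ by controlling the weighted deviation $\abs{\Context{i}{t}^{T}(\Estimator{t-1}-\beta^{*})}$ and then transferring it through the mean function. First I would apply the mean value theorem to write $\Prederror{i}{t}=\abs{\mu'(\Context{i}{t}^{T}\bar\beta_{i})}\,\abs{\Context{i}{t}^{T}(\Estimator{t-1}-\beta^{*})}$ for some $\bar\beta_{i}$ on the segment between $\Estimator{t-1}$ and $\beta^{*}$. Since Assumption~\ref{assump:mean_function} gives $\abs{\mu''}\le L_{2}$, the gap $\abs{\mu'(\Context{i}{t}^{T}\bar\beta_{i})-\mu'(\Context{i}{t}^{T}\beta^{*})}$ is at most $L_{2}\norm{\Estimator{t-1}-\beta^{*}}_{2}$; multiplied by the already-small weighted deviation this is a product of two estimation errors that I collect into the lower-order remainder, leaving the leading coefficient $\mu'(\Context{i}{t}^{T}\beta^{*})$ exactly as in~\eqref{eq:prediction_error_bound}.

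Next I would linearize the estimating equation $U_{t-1}(\Estimator{t-1})=0$ from~\eqref{eq:score_equation}. Using $\nabla_{\beta}U_{t-1}(\beta)=-\sum_{\tau,i}\mu'(\Context{i}{\tau}^{T}\beta)\Context{i}{\tau}\Context{i}{\tau}^{T}-\lambda I$, a Taylor expansion of $U_{t-1}$ about $\beta^{*}$ yields $\Estimator{t-1}-\beta^{*}=\bar W_{t-1}^{-1}U_{t-1}(\beta^{*})$, where $\bar W_{t-1}$ is the weighted Gram matrix at an intermediate point. On the good event I replace $\bar W_{t-1}$ by $W_{t-1}$ at the cost of a lower-order term, again via $\abs{\mu''}\le L_{2}$ together with matrix concentration for the i.i.d.\ contexts (Assumption~\ref{assump:iid_contexts}); the latter guarantees $W_{t-1}\succeq\Omega(t\phi\kappa)I$ and is what drives the $\phi^{-3}\kappa^{-3}(t-1)^{-1}$ scaling of the remainder. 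I then split $U_{t-1}(\beta^{*})=I_{1}+I_{2}-\lambda\beta^{*}$, where the noise term $I_{2}=\sum_{\tau}\SelectionP{\Action{\tau}}{\tau}^{-1}\{\Reward{\tau}-\mu(\Context{\Action{\tau}}{\tau}^{T}\beta^{*})\}\Context{\Action{\tau}}{\tau}$ is a martingale, since $\CE{\Reward{\tau}}{\History{\tau},\Action{\tau}}=\mu(\Context{\Action{\tau}}{\tau}^{T}\beta^{*})$, and the imputation term is $I_{1}=\sum_{\tau,i}\{1-\Indicator{\Action{\tau}=i}/\SelectionP{i}{\tau}\}\{\mu(\Context{i}{\tau}^{T}\Impute{t-1})-\mu(\Context{i}{\tau}^{T}\beta^{*})\}\Context{i}{\tau}$.

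For $I_{2}$ I would bound $\Context{i}{t}^{T}W_{t-1}^{-1}I_{2}\le\norm{\Context{i}{t}}_{W_{t-1}^{-1}}\norm{I_{2}}_{W_{t-1}^{-1}}$ and apply a self-normalized martingale inequality. The predictable quadratic variation of $I_{2}$ is $\sum_{\tau,i}\mu'(\Context{i}{\tau}^{T}\beta^{*})\SelectionP{i}{\tau}^{-1}\Context{i}{\tau}\Context{i}{\tau}^{T}$, and the second event of~\eqref{eq:action_in_superunsaturated_arms} in Lemma~\ref{lem:superunsaturated_arms} controls the inverse selection probabilities by $\gamma^{-1}=O(N)$, so this variation is dominated by $(N+1)W_{t-1}$; this is precisely the source of the $\sqrt{N\log(2NT/\delta)}$ factor and the leading coefficient $2$, while Assumption~\ref{assump:bounded_rewards} supplies the sub-Gaussian control. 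For $I_{1}$ I would invoke the deterministic imputation-error bound~\eqref{eq:imputation_estimator_condition}, which for $t-1\ge\Exploration$ gives $\norm{\Impute{t-1}-\beta^{*}}_{2}\le\sqrt{\kappa/(Nd)}$ and hence $\abs{\mu(\Context{i}{\tau}^{T}\Impute{t-1})-\mu(\Context{i}{\tau}^{T}\beta^{*})}\le L_{1}\sqrt{\kappa/(Nd)}$; combined with the mean-zero factor $1-\Indicator{\Action{\tau}=i}/\SelectionP{i}{\tau}$ this yields the $4L_{1}$ contribution, so that $I_{1}$ and $I_{2}$ together produce the coefficient $(2+4L_{1})$. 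Collecting the regularization term $\lambda\beta^{*}$, the Gram-matrix and mean-value remainders, and the covering residuals below into $D_{\mu,B,\lambda,S}\phi^{-3}\kappa^{-3}(t-1)^{-1}\log(4T/\delta)$ and taking a union bound over $i\in[N]$ would give~\eqref{eq:prediction_error_bound} with probability at least $1-8\delta/T$.

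The main obstacle is that the imputation estimator $\Impute{t-1}$ appearing in every summand of $I_{1}$ is computed from the \emph{entire} history up to round $t-1$, so the pseudo-rewards $\DRreward{i}{\tau}{\Impute{t-1}}$ are not martingale differences with respect to the natural filtration and the mean-zero cancellation of $1-\Indicator{\Action{\tau}=i}/\SelectionP{i}{\tau}$ cannot be invoked term by term. This is exactly where the argument must depart from that of \citet{kim2021doubly}. I would resolve it by treating $I_{1}$ as a supremum over the deterministic ball $\{\beta:\norm{\beta-\beta^{*}}_{2}\le\sqrt{\kappa/(Nd)}\}$ in which~\eqref{eq:imputation_estimator_condition} confines $\Impute{t-1}$, and proving the concentration bound uniformly over this ball by a covering argument so that it transfers to the data-dependent $\Impute{t-1}$. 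The crucial point is that the smallness of the radius $\sqrt{\kappa/(Nd)}$ keeps the uniform bound at the same order as the leading term rather than inflating it, which is precisely the gain the DDR imputation step buys over a generic doubly-robust imputation.
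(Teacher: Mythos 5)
Your overall architecture matches the paper's: Taylor-expand $\mu$ around $\beta^{*}$, linearize $U_{t-1}(\Estimator{t-1})=0$ to reduce to $\Context{j}{t}^{T}W_{t-1}^{-1}U_{t-1}(\beta^{*})$ plus second-order remainders of size $O(\phi^{-3}\kappa^{-3}(t-1)^{-1})$, split $U_{t-1}(\beta^{*})$ into the imputation part and the reward-noise part, and handle the data-dependence of $\Impute{t-1}$ by a covering argument over the ball of radius $\sqrt{\kappa/(Nd)}$, whose $d^{-1/2}$ radius cancels the $\sqrt{d}$ from the $(NT)^{d}$ covering number. That last observation is exactly the paper's resolution of the $I_{1}$ obstacle, and you identified it correctly.

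However, your treatment of the noise term $I_{2}$ has a genuine gap that defeats the purpose of the lemma. You first apply Cauchy--Schwarz, $\abs{\Context{i}{t}^{T}W_{t-1}^{-1}I_{2}}\le\norm{\Context{i}{t}}_{W_{t-1}^{-1}}\norm{I_{2}}_{W_{t-1}^{-1}}$, and then propose a self-normalized martingale bound for $\norm{I_{2}}_{W_{t-1}^{-1}}$. But no self-normalized inequality gives $\norm{I_{2}}_{W_{t-1}^{-1}}^{2}=O(N\log(NT/\delta))$: when the quadratic variation of a $d$-dimensional martingale is comparable to $W_{t-1}$, the self-normalized norm $\norm{I_{2}}_{W_{t-1}^{-1}}^{2}$ concentrates at the scale of $\log\det(W_{t-1})=\Theta(d\log t)$ (in the Gaussian toy case it is a $\chi^{2}_{d}$), so this route reinserts exactly the $\sqrt{d}$ factor that Lemma~\ref{lem:prediction_error} is designed to remove --- the paper explicitly attributes the extra $\sqrt{d}$ in \citet{kim2021doubly} to this Cauchy--Schwarz step. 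The correct argument fixes $j$ and treats the \emph{scalar} process $\sum_{\tau}\Error{\tau}\SelectionP{\Action{\tau}}{\tau}^{-1}\Context{j}{t}^{T}W_{t-1}^{-1}\Context{\Action{\tau}}{\tau}$ as a martingale with respect to the filtration $\Filtration{\tau}=\History{\tau}\cup\{\Setofcontexts{1},\ldots,\Setofcontexts{t}\}$ augmented by all contexts up front; this is legitimate only because $W_{t-1}$ is built from the contexts of \emph{all} arms and is therefore $\Filtration{0}$-measurable and independent of the actions (the paper's key identity~\eqref{eq:equal_to_zero_part}), a structural point your write-up never invokes. Its predictable variation is then bounded by $\gamma^{-1}\norm{\Context{j}{t}}_{W_{t-1}^{-1}}^{2}$ via~\eqref{eq:SS_to_norm} using the second event of~\eqref{eq:action_in_superunsaturated_arms}, Freedman's inequality (Lemma~\ref{lem:freedman_inequalty}) gives the factor $2\sqrt{2N\log(2NT/\delta)}$, and the union over $j\in[N]$ costs only $\log N$. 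The same scalar (rather than vector-norm) treatment is needed for the covered imputation term to obtain the $4L_{1}\sqrt{3N\log(2NT/\delta)}$ coefficient.
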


\begin{proof}
For each $t\in(\Exploration,T]$, and $i\in[N]$,
\begin{align*}
\Prederror it \le & \mu^{\prime} \left(\Context it^{T}\beta^{*}\right) \abs{\sum_{\tau=1}^{t-1}\sum_{j=1}^{N}\DRError j\tau{\Impute{t-1}}\Context it^{T}W_{t-1}^{-1}\Context j\tau}\\
&+\frac{O\left(\phi^{-3}\kappa^{-3}\log T\right)}{t-1},
\end{align*}
where $\DRError{j}{\tau}{\beta}:=\DRreward{j}{\tau}{\beta}-\mu(\Context{j}{\tau}^T\beta^{*})$ is the residual for pseudo-rewards.
Now for each $\tau\in[t]$, define the filtration as $\Filtration{\tau}=\History{\tau} \cup \{\Setofcontexts{1}, \ldots, \Setofcontexts{t}\}$ and $\Filtration{0}:=\{\Setofcontexts{1}, \ldots, \Setofcontexts{t}\}$.
Then the random variable $\DRError j{\tau}{\beta}\Context jt^{T}W_{t-1}^{-1}\Context j{\tau}$ is $\mathcal{F}_{\tau+1}$-measurable and
\begin{equation}
\begin{split}
\Expectation\left[\left.\DRError j\tau{\beta}\Context it^{T}W_{t-1}^{-1}\Context j\tau\right|\Filtration \tau\right]
&=\CE{\DRError j\tau{\beta}}{\Filtration \tau}\Context it^{T}W_{t-1}^{-1}\Context j\tau\\
&=\CE{\DRError j\tau{\beta}}{\History \tau}\Context it^{T}W_{t-1}^{-1}\Context j\tau\\
&=0.
\end{split}
\label{eq:equal_to_zero_part}
\end{equation}
The first equality holds since $\Context{i}{t}^{T}W_{t-1}\Context{j}{\tau}$ depends only on $\mathcal{F}_{0}$.
The second equality holds due to the independence between $\DRError{i}{\tau}{\beta}$ and $\{\Setofcontexts{u}\}_{u=\tau+1}^{t}$ induced by Assumption~\ref{assump:iid_contexts}.
The remainder of the proof follows from the Azuma-Hoeffding inequality. 
For details, see Section~\ref{subsec:proof_of_pred_error}.
\end{proof}


We highlight that the key part of the proof is in \eqref{eq:equal_to_zero_part} which holds because the Gram matrix $W_{t-1}$ contains all contexts.
Let $A_t:=\sum_{\tau=1}^{t}\mu^{\prime}(\Context{\Action{\tau}}{\tau}^{T}\beta^{*})\Context{\Action{\tau}}{\tau}\Context{\Action{\tau}}{\tau}^{T}+I_{d}$ be the Gram matrix consists of the selected contexts only and $\Error{\tau}:=\Reward{\tau}-\mu(\Context{\Action{\tau}}{\tau}^T\beta^{*})$ be the error.  
In general,
\begin{align*}
\CE{\Error{\tau}\Context it^{T}A_{t-1}^{-1}\Context{\Action{\tau}}{\tau}}{\History{\tau}}
\!\neq\!\CE{\Error{\tau}}{\History{\tau}}\Context it^{T}A_{t-1}^{-1}\Context{\Action{\tau}}{\tau},
\end{align*}
due to the dependency between $A_{t-1}$ and $\Error{\tau}$ through the actions $\{\Action{\tau}, \Action{\tau+1}, \ldots, \Action{t-1}\}$.
To avoid this dependency, \citet{li2017provably} and \citet{junimproved2021} used the approach of \citet{auer2002using} by devising $\Action{\tau}$ to be independent of $\Error{\tau}$. 
Based on this independence, they invoked the equality analogous to the first equality in~\eqref{eq:equal_to_zero_part}, achieving an $\tilde{O}(\sqrt{dT})$ regret bound.
However, the crafted independence negatively affects inefficiency by ignoring dependent samples during estimation.
In contrast, we achieve independence using all contexts, generating the Gram matrix free of $\Action{1}, \ldots, \Action{T}$.

From~\eqref{eq:regret_decompostion} and~\eqref{eq:prediction_error_bound}, the $\Regret{t}$ is bounded by,
\begin{equation}
\begin{split}
    &\Regret t \\
    & \le  2\left\{ \left(2+4L_1\right)w_{i,t}\sqrt{3N\log\frac{2NT}{\delta}}\!\!+\!\sqrt{\kappa L_{1}}\right\} \max_{i\in[N]}s_{i,t} \\
    &\;\; + \frac{D_{\mu,B,\lambda,S}}{\phi^{3}\kappa^{3}\left(t-1\right)}\log\frac{4T}{\delta}
    \\
    &\le4\left(2+4L_1\right)\sqrt{3L_{1}N\log\frac{2NT}{\delta}}\max_{i\in[N]}\sqrt{w_{i,t}}s_{i,t} \\
    &\;\;+ \frac{D_{\mu,B,\lambda,S}}{\phi^{3}\kappa^{3}\left(t-1\right)}\log\frac{4T}{\delta},
\end{split}
\label{eq:regret_decompostion_norms}
\end{equation}
where $w_{i,t}:=\mu^{\prime}(\Context{i}{t}^T\beta^*)$, and $s_{i,t}\!:=\|\Context{i}{t}\|_{W_{t-1}^{-1}}$.
Now we need a bound for the weighted sum of $s_{i,t}$ over $t\in(\Exploration,T]$.
In this point, many existing regret analyses have used a version of the \textit{elliptical potential lemma}, i.e., Lemma 11 in \citet{abbasi2011improved}, yielding an $O(\sqrt{dT \log T})$ bound for $\sum_{t=1}^{T} \|\Context{\Action{t}}{t}\|_{A_{t-1}^{-1}}$.  
This bound cannot be applied to our case because we have different Gram matrix composed of the contexts of all arms. 
Therefore we develop the following lemma to prove a bound for the weighted sum.

\begin{lem}
(Maximal elliptical potential lemma)
Suppose Assumptions 1-5 hold.
Set $w_{i,t}:=\mu^{\prime}(\Context{i}{t}^T\beta^{*})$, and $s_{i,t}:=\|\Context{i}{t}\|_{W_{t-1}^{-1}}$.
Then with probability at least $1-\delta$,
\begin{equation}
    \sum_{t=\mathcal{\Exploration}}^{T}\max_{i\in[N]}\sqrt{w_{i,t}}s_{i,t}\le\sqrt{\frac{8L_{1}T}{\phi N\kappa}}.
    \label{eq:elliptical_bound_phi}
\end{equation}
\label{lem:elliptical_potential_lemma}
\end{lem}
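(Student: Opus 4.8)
The plan is to decouple the stochastic and deterministic parts of the sum: bound each summand by $\sqrt{L_1/\Mineigen{W_{t-1}}}$, then show that $\Mineigen{W_{t-1}}$ grows linearly in $t$ so that the sum collapses to a harmonic-type series of order $\sqrt{T}$. First I would use the boundedness of the contexts and of the mean derivative. Since $\norm{\Context it}_{2}\le1$ by Assumption~\ref{assump:boundedness} and $w_{i,t}=\mu'(\Context it^{T}\beta^{*})\le L_1$ by Assumption~\ref{assump:mean_function},
\[
\max_{i\in[N]}w_{i,t}s_{i,t}^{2}=\max_{i\in[N]}w_{i,t}\Context it^{T}W_{t-1}^{-1}\Context it\le\frac{L_1}{\Mineigen{W_{t-1}}},
\]
so that $\max_{i\in[N]}\sqrt{w_{i,t}}\,s_{i,t}\le\sqrt{L_1/\Mineigen{W_{t-1}}}$. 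This reduces the problem to a uniform linear lower bound on $\Mineigen{W_{t-1}}$.

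For that lower bound I would write $W_{t-1}=\sum_{\tau=1}^{t-1}M_{\tau}+\lambda I$ with $M_{\tau}:=\sum_{i=1}^{N}w_{i,\tau}\Context i\tau\Context i\tau^{T}$, and exploit that $M_\tau$ is built from \emph{all} arms' contexts. By Assumption~\ref{assump:iid_contexts} the context sets $\Setofcontexts{\tau}$ are i.i.d.\ across $\tau$, and since each $w_{i,\tau}=\mu'(\Context i\tau^{T}\beta^{*})$ is a deterministic function of $\Context i\tau$, the matrices $M_\tau$ are i.i.d., positive semidefinite, and satisfy $\norm{M_\tau}\le NL_1$. Using $w_{i,\tau}\ge\kappa$ together with Assumption~\ref{assump:minimum_eigenvalue} gives $\Expectation[M_\tau]\succeq\kappa\,\Expectation[\sum_{i=1}^{N}\Context i\tau\Context i\tau^{T}]\succeq\kappa N\phi I$, hence $\Mineigen{\sum_{\tau=1}^{t-1}\Expectation[M_\tau]}\ge(t-1)\kappa N\phi$. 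A lower-tail matrix Chernoff inequality with deviation $\epsilon=1/2$ then yields $\Mineigen{W_{t-1}}\ge\tfrac12(t-1)\kappa N\phi$ except with probability $d\exp(-c(t-1)\kappa\phi/L_1)$ for an absolute constant $c$; taking a union bound over $\Exploration\le t\le T$ and noting that $\Exploration=\Omega(\kappa^{-3}\phi^{-2}Nd^2\log T)$ dwarfs the $O(L_1\kappa^{-1}\phi^{-1}\log(dT/\delta))$ burn-in needed to make the tail summable, this event holds simultaneously for all such $t$ with probability at least $1-\delta$.

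On this event, $\max_{i\in[N]}\sqrt{w_{i,t}}\,s_{i,t}\le\sqrt{2L_1/((t-1)\kappa N\phi)}$, and summing with $\sum_{t=\Exploration}^{T}(t-1)^{-1/2}\le2\sqrt{T}$ gives
\begin{align*}
\sum_{t=\Exploration}^{T}\max_{i\in[N]}\sqrt{w_{i,t}}\,s_{i,t}
&\le\sqrt{\frac{2L_{1}}{\kappa N\phi}}\sum_{t=\Exploration}^{T}\frac{1}{\sqrt{t-1}}\\
&\le\sqrt{\frac{2L_{1}}{\kappa N\phi}}\cdot2\sqrt{T}=\sqrt{\frac{8L_{1}T}{\phi N\kappa}},
\end{align*}
which is exactly~\eqref{eq:elliptical_bound_phi}; here the factor $8$ decomposes as $4$ from squaring the $2\sqrt{T}$ series bound and $2$ from the Chernoff deviation $\epsilon=1/2$. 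The hard part will be the matrix-concentration step: keeping the multiplicative slack at exactly $1/2$ (so the final constant is $8$ rather than merely $O(1)$) requires a clean application of the lower-tail bound with the correct operator-norm budget $\norm{M_\tau}\le NL_1$ and expectation floor $\kappa N\phi$, and ensuring the per-round failure probabilities sum to at most $\delta$ is precisely what forces the sum to start at $\Exploration$. Everything else is the elementary series estimate.
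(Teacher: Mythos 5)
Your proposal is correct and follows essentially the same route as the paper: bound each summand by $\sqrt{L_1/\Mineigen{W_{t-1}}}$, establish $\Mineigen{W_{t-1}}\ge\kappa N\phi(t-1)/2$ uniformly over $t\in[\Exploration,T]$ with total failure probability $\delta$, and finish with $\sum_{t}(t-1)^{-1/2}\le 2\sqrt{T}$, yielding the identical constant $8$. The only divergence is the concentration tool for the minimum eigenvalue: you invoke a lower-tail matrix Chernoff bound (incurring a dimension factor $d$ in the per-round failure probability), whereas the paper's Corollary~\ref{cor:min_eigen_chernoff} first pulls out the factor $\kappa$ and then controls $\|\sum_{\tau,i}X_{i,\tau}X_{i,\tau}^{T}-\Expectation[\cdot]\|_{F}$ via the dimension-free Hilbert-valued martingale inequality (Lemma~\ref{lem:Hibert_concentration}); both suffice since $\Exploration$ comfortably covers either burn-in requirement.
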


Now the regret bound of \texttt{DDRTS-GLM} is proved by applying~\eqref{eq:elliptical_bound_phi} to~\eqref{eq:regret_decompostion_norms}.

\begin{figure*}[t]
\centering
\begin{tabular}{cc}
\includegraphics[width=0.45\textwidth]{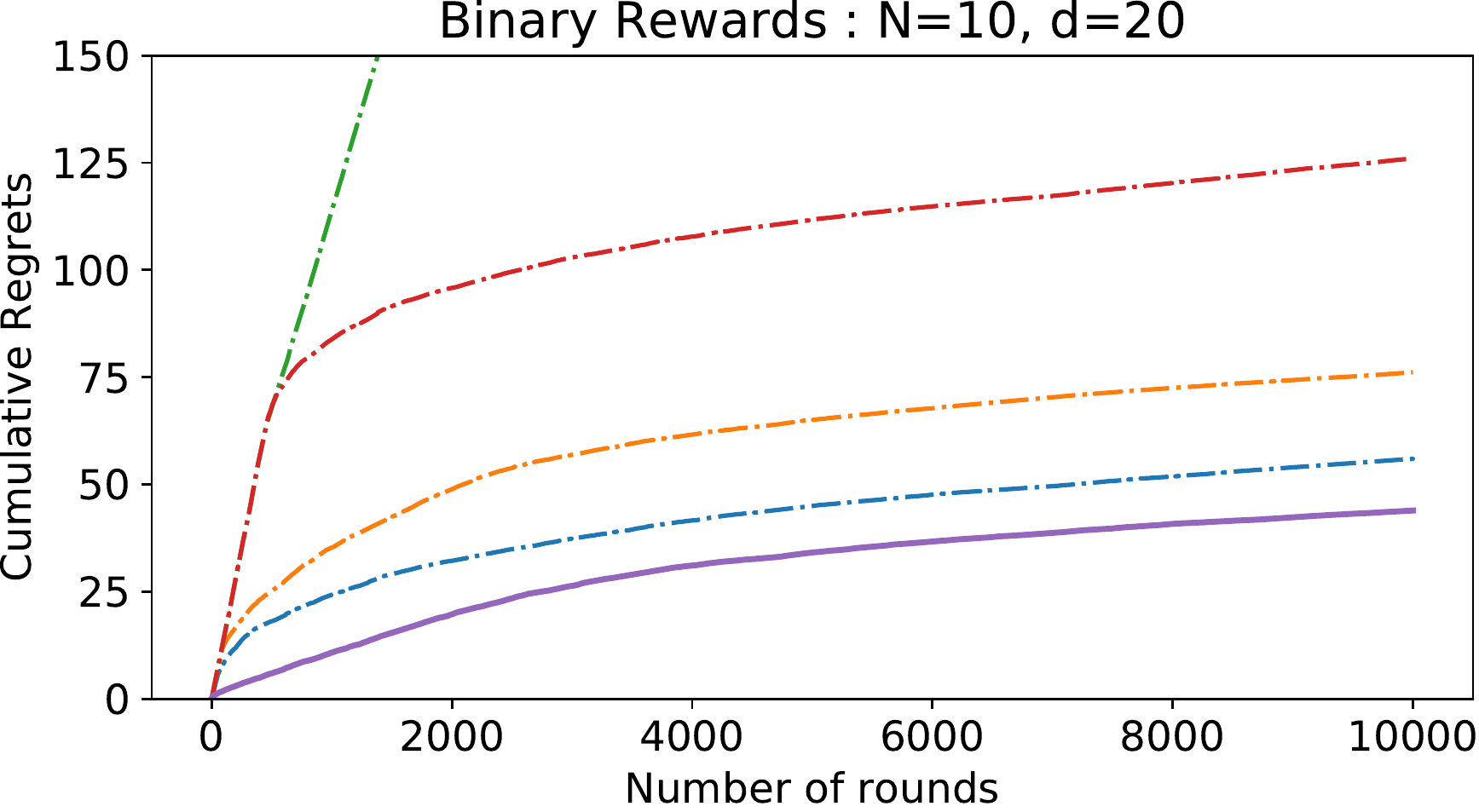}& \includegraphics[width=0.45\textwidth]{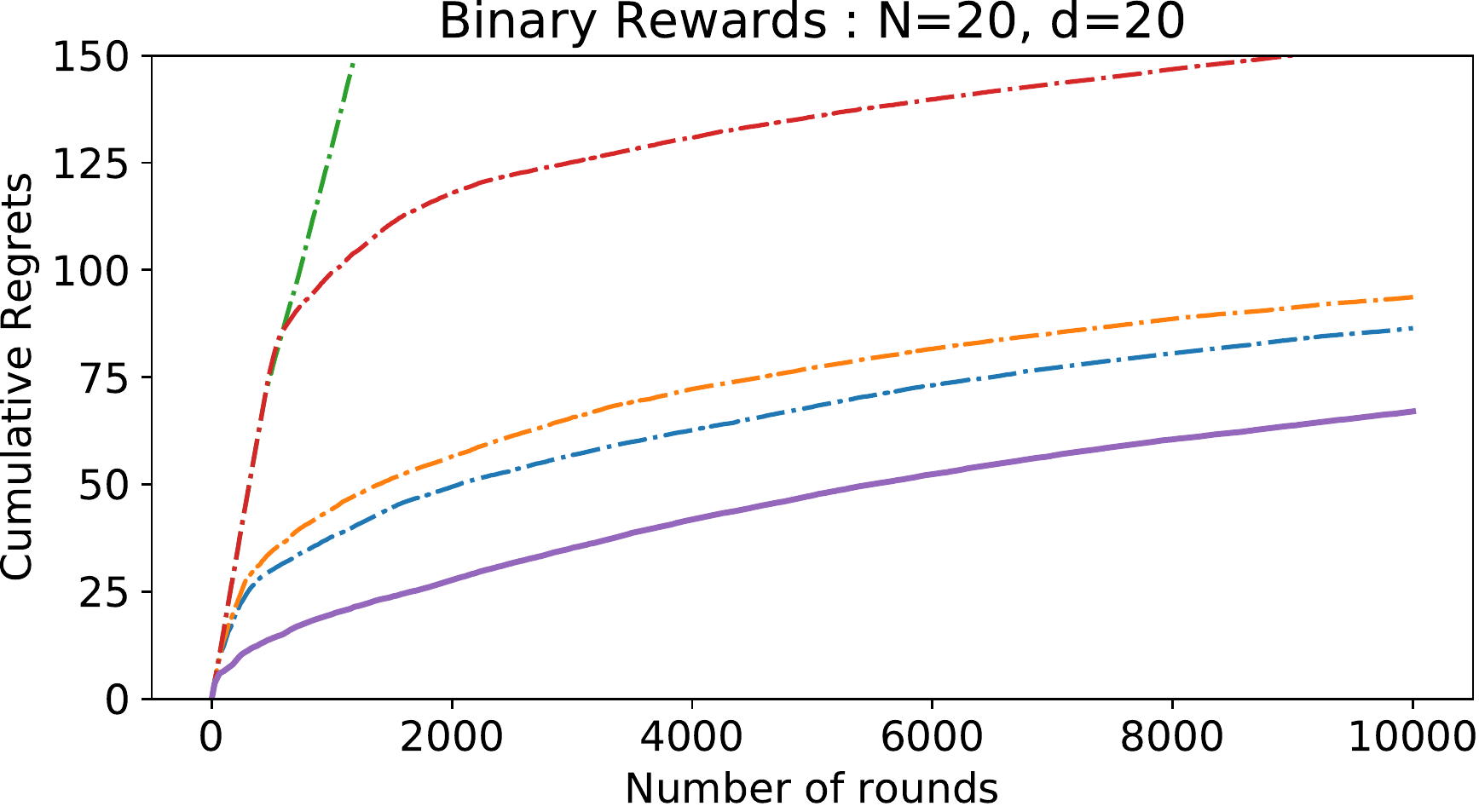}\\ \includegraphics[width=0.45\textwidth]{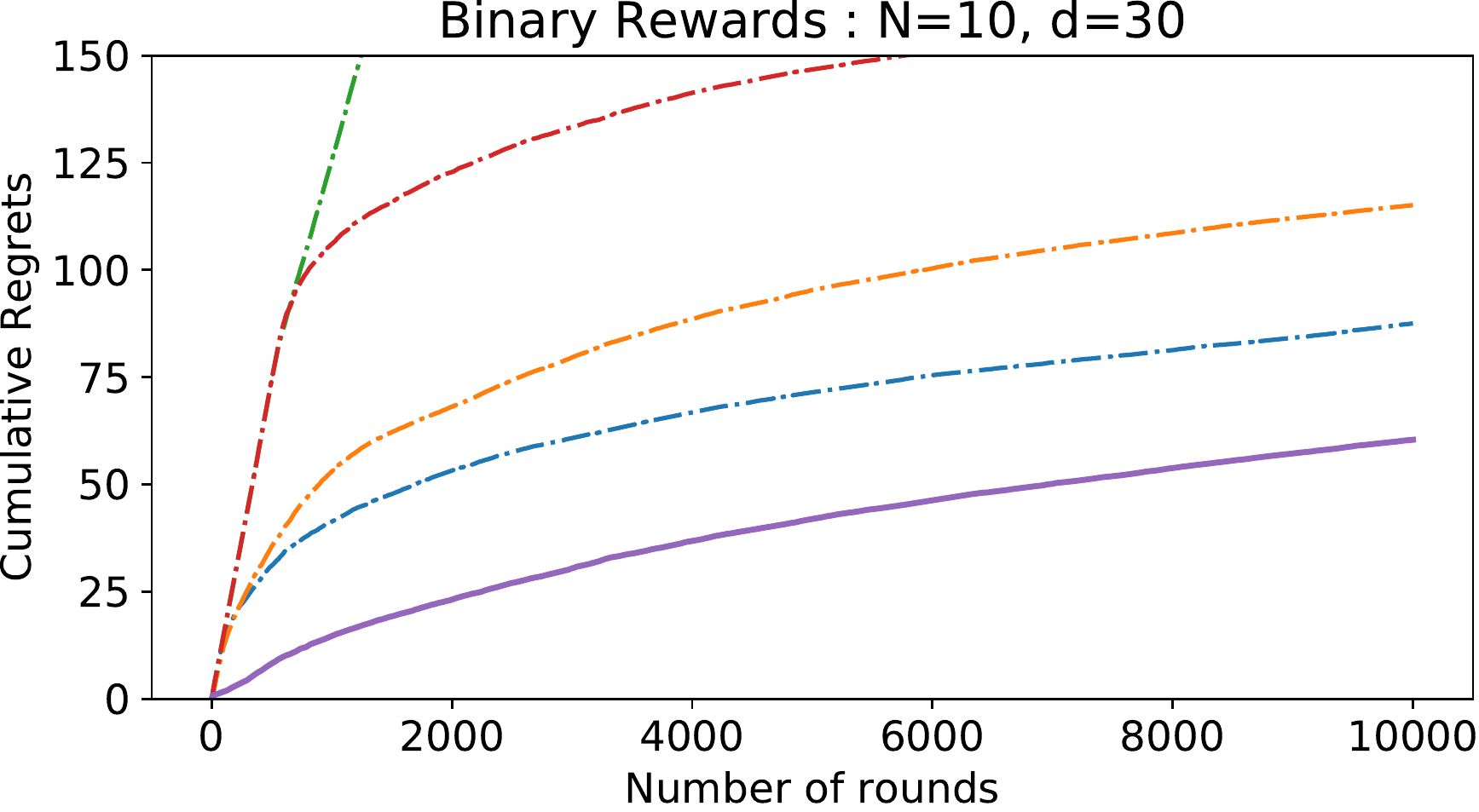}& \includegraphics[width=0.45\textwidth]{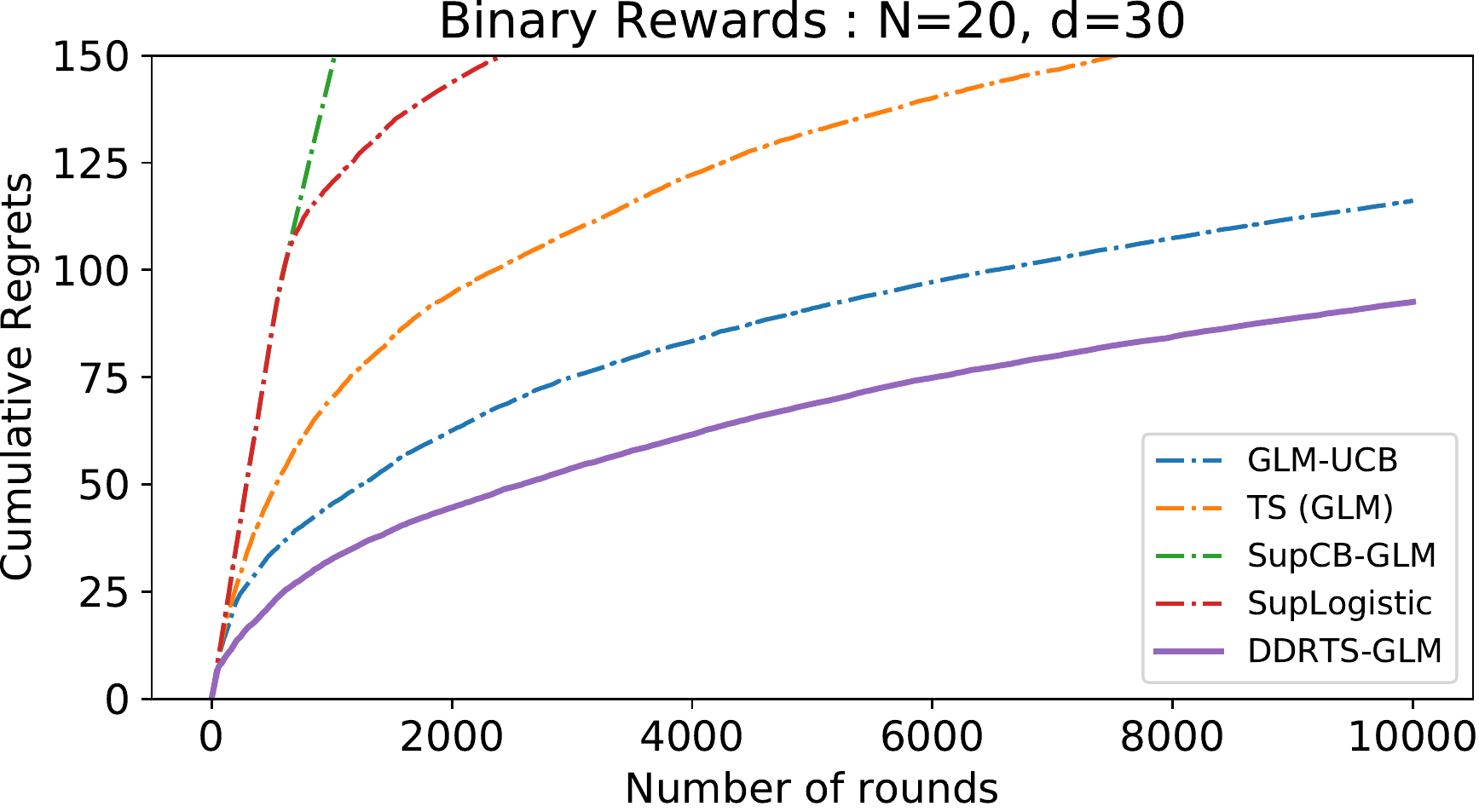}
\end{tabular}
\caption{Comparison of the average cumulative regret on synthetic dataset over 5 repeated runs with $T=10000$.}
\label{fig_sim}
\end{figure*}


\subsection{A logarithmic cumulative regret bound under a margin condition}
\label{subsec:fast_regret_bound}
In this subsection, we present an $O(\kappa^{-1}d \log NT \log T)$ regret bound for \texttt{DDRTS-GLM} under the margin condition stated as follows.

\begin{assumption}[Margin condition]
\label{assum:margin_condition}
For all $t$, there exist \textit{unknown} $\rho_{0}>0$ and $h\ge0$ such that
\begin{equation}
    \Probability\left(\mu(\Context{\Optimalarm{t}}t^{T}\beta^{*})\le\max_{j\neq \Optimalarm{t}}\mu(\Context jt^{T}\beta^{*})+\rho\right)\le h\rho,
    \label{eq:margin_condition}
\end{equation}
for all $\rho\in(0,\rho_{0}]$.
\end{assumption}

This margin condition guarantees a probabilistic positive gap between the expected rewards of the optimal arm and the other arms.
In the margin condition, $h>0$ represents how heavy the tail probability is for the margin.
\citet{bastani2020online} and \citet{bastani2021mostly} adopted this assumption and proved $O(\log T)$ regret bounds with contextual bandit problems when contexts are the same for all arms and coefficients are arm-specific.
When coefficients are the same for all arms and contexts are arm-specific, \citet{dani2008stochastic,abbasi2011improved} and \citet{selfcon2021} used a {\it deterministic} margin condition, which is a special case of Assumption~\ref{assum:margin_condition} when $h=0$.
Now we show that {\tt DDRTS-GLM} has a logarithmic cumulative regret bound under the margin condition.

\begin{thm}
\label{thm:fast_regret_bound} 
Suppose Assumptions 1-6 hold.
Then with probability at least $1-10\delta$, the cumulative regret of \texttt{DDRTS-GLM} is bounded by 
\begin{equation}
\begin{split}
R(T)\le & 2L_{1}S^{*}\mathcal{T}_{0}+O\left(\phi^{-2}\kappa^{-2}\log NT\right)     \\
& +\frac{192hL_{1}^{2}(2+4L_1)^{2}}{\kappa\phi}\log T\log\frac{2NT}{\delta}
\end{split}
\label{eq:fast_regret_bound}
\end{equation}
for $\mathcal{T}_{0}=\rho_{0}^{-2}\Omega\left(\phi^{-4}\kappa^{-4}\log NT\right)$.
\end{thm}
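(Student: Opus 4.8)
\emph{The plan} is to turn the $O(1/\sqrt{t})$ per-round prediction-error bound of Lemma~\ref{lem:prediction_error} into a logarithmic cumulative regret by using the margin condition to show that a round can be charged positive regret only when the true gap between the best and second-best arm is itself as small as the prediction error. I would first split $R(T)=\sum_{t\le\MarginExp}\Regret{t}+\sum_{t>\MarginExp}\Regret{t}$. For $t\le\MarginExp$, the mean value theorem together with $|\Context{i}{t}^{T}\beta^{*}|\le S^{*}$ and $|\mu'|\le L_1$ gives $\Regret{t}\le 2L_1S^{*}$, so the first block contributes at most $2L_1S^{*}\MarginExp$, the leading term of~\eqref{eq:fast_regret_bound}. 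For $t>\MarginExp$ I would work on the events of Lemmas~\ref{lem:superunsaturated_arms} and~\ref{lem:prediction_error}, on which $\Action{t}\in S_t$ and the prediction error is controlled, so that~\eqref{eq:regret_decompostion_norms} yields $\Regret{t}\le g_t$ with $g_t:=4(2+4L_1)\sqrt{3L_1 N\log\frac{2NT}{\delta}}\max_{i}\sqrt{w_{i,t}}s_{i,t}+\tfrac{D_{\mu,B,\lambda,S}}{\phi^{3}\kappa^{3}(t-1)}\log\frac{4T}{\delta}$.

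The key observation is a gap-to-error comparison. Writing $\Delta_t^{(2)}:=\mu(\Context{\Optimalarm{t}}{t}^{T}\beta^{*})-\max_{j\neq\Optimalarm{t}}\mu(\Context{j}{t}^{T}\beta^{*})$ for the second-best gap, note that on the super-unsaturated event $\Action{t}\neq\Optimalarm{t}$ forces $\mu(\Context{\Action{t}}{t}^{T}\beta^{*})\le\max_{j\neq\Optimalarm{t}}\mu(\Context{j}{t}^{T}\beta^{*})$, hence $\Delta_t^{(2)}\le\Diff{\Action{t}}{t}=\Regret{t}\le g_t$. Consequently $\Regret{t}\le g_t\,\Indicator{\Delta_t^{(2)}\le g_t}$: a round incurs regret only when the gap falls below the current prediction error.

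\emph{The hard part} is that the threshold $g_t$ is itself random through the round-$t$ quantities $s_{i,t}=\|\Context{i}{t}\|_{W_{t-1}^{-1}}$ and $w_{i,t}$, so the margin condition~\eqref{eq:margin_condition}, which bounds $\Probability(\Delta_t^{(2)}\le\rho)$ only for a deterministic level $\rho$, cannot be applied to $g_t$ directly. I would resolve this by passing to a predictable surrogate: using $s_{i,t}^{2}\le\Mineigen{W_{t-1}}^{-1}$, $w_{i,t}\le L_1$, and the Gram-matrix lower bound $\Mineigen{W_{t-1}}\ge\tfrac{1}{2}\kappa\phi N(t-1)$ that underlies Lemma~\ref{lem:elliptical_potential_lemma} (valid for $t>\MarginExp$ with probability $1-\delta$), I obtain a $\mathcal{G}_{t-1}$-measurable bound $\bar g_t\ge g_t$ of order $\sqrt{\log(2NT/\delta)/(\kappa\phi t)}$, where $\mathcal{G}_{t-1}$ is the $\sigma$-field generated by all contexts, actions and rewards strictly before round $t$ (so it excludes $\Setofcontexts{t}$). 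The choice $\MarginExp=\rho_0^{-2}\Omega(\phi^{-4}\kappa^{-4}\log NT)$ is precisely what guarantees $\bar g_t\le\rho_0$ for $t>\MarginExp$. Because $\Delta_t^{(2)}$ is a function of $\Setofcontexts{t}$ only, which is independent of $\mathcal{G}_{t-1}$ by Assumption~\ref{assump:iid_contexts}, I may condition on $\mathcal{G}_{t-1}$ and apply the margin condition at the fixed level $\bar g_t$, giving $\CE{\Regret{t}}{\mathcal{G}_{t-1}}\le\bar g_t\,\CP{\Delta_t^{(2)}\le\bar g_t}{\mathcal{G}_{t-1}}\le h\bar g_t^{2}$.

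Finally I would sum and concentrate. Since $\bar g_t^{2}$ is of order $L_1^{2}(2+4L_1)^{2}\log(2NT/\delta)/(\kappa\phi(t-1))$ and $\sum_{t=\MarginExp}^{T}(t-1)^{-1}\le\log T$, the predictable sum $\sum_{t>\MarginExp}\CE{\Regret{t}}{\mathcal{G}_{t-1}}$ matches the third term $\frac{192hL_1^{2}(2+4L_1)^{2}}{\kappa\phi}\log T\log\frac{2NT}{\delta}$ of~\eqref{eq:fast_regret_bound}, up to the constant absorbed by the matrix-concentration and union-bound slack. To reach the stated high-probability bound I would decompose $\sum_{t>\MarginExp}\Regret{t}=\sum_t\big(\Regret{t}-\CE{\Regret{t}}{\mathcal{G}_{t-1}}\big)+\sum_t\CE{\Regret{t}}{\mathcal{G}_{t-1}}$ and bound the martingale part by Freedman's inequality; its predictable quadratic variation is at most $\sum_t\CE{\Regret{t}^{2}}{\mathcal{G}_{t-1}}\le\sum_t h\bar g_t^{3}$, which is summable and therefore contributes only the lower-order $O(\phi^{-2}\kappa^{-2}\log NT)$ term. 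Adding the two new failure events — the Gram-matrix lower bound and Freedman — to the $8\delta$ budget inherited from Lemmas~\ref{lem:superunsaturated_arms} and~\ref{lem:prediction_error} accounts for the overall probability $1-10\delta$.
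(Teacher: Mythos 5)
Your proposal is correct and follows essentially the same route as the paper's proof: split at $\mathcal{T}_0$, use the super-unsaturation bound to reduce positive regret to the event that the second-best gap falls below a predictable threshold of order $\sqrt{\log(NT/\delta)/(\kappa\phi t)}$ (obtained from the Gram-matrix lower bound), apply the margin condition at that deterministic level, and control the centered indicator sum by a martingale inequality. The only cosmetic differences are that the paper phrases the small-gap event through the set $G_t$ on which $S_t=\{a_t^*\}$ rather than through $\Delta_t^{(2)}\le\Regret{t}$, and it closes with Azuma--Hoeffding on increments $Q_t$ where you use Freedman with the variance bound $h\bar g_t^{3}$; both yield the same lower-order term.
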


Since $\mathcal{T}_{0}$ has only $\log NT$, the first term is not the main order term.
In the practical cases of $\phi^{-1}\!=\!O(d)$ (see Lemma~\ref{lem:phi_d_condition}), the order of the regret bound is $O(d\kappa^{-1} \log NT \log T)$.
To our knowledge, our work is the first to derive a logarithmic cumulative regret bound for \texttt{LinTS} variants.
We defer the challenges and the intuition deriving the regret bound~\eqref{eq:fast_regret_bound} to Appendix~\ref{sec:fast_regret_bound_proof}.

\section{Experiment results}
\label{sec:experiments}

In this section, we compare the performance of the five
algorithms: (i) {\tt GLM-UCB} \citep{filippi2010parametric}, (ii) {\tt TS (GLM)}  \citep{abeille2017linear}, (iii) {\tt SupCB-GLM} \citep{li2017provably}, (iv) {\tt SupLogistic} \citep{junimproved2021}, and (v) the proposed {\tt DDRTS-GLM} using simulation data (Section~\ref{subsec:simluation_data}) and the two real datasets (Section~\ref{subsec:forest_cover} and \ref{subsec:news}).

\subsection{Simulation data}
\label{subsec:simluation_data}

To generate data, we set the number of arms as $N=10$ and $20$ and the dimension of contexts as $d=20$ and $30$.
For $j\in[d]$, the $j$-th elements of the contexts, $[X_{1,t}^{(j)}, \dots, X_{N,t}^{(j)}]$ are sampled from the normal distribution $\mathcal{N}(\mu_N, V_N)$ with mean $\mu_{10} = [-5, -4, \ldots , -1, 1, \ldots , 4, 5]^T$, and $\mu_{20} = [-10, -9, \ldots , -1, 1, \ldots , 9, 10]^T$.
The covariance matrix $V_N \in \mathbb{R}^{N \times N}$ has $V(i,i)=1$ for every $i$ and $V(i,k)=0.5$ for every $i \neq k$. 
The sampled contexts are truncated to satisfy $\norm{X_i(t)}_2 \le 1$.
For rewards, we sample $Y_t$ independently from Ber$(\mu(\Context{\Action{t}}{t}^T \beta^{*}))$, where $\mu(x):=1/(1+e^{-x})$. 
Each element of $\beta^{*}$ follows a uniform distribution, $\mathcal{U}(-1,1)$. 

As hyperparameters of the algorithms, {\tt GLM-UCB}, {\tt SupCB-GLM}, and {\tt SupLogistic}  have $\alpha$ as an exploration parameter. 
For \texttt{TS(GLM)} and the proposed method, $v$ controls the variance of $\tilde{\beta}_i(t)$. 
In each algorithm, we choose the best hyperparameter from $\{0.001,0.01,0.1,1\}$.
The proposed method requires a positive threshold $\gamma$ for resampling; however, we do not tune $\gamma$ but fix the value to be $1/(N+1)$.
Figure~\ref{fig_sim} shows the mean cumulative regret $R(T)$, and the proposed algorithm, as represented by the solid line, outperforms four other candidates in all four scenarios.

\subsection{Forest Cover Type dataset}
\label{subsec:forest_cover}

\begin{figure}
    \centering
    \includegraphics[width=0.41\textwidth]{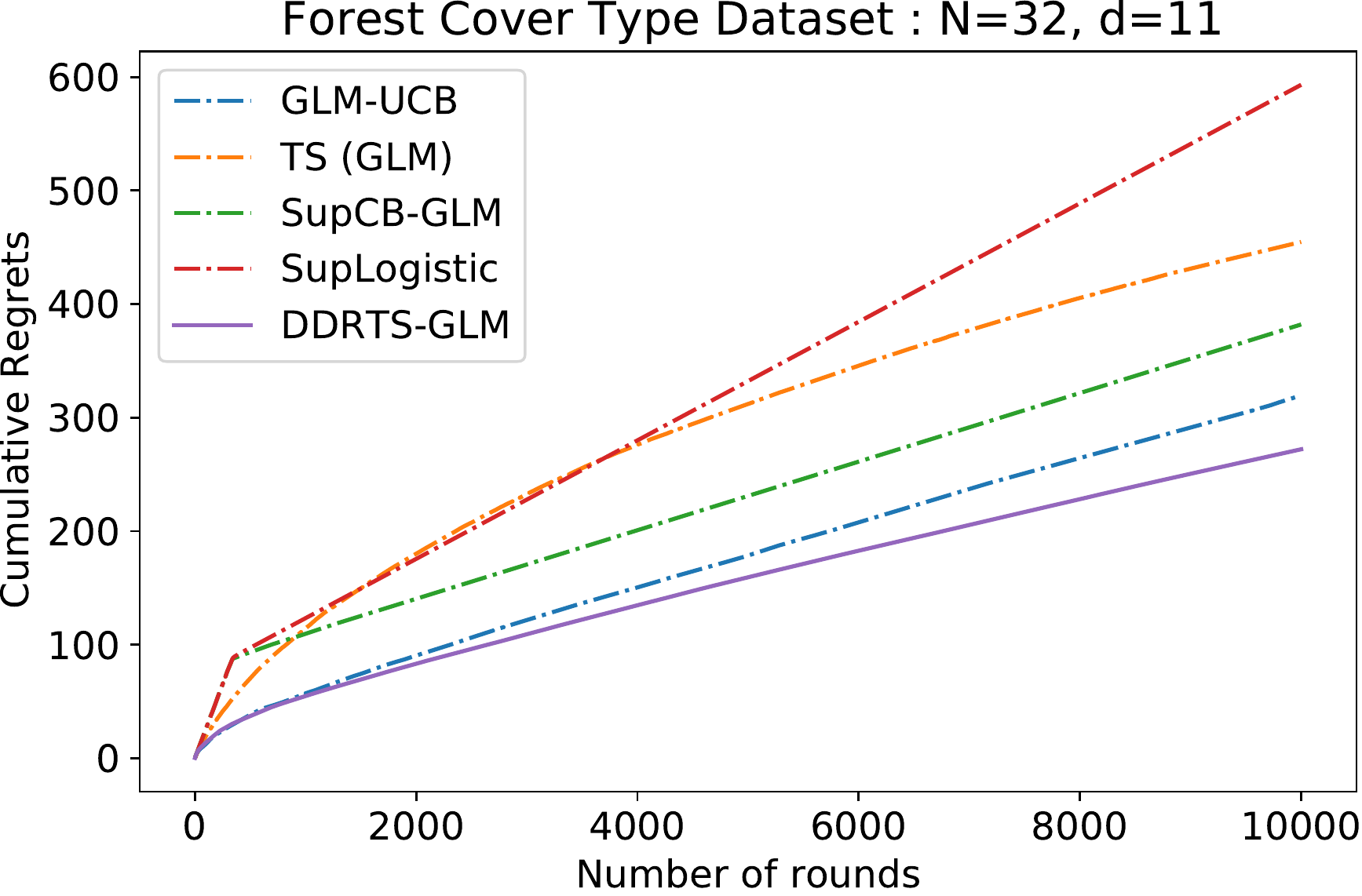}
    \caption{Comparison of the average cumulative regret on forest cover type dataset over 10 repeated runs.}
    \label{fig_forest}
\end{figure}

We use the Forest Cover Type dataset from the UCI Machine Learning repository \citep{blake1999uci}, as used by \citet{filippi2010parametric}.  
The dataset contains 581,021 observations, where the response variable is the label of the dominant species of trees of each region, and covariates include ten continuous cartographic variables characterizing features of the forest.
We divide the dataset into $32$ clusters using the k-means clustering algorithm, and the resulting clusters of the forest represent arms.  We repeat the experiment 10 times with $T=10,000$.  
Each centroid of each cluster is set to be a 10-dimensional context vector of the corresponding arm, and by introducing an intercept, we obtain $d=11$. 
In this example, context vectors remain unchanged in each round. We dichotomize the reward based on whether the forest's dominant class is Spruce/Fir. The goal is to find arms with Spruce/Fir as the dominant class. 
We execute a 32-armed 11-dimensional contextual bandit with binary rewards. 
Figure~\ref{fig_forest} shows that {\tt DDRTS-GLM} outperforms other algorithms.

\begin{table}[t]
\centering
\caption{Average/first quartile/third quartile of CTR of news articles over 10 repeated runs for each algorithm.}
\label{table_news}
\begin{tabular}{|c|c|c|c|}
\hline
CTR     & 1st Q & average & 3rd Q \\ \hline
\texttt{DDRTS-GLM} & \textbf{0.0410}       & \textbf{0.0449}     & \textbf{0.0476}     \\ \hline
\texttt{GLM-UCB}  & 0.0356       & 0.0420     & 0.0468     \\ \hline
\texttt{TS (GLM)}   & 0.0393       & 0.0438     & 0.0467     \\ \hline
uniform random   & 0.0337       & 0.0344     & 0.0351     \\ \hline
\end{tabular}
\end{table}

\subsection{Yahoo! news article recommendation log data}
\label{subsec:news}
The Yahoo! Front Page Today Module User Click Log Dataset \citep{yahooR6A} contains 45,811,883 user click logs for news articles on Yahoo! Front Page from May 1st, 2009, to May 19th, 2009.
Each log consists of a randomly chosen article from $N=20$ articles and the binary reward $\Reward{t}$ which takes the value 1 if a user clicked the article and 0 otherwise.
For each article $i\in[20]$, we have a context vector $\Context{i}{t} \in \Real^{11}$ which comprising 10 extracted features of user-article information and an intercept using a dimension reduction method, as in \citet{chu2009case}.


In log data, when the algorithm chooses an action not selected by the original logger, it cannot observe the reward, and the regret is not computable.
Instead, we use the click-through rate (CTR): the percentage of the number of clicks. 
We tune the hyperparameters using the log data from May 1st and run the algorithms on the randomly sampled $10^6$ logs in each run.
We evaluate the algorithms on the basis of the method by \citet{li2011unbiased}, counting only the rounds in which the reward is observed in $T$.
Thus, $T$ is not known a priori and {\tt SupCB-GLM} and {\tt SupLogistic} are not applicable.
As a baseline, we run a uniform random policy to observe the CTR lift of each algorithm.
Table~\ref{table_news} presents the average/first quartile/third quartile of CTR of each algorithm (the higher is the  better) over 10 repetitions, showing that {\tt DDRTS-GLM} achieves the highest CTR.

\section*{Acknowledgments}
This work is supported by the National Research Foundation of Korea (NRF) grant funded by the Korea government (MSIT, No.2020R1A2C1A01011950).

\bibliography{aaai23}

\newpage
\appendix
\onecolumn

\section{Generalized linear models}
\label{sec:GLM}
We consider a model in which the reward $Y$ is sampled from one-dimensional exponential family. 
When the reward is related to a context vector $X\in\Real^{d}$, the GLM assumes that the probability density function of $Y$ is 
\[
\CP{Y=y}{\theta} \propto \exp\left\{ \theta y-b(\theta)\right\},
\]
where $b$ represents a known function and $\theta\in\Real$ is a canonical parameter related to both the reward and the context vector. 
When $b$ is twice differentiable, we have $b^{\prime}(\theta)=\Expectation[Y]$, and $b^{\prime \prime}(\theta)=\Var[Y]$. 
In the GLM, we assume that the expected value of $Y$ is related to context $X$ based on linear predictor, $\xi:=X^{T}\beta^*$ for some unknown parameter $\beta^{*}\in\Real^{d}$.
Then $\Expectation[Y]:=\mu(\xi)$, with the inverse link function $\mu$.
When $\theta=\xi$, the link function is called canonical link.  
Under a canonical link, $b^{\prime \prime}(\theta)\!=\!\mu^{\prime}(\xi)$.
In the case of Bernoulli rewards, the inverse canonical link function is $\mu(\xi)\!=\!1/(1\!+\!\exp(-\xi))$. 
Throughout this study, we consider a canonical link case. 
An extension to non-canonical cases requires minor changes.

Let $(X_{1},Y_{1}),\ldots,(X_{t},Y_{t}$) be the independent pairs of contexts and rewards. 
Then, the log-likelihood function in a canonical link case is 
\[
    \log l(\beta)= \sum_{\tau=1}^{t}\left\{ Y_{\tau}X_{\tau}^{T}\beta-b\left(X_{\tau}^{T}\beta\right)\right\} +C\left(Y_{\tau}\right),
\]
where $C(\cdot)$ is a function not related to $\beta$.
Instead of minimizing $-\log l(\beta)$ to obtain an estimator, \citet{segerstedt1992ordinary} proposed adding $\frac{\lambda_{t}}{2}\beta^{T}\beta$ to $-\log l(\beta)$ for regularization.
The regularization term plays a significant role in ensuring that the second derivative of $-\log l(\beta)$,
\[
\sum_{\tau=1}^{t}b^{\prime \prime}\left(X_{\tau}^{T}\beta\right)X_{\tau}X_{\tau}^{T}+\lambda_{t}I,
\]
has a sufficiently large positive eigenvalue. 
Since the objective function is strictly convex, we can obtain a ridge-type estimator by taking the derivative with respect to $\beta$ and solving $\sum_{\tau=1}^{t}\left\{ Y_{\tau}X_{\tau}-\mu\left(X_{\tau}^{T}\beta\right)X_{\tau}\right\} -\lambda_{t}\beta=0.$
In practice, this equation can be solved using Newton's algorithms.

\section{Comparison of the order of regret bounds for generalized/logistic bandits}
\label{sec:full_table}
In this section, we provide a table that compares the regret bounds for generalized/logistic bandit algorithms, with respect to $\kappa$, $d$, $N$ and $T$. 

\begin{table}[h]
\begin{center}
\caption{Comparison of the main orders of the regret bounds for generalized/logistic bandit with respect to $\kappa, d,N$ and $T$.}
\begin{tabular}{|c|c|c|}
\hline
\textbf{Algorithm} & \textbf{Regret Upper Bound}    &   \textbf{Model} \\ \hline
\texttt{GLM-UCB} \citep{filippi2010parametric} & $O(\kappa^{-1} d \sqrt{T} \log^{3/2} T)$ & GLM \\ \hline
\texttt{SupCB-GLM} \citep{li2017provably} & $O(\kappa^{-1}\sqrt{d T \log NT} \log T$)  & GLM \\ \hline
\texttt{TS (GLM)} \citep{abeille2017linear} & $O(\kappa^{-1}d^{3/2}\sqrt{T}\log T)$ & GLM  \\ \hline
\texttt{GLOC} \citep{jun2017scalable} & $O(\kappa^{-1}d\sqrt{T}\log^{3/2} T)$ & GLM  \\ \hline
\texttt{GLOC-TS} \citep{jun2017scalable} & $\hat{\mathcal{O}}(\kappa^{-1}d^{3/2}\sqrt{T}\log^{3/2} T)$ & GLM  \\ \hline
\texttt{Logistic UCB-1} \citep{faury2020improved} & $O(\kappa^{-1/2}d\sqrt{T}\log T)$ & Logistic \\ \hline
\texttt{Logistic-UCB-2} \citep{faury2020improved} & $O(d \sqrt{T} \log T)$ & Logistic  \\ \hline
\texttt{SupLogistic} \citep{junimproved2021}  & $O(\sqrt{dT \log NT} \log T)$ & Logistic  \\ \hline
\texttt{DDRTS-GLM} (Proposed)  & $O(\sqrt{\kappa^{-1} d T \log NT})$ & GLM  \\ \hline
\end{tabular}
\end{center}
\end{table}

\section{Detailed proofs for Theorem~\ref{thm:regret_bound} }

In this section, we present the proof for the $\tilde{O}(\sqrt{\kappa^{-1} \phi T})$ regret bound for \texttt{DDRTS-GLM} and its related lemmas.

\subsection{Proof of Lemma~\ref{lem:superunsaturated_arms}}
\begin{proof}
This proof is modified from that of Lemma 2 in \citet{kim2021doubly}.
Fix $t\in(\Exploration,T]$ and define $\Gamma_t := \{i\in[N]:\Tildepi{i}{t} > \gamma \}$ for given $\gamma \in [1/(N+1),1/N)$.
\paragraph{Step 1. Proving $\Gamma_t \subseteq S_t$:}
By definition of $S_t$ in~\eqref{eq:super_unsaturated_arms}, we have $\Optimalarm{t} \in S_t$.
Let $\Sampledreward{i}{t}:=\mu(\Context{i}{t}^T\BetaSampled{i}{t})$, where $\BetaSampled{i}{t}$ is the sampled from Gaussian distribution defined in Algorithm~\ref{alg:DRTS}.
Suppose the estimated reward for the optimal arm, $\Sampledreward{\Optimalarm{t}}{t}$ is greater than $\Sampledreward{j}{t}$ for all $j\in S_t$, i.e. $\max_{j\notin S_t} \Sampledreward{j}{t} \le \Sampledreward{\Optimalarm{t}}{t}$.
In this case, any arm $j\notin S_t$ cannot be selected as a candidate arm $\Candidatearm{t} := \arg\max_{i\in[N]} \Sampledreward{i}{t}$ and $\Candidatearm{t} \in S_t$.
Thus,
\begin{align*}
\CP{\Candidatearm t\in S_{t}}{\History t}
&\ge\CP{\bigcap_{j\notin S_{t}}\left\{ \Sampledreward{\Optimalarm t}t>\Sampledreward jt\right\} }{\History t}\\
&=\CP{\bigcap_{j\notin S_{t}}\left\{ \Context{\Optimalarm t}t^{T}\BetaSampled{\Optimalarm t}t>\Context jt^{T}\BetaSampled jt\right\} }{\History t}\\
&=\CP{\bigcap_{j\notin S_{t}}\left\{ Z_{j,t}>\left(\Context jt-\Context{\Optimalarm t}t\right)^{T}\Estimator{t-1}\right\} }{\History t},
\end{align*}
where $Z_{j,t}:=\Context{\Optimalarm t}t^{T}\left(\BetaSampled{\Optimalarm t}t-\Estimator{t-1}\right)-\Context jt^{T}\left(\BetaSampled jt-\Estimator{t-1}\right)$.
The first equality holds since $\mu$ is a nondecreasing function.
Because the distribution of $\BetaSampled{j}{t}$ is Gaussian, we can deduce that $Z_{j,t}$ is a centered Gaussian random variable with variance $v^2 (\|\Context{\Optimalarm{t}}{t}\|^2_{\widehat{W}_{t-1}^{-1}}+\|\Context{j}{t}\|_{\widehat{W}_{t-1}^{-1}}^2)$, where $\widehat{W}_t:=\sum_{\tau=1}^{t}\sum_{i=1}^{N}\mu^{\prime}(\Context i\tau^{T}\Estimator t)\Context i\tau\Context i\tau^{T}+\lambda I_{d}$.
Denote the inverse function of $\mu$ by $\mu^{-1}$. 
Then for each $j\notin S_t$, by mean value theorem, there exists $\bar{\mu}$ between $\mu(\Context{\Optimalarm{t}}{t}^{T}\Estimator{t-1})$ and $\mu(\Context{j}{t}^{T}\Estimator{t-1})$ such that 
\begin{align*}
\left(\Context{j}{t}\!-\!\Context{\Optimalarm{t}}{t}\right)^{T}\!\!\Estimator{t-1}\!\! 
&=\! \left(\mu^{-1}\right)^{\prime}\!(\bar{\mu})\!\left\{\mu(\Context{j}{t}^{T}\Estimator{t-1}) - \mu(\Context{\Optimalarm{t}}{t}^{T}\Estimator{t-1}) \right\}\\
&=\! \left(\mu^{-1}\right)^{\prime}\!(\bar{\mu})\!\left\{\!\mu(\Context{j}{t}^{T}\Estimator{t-1}) \!-\! \mu(\Context{j}{t}^{T}\beta^{*}) \!-\! \mu(\Context{\Optimalarm{t}}{t}^{T}\Estimator{t-1}) \!+\! \mu(\Context{\Optimalarm{t}}{t}^{T}\beta^{*}) \!-\! \Diff{j}{t} \!\right\}\\
&\le\! \left(\mu^{-1}\right)^{\prime}\!(\bar{\mu})\!\left\{\Prederror{j}{t}+\Prederror{\Optimalarm{t}}{t} - \Diff{j}{t} \right\}\\
&\le\! \left(\mu^{-1}\right)^{\prime}\!(\bar{\mu})\!\left\{-\sqrt{\kappa L_1}\sqrt{\norm{\Context{\Optimalarm t}t}_{W_{t-1}^{-1}}^{2}+\norm{\Context jt}_{W_{t-1}^{-1}}^{2}} \right\}\\
&\le -\sqrt{\frac{\kappa}{L_1}} \sqrt{\norm{\Context{\Optimalarm t}t}_{W_{t-1}^{-1}}^{2}+\norm{\Context jt}_{W_{t-1}^{-1}}^{2}},
\end{align*}
where the second inequality holds due to $j \notin S_t$ and~\eqref{eq:super_unsaturated_arms}, and the last inequality holds due to the fact that $(\mu^{-1})^{\prime}(y) = 1/\mu^{\prime}(\mu^{-1}(y)) \ge L_1^{-1}$ for all $y\in\{\mu(x^{T}\beta):\|x\|_2\le 1, \beta \in \mathcal{B}_r^{*}\}$ (Assumption~\ref{assump:mean_function}). 
By Assumption~\ref{assump:mean_function}, and $\kappa \le L_1$,
\begin{align*}
W_{t}:=&\sum_{\tau=1}^{t}\sum_{i=1}^{N}\mu^{\prime}\left(\Context i\tau^{T}\beta^{*}\right)\Context i\tau\Context i\tau^{T}+\lambda I_{d}\\
\preceq&L_{1}\sum_{\tau=1}^{t}\sum_{i=1}^{N}\Context i\tau\Context i\tau^{T}+\lambda I_{d}\\
\preceq&\frac{L_{1}}{\kappa}\sum_{\tau=1}^{t}\sum_{i=1}^{N}\mu^{\prime}\left(\Context i\tau^{T}\Estimator t\right)\Context i\tau\Context i\tau^{T}+\lambda I_{d}\\
\preceq&\frac{L_{1}}{\kappa}\sum_{\tau=1}^{t}\sum_{i=1}^{N}\mu^{\prime}\left(\Context i\tau^{T}\Estimator t\right)\Context i\tau\Context i\tau^{T}+\frac{L_{1}}{\kappa}\lambda I_{d}\\
=&\frac{L_{1}}{\kappa}\widehat{W}_{t}.
\end{align*}
This implies that for each $j\notin S_t$,
\begin{equation}
\left(\Context jt-\Context{\Optimalarm t}t\right)^{T}\Estimator{t-1}
\le -\frac{\kappa}{L_1}\sqrt{\norm{\Context{\Optimalarm t}t}_{\widehat{W}_{t-1}^{-1}}^{2}+\norm{\Context jt}_{\widehat{W}_{t-1}^{-1}}^{2}}.
\label{eq:tilde_action_to_U_bound}
\end{equation}
Thus, we have
\begin{align*}
\CP{\Candidatearm t\in S_{t}}{\History t}\ge
&\CP{\bigcap_{j\notin S_{t}}\left\{ Z_{j,t}>-\frac{\kappa}{L_1}\sqrt{\norm{\Context{\Optimalarm t}t}_{\widehat{W}_{t-1}^{-1}}^{2}+\norm{\Context jt}_{\widehat{W}_{t-1}^{-1}}^{2}}\right\} }{\History t}\\
=&\CP{\bigcap_{j\notin S_{t}}\left\{ \frac{Z_{j,t}}{v\sqrt{\norm{\Context{\Optimalarm t}t}_{\widehat{W}_{t-1}^{-1}}^{2}+\norm{\Context jt}_{\widehat{W}_{t-1}^{-1}}^{2}}}>-\frac{\kappa}{L_1}v^{-1}\right\} }{\History t}\\
:=&\CP{\bigcap_{j\notin S_{t}}\left\{ U_{j,t}>-\frac{\kappa}{L_1}v^{-1}\right\} }{\History t}.
\end{align*}
Here, $\{U_{j,t}\}_{j\notin S_t}$ are independent standard Gaussian random variables given $\History{t}$.
Setting $v=(\kappa/L_1)\left\{ 2\log\left(\frac{N}{1-\gamma N}\right)\right\} ^{-1/2}$ gives
\[
\CP{U_{j,t}<-\frac{\kappa}{L_1}v^{-1}}{\History t}\le\frac{1-\gamma N}{N},
\]
for each $j\notin S_t$.
Then we have
\begin{align*}
\CP{\bigcap_{j\notin S_{t}}\left\{ U_{j,t}>-\frac{\kappa}{L_1}v^{-1}\right\} }{\History t}=&\prod_{j\notin S_{t}}\CP{U_{j,t}>-\frac{\kappa}{L_1}v^{-1}}{\History t}\\
=&\prod_{j\notin S_{t}}\left\{ 1-\CP{U_{j,t}<-\frac{\kappa}{L_1}v^{-1}}{\History t}\right\} \\
\ge&\prod_{j\notin S_{t}}\left\{ 1-\frac{1-\gamma N}{N}\right\} \\
\ge&1-\left(1-\gamma N\right)\\=&\gamma N\\
\ge&1-\gamma,
\end{align*}
where the last inequality holds due to $\gamma > 1/(N+1)$.
With~\eqref{eq:tilde_action_to_U_bound}, we have $\CP{\Candidatearm t\in S_{t}}{\History t} \ge 1-\gamma$.
For any $j \notin S_t$,
\[
\Tildepi jt=\CP{\Candidatearm t=j}{\History t}\le\CP{\bigcup_{j\notin S_{t}}\left\{ \Candidatearm t=j\right\} }{\History t}=\CP{\Candidatearm t\notin S_{t}}{\History t}\le\gamma.
\]
Thus, we conclude that $S_t^{c} \subseteq \Gamma_t^{c}$.
\paragraph{Step 2. Proving $\Action{t} \in \Gamma_t$ with high probability:}
For $m\in[M_t]$, let $a_t^{(m)}$ be the candidate arm in $m$-th resampling.
Since $a_t^{(1)}, \ldots, a_t^{(M_t)}$ are independent,
\[
\CP{\Action t\notin\Gamma_{t}}{\History t}\le\CP{\bigcap_{m=1}^{M_{t}}\left\{ a_{t}^{(m)}\notin\Gamma_{t}\right\} }{\History t}=\prod_{m=1}^{M_{t}}\CP{a_{t}^{(m)}\notin\Gamma_{t}}{\History t}\le\left(1-\gamma\right)^{M_{t}},
\]
where the last inequality holds because $\gamma < 1/N$ and there exists at least one arm $i$ such that $\Tildepi{i}{t}>\gamma$, which implies $\CP{a_{t}^{(m)}\notin\Gamma_{t}}{\History t}\le1-\gamma$.
Setting $M_{t}=\log\frac{(t+1)^{2}}{\delta}\Big/\log\frac{1}{1-\gamma}$ proves 
\[
\CP{\Action t \in \Gamma_{t}}{\History t} \ge 1-\frac{\delta}{t^2},
\]
for all $t\in[T]$.
\paragraph{Step 3. Proving the lemma:}
Because $\Gamma_t \subseteq S_t$ for $t\in(\Exploration,T]$, we can deduce that $\Action{t} \in \Gamma_{t}$ implies $\Action{t} \in S_t$.
Because $\SelectionP{i}{t} \ge \Tildepi{i}{t}$ for all $i \in \Gamma_t$ and $t\in[T]$, we have that $\Action{t} \in \Gamma_{t}$ implies $\SelectionP{\Action{t}}{t} > \gamma$.
Thus, we conclude that
\[
\Probability\left(\bigcap_{t=\Exploration}^{T}\left\{ \Action t\in S_{t}\right\} \cap\bigcap_{t=1}^{T}\left\{ \SelectionP{\Action t}t>\gamma\right\} \right)\ge\Probability\left(\bigcap_{t=1}^{T}\left\{ \Action t\in\Gamma_{t}\right\} \right)\ge1-\delta
\]
\end{proof}

\subsection{Proof of Lemma~\ref{lem:prediction_error}}
\label{subsec:proof_of_pred_error}
\begin{proof}
Let us fix $t \in (\Exploration,T]$.
Then the tail inequality for DDR estimator (Theorem~\ref{thm:estimation_error_bound}) and Corollary~\ref{cor:min_eigen_chernoff} imply that, with probability at least $1-6\delta/T$,
\begin{equation}
\begin{split}
\norm{\Estimator{t-1}-\beta^{*}}_{2}&\le\frac{12L_{1}+8B}{\kappa\phi\sqrt{t-1}}\sqrt{\log\frac{4T}{\delta}}+\frac{\frac{9}{4}L_{1}r+\frac{\lambda S}{N}}{\kappa\phi\left(t-1\right)},\\
\min_{\beta\in\mathcal{B}_{r}^{*}}\Mineigen{\sum_{\tau=1}^{t-1}\sum_{i=1}^{N}\mu^{\prime}\left(\Context i{\tau}^{T}\beta\right)\Context i{\tau}\Context i{\tau}^{T}}&\ge\frac{\phi\kappa N\left(t-1\right)}{2}.
\label{eq:estimator_conditions}
\end{split}
\end{equation}
As the second line of event is required for the first line of event,~\eqref{eq:estimator_conditions} holds with probability at least $1-6\delta$ not $1-7\delta$.

\paragraph{Step 1 Approximation:} For each $j\in[N]$, using second-order Taylor expansion, there exists $v\in(0,1)$ such that $\bar{\beta}=v\Estimator{t-1} + (1-v)\beta^{*}$ and
\[
\mu\left(\Context jt^{T}\Estimator{t-1}\right)-\mu\left(\Context jt^{T}\beta^{*}\right)=\mu^{\prime}\left(\Context jt^{T}\beta^{*}\right)\Context jt^{T}\left(\Estimator{t-1}-\beta^{*}\right)+\frac{\mu^{\prime\prime}\left(\Context jt^{T}\bar{\beta}\right)\left\{ \Context jt^{T}\left(\Estimator{t-1}-\beta^{*}\right)\right\} ^{2}}{2}.
\]
Taking the absolute value on both sides,
\begin{equation}
\begin{split}
\abs{\mu\left(\Context jt^{T}\Estimator{t-1}\right)-\mu\left(\Context jt^{T}\beta^{*}\right)}\le&\abs{\mu^{\prime}\left(\Context jt^{T}\beta^{*}\right)\Context jt^{T}\left(\Estimator{t-1}-\beta^{*}\right)}\\
&\;+\abs{\frac{\mu^{\prime\prime}\left(\Context jt^{T}\bar{\beta}\right)\left\{ \Context jt^{T}\left(\Estimator{t-1}-\beta^{*}\right)\right\} ^{2}}{2}}\\
\le&\mu^{\prime}\left(\Context jt^{T}\beta^{*}\right)\abs{\Context jt^{T}\left(\Estimator{t-1}-\beta^{*}\right)}+\frac{L_{2}\norm{\Estimator{t-1}-\beta}_{2}^{2}}{2}\\
\le&\mu^{\prime}\left(\Context jt^{T}\beta^{*}\right)\abs{\Context jt^{T}\left(\Estimator{t-1}-\beta^{*}\right)}\\&+L_{2}\left\{ \frac{\left(12L_{1}+8B\right)^{2}}{\kappa^{2}\phi^{2}\left(t-1\right)}\log\frac{4T}{\delta}+\frac{\left(\frac{9}{4}L_{1}r+\frac{\lambda S}{N}\right)^{2}}{\kappa^{2}\phi^{2}N^{2}\left(t-1\right)^{2}}\right\} \\
\le&\mu^{\prime}\left(\Context jt^{T}\beta^{*}\right)\abs{\Context jt^{T}\left(\Estimator{t-1}-\beta^{*}\right)}\\
&\;+\frac{L_{2}\left\{ \left(12L_{1}+8B\right)^{2}+1\right\} }{\phi^{2}\kappa^{2}\left(t-1\right)}\log\frac{4T}{\delta}.
\label{eq:prediction_error_approximation}
\end{split}
\end{equation}
The second inequality holds due to Assumption~\ref{assump:boundedness} and~\ref{assump:mean_function}, and the third inequality holds by~\eqref{eq:estimator_conditions}.
The last inequality uses the fact that $t>\Exploration$ and
\[
\frac{\left(\frac{9}{4}L_{1}r+\frac{\lambda S}{N}\right)^{2}}{\kappa^{2}\phi^{2}\left(t-1\right)^{2}}\le\frac{\left(\frac{9}{4}L_{1}r+\frac{\lambda S}{N}\right)^{2}}{\kappa^{2}\phi^{2}\left(t-1\right)\Exploration}\le\frac{1}{\phi^{2}\kappa^{2}\left(t-1\right)}\log\frac{4T}{\delta}.
\]
To bound the first term in~\eqref{eq:prediction_error_approximation}, we observe that $U_{t-1}(\Estimator{t-1})=0$, where $U_t$ is the score function defined in~\eqref{eq:score_equation}.
Let $W_{t}:=\sum_{\tau=1}^{t}\sum_{i=1}^{N}\mu^{\prime}\left(\Context i{\tau}^{T}\beta^{*}\right)\Context i{\tau}\Context i{\tau}^{T}+\lambda I_{d}$.
Then by second-order Taylor expansion, there exists $v\in(0,1)$ such that $\bar{\beta}=v\Estimator{t-1} + (1-v)\beta^{*}$ and
\begin{align*}
0 &= U_{t-1}\left(\Estimator{t-1}\right) \\
&= U_{t-1}\left(\beta^{*}\right)-W_{t-1}\left(\Estimator{t-1}-\beta^{*}\right)-\frac{1}{2}\sum_{\tau=1}^{t-1}\sum_{i=1}^{N}\mu^{\prime\prime}\left(\Context i{\tau}^{T}\bar{\beta}\right)\left\{ \Context i{\tau}^{T}\left(\Estimator{t-1}-\beta^{*}\right)\right\} ^{2}\Context i{\tau}.
\end{align*}
Multiplying $\Context{j}{t}^{T} W_{t-1}^{-1}$ on both sides gives
\begin{align*}
\Context jt^{T}\left(\Estimator{t-1}-\beta^{*}\right)&=\Context jt^{T}W_{t-1}^{-1}U_{t-1}\left(\beta^{*}\right)\\
&\quad-\frac{1}{2}\sum_{\tau=1}^{t-1}\sum_{i=1}^{N}\mu^{\prime\prime}\left(\Context i\tau^{T}\bar{\beta}\right)\left\{ \Context i\tau^{T}\left(\Estimator{t-1}-\beta^{*}\right)\right\} ^{2}\Context jt^{T}W_{t-1}^{-1}\Context i\tau.
\end{align*}
Taking the absolute value on both sides,
\begin{equation}
\begin{split}
\abs{\Context jt^{T}\left(\Estimator{t-1}-\beta^{*}\right)}
\le&\abs{\Context jt^{T}W_{t-1}^{-1}U_{t-1}\left(\beta^{*}\right)}\\
&\quad+\frac{1}{2}\abs{\sum_{\tau=1}^{t-1}\sum_{i=1}^{N}\mu^{\prime\prime}\left(\Context i\tau^{T}\bar{\beta}\right)\left\{ \Context i\tau^{T}\left(\Estimator{t-1}-\beta^{*}\right)\right\} ^{2}\Context jt^{T}W_{t-1}^{-1}\Context i\tau}\\
\le&\abs{\Context jt^{T}W_{t-1}^{-1}U_{t-1}\left(\beta^{*}\right)}+\frac{L_{2}\norm{\Estimator{t-1}-\beta^{*}}_{2}^{2}}{2}\sum_{\tau=1}^{t-1}\sum_{i=1}^{N}\abs{\Context jt^{T}W_{t-1}^{-1}\Context i\tau}\\
\le&\abs{\Context jt^{T}W_{t-1}^{-1}U_{t-1}\left(\beta^{*}\right)}+\frac{L_{2}\norm{\Estimator{t-1}-\beta^{*}}_{2}^{2}\Mineigen{W_{t-1}}^{-1}N\left(t-1\right)}{2}\\
\le&\abs{\Context jt^{T}W_{t-1}^{-1}U_{t-1}\left(\beta^{*}\right)}+\frac{L_{2}}{\phi\kappa}\left\{ \frac{\left(12L_{1}+8B\right)^{2}}{\phi^{2}\kappa^{2}\left(t-1\right)}\log\frac{4T}{\delta}+\frac{\left(\frac{9}{4}L_{1}r+\frac{\lambda S}{N}\right)^{2}}{\kappa^{2}\phi^{2}(t-1)^{2}}\right\} \\
\le&\abs{\Context jt^{T}W_{t-1}^{-1}U_{t-1}\left(\beta^{*}\right)}+\frac{L_{2}\left\{ \left(12L_{1}+8B\right)^{2}+1\right\} }{\phi^{3}\kappa^{3}\left(t-1\right)}\log\frac{4T}{\delta}.
\end{split}
\label{eq:prediction_error_bound_1}
\end{equation}
The second and third inequality holds due to Assumption~\ref{assump:boundedness} and~\ref{assump:mean_function}, and the fourth inequality holds due to~\eqref{eq:estimator_conditions}.
By Assumption~\ref{assump:boundedness} and~\eqref{eq:estimator_conditions},
\begin{equation}
\begin{split}
\abs{\Context jt^{T}W_{t-1}^{-1}U_{t-1}\left(\beta^{*}\right)}=&\abs{\Context jt^{T}W_{t-1}^{-1}\left[\sum_{\tau=1}^{t-1}\sum_{i=1}^{N}\left\{ \DRreward i\tau{\Impute{t-1}}-\mu\left(\Context i\tau^{T}\beta^{*}\right)\right\} \Context i\tau-\lambda\beta^{*}\right]}\\
\le&\abs{\sum_{\tau=1}^{t-1}\sum_{i=1}^{N}\left\{ \DRreward i\tau{\Impute{t-1}}-\mu\left(\Context i\tau^{T}\beta^{*}\right)\right\} \Context jt^{T}W_{t-1}^{-1}\Context i\tau}+\lambda\abs{\Context jt^{T}W_{t-1}^{-1}\beta^{*}}\\
\le&\abs{\sum_{\tau=1}^{t-1}\sum_{i=1}^{N}\left\{ \DRreward i\tau{\Impute{t-1}}-\mu\left(\Context i\tau^{T}\beta^{*}\right)\right\} \Context jt^{T}W_{t-1}^{-1}\Context i\tau}+\lambda S\Mineigen{W_{t-1}}^{-1}\\
=&\abs{\sum_{\tau=1}^{t-1}\sum_{i=1}^{N}\left\{ \DRreward i\tau{\Impute{t-1}}-\mu\left(\Context i\tau^{T}\beta^{*}\right)\right\} \Context jt^{T}W_{t-1}^{-1}\Context i\tau}+\frac{2\lambda S}{\kappa\phi N\left(t-1\right)}.
\end{split}
\label{eq:prediction_error_bound_2}
\end{equation}
From~\eqref{eq:prediction_error_approximation},~\eqref{eq:prediction_error_bound_1} and~\eqref{eq:prediction_error_bound_2}, the bound for the prediction error is approximated by
\begin{equation}
\Prederror jt\le\mu^{\prime}\left(\Context jt^{T}\beta^{*}\right)\abs{\sum_{\tau=1}^{t-1}\sum_{i=1}^{N}\left\{ \DRreward i\tau{\Impute{t-1}}-\mu\left(\Context i\tau^{T}\beta^{*}\right)\right\} \Context jt^{T}W_{t-1}^{-1}\Context i\tau}+\frac{D_{\mu,B,\lambda,S}-3 L_1}{\phi^{3}\kappa^{3}\left(t-1\right)}\log\frac{4T}{\delta},
\label{eq:prediction_error_main_term}
\end{equation}
where
\begin{equation}
D_{\mu,B,\lambda,S}:=2L_{2}\left\{ \left(12L_{1}+8B\right)^{2}+1\right\} +2\lambda S + 3 L_1.
\label{eq:D_definition}
\end{equation}
\paragraph{Step 2 Bounding the main order term:} 
Now we bound the first term in~\eqref{eq:prediction_error_main_term}, which is the main order term.
By definition of $\DRreward i{\tau}{\Impute{t-1}}$,
\begin{equation}
\begin{split}
&\abs{\sum_{\tau=1}^{t-1}\sum_{i=1}^{N}\left\{ \DRreward i\tau{\Impute{t-1}}-\mu\left(\Context i\tau^{T}\beta^{*}\right)\right\} \Context jt^{T}W_{t-1}^{-1}\Context i\tau}\\
&\le \abs{\sum_{\tau=1}^{t-1}\sum_{i=1}^{N}\left\{ 1-\frac{\Indicator{\Action \tau=i}}{\SelectionP i\tau}\right\} D_{i,\tau}^{\Impute{t-1}}\Context jt^{T}W_{t-1}^{-1}\Context i\tau} +\abs{\sum_{\tau=1}^{t-1}\frac{\Error \tau}{\SelectionP{\Action \tau}\tau}\Context jt^{T}W_{t-1}^{-1}\Context{\Action \tau}\tau}.
\end{split}
\label{eq:prediction_martingale_decomposition}
\end{equation}
where $D_{i,\tau}^{\Impute t}:=\mu(\Context i{\tau}^{T}\Impute t)-\mu(\Context i{\tau}^{T}\beta^{*})$, and $\Error{\tau}:=Y_{\tau}-\mu(\Context{\Action{\tau}}{\tau}^{T}\beta^{*})$.
To bound the first term, we let $A_{i,\tau}:=1-\frac{\Indicator{\Action{\tau}=i}}{\SelectionP{i}{\tau}}$ and define $\breve{\mathcal{B}}:= \{\beta\in\Real^{d}:\|\beta-\beta^{*}\|_{2}\le\min\{\kappa^{1/2}N^{-1/2}d^{-1/2},r\}\}$.
By Lemma~\ref{lem:imputation_error_bound}, the imputation estimator $\Impute{t}$ is in $\breve{\mathcal{B}}$, with probability at least $1-3\delta/T$.
For any $\epsilon>0$, let $\beta_1, \ldots, \beta_{M(\epsilon)}$ be the $\epsilon$-cover for $\breve{\mathcal{B}}$.
Then for any $\beta\in\breve{\mathcal{B}}$, there exists $m\in[M(\epsilon)]$ such that $\|\beta - \beta_m\|_2 \le \epsilon$ and
\begin{align*}
&\abs{\sum_{\tau=1}^{t-1}\sum_{i=1}^{N}A_{i,\tau}D_{i,\tau}^{\beta}\Context jt^{T}W_{t-1}^{-1}\Context i\tau}\\
&\le\abs{\sum_{\tau=1}^{t-1}\sum_{i=1}^{N}A_{i,\tau}\left(D_{i,\tau}^{\beta}-D_{i,\tau}^{\beta_{m}}\right)\Context jt^{T}W_{t-1}^{-1}\Context i\tau}+\abs{\sum_{\tau=1}^{t-1}\sum_{i=1}^{N}A_{i,\tau}D_{i,\tau}^{\beta_{m}}\Context jt^{T}W_{t-1}^{-1}\Context i\tau}\\
&\le L_{1}\epsilon\sum_{\tau=1}^{t-1}\sum_{i=1}^{N}\abs{A_{i,\tau}\Context jt^{T}W_{t-1}^{-1}\Context i\tau}+\max_{m\in[M(\epsilon)]}\abs{\sum_{\tau=1}^{t-1}\sum_{i=1}^{N}A_{i,\tau}D_{i,\tau}^{\beta_{m}}\Context jt^{T}W_{t-1}^{-1}\Context i\tau}\\
&\le L_{1}\epsilon\Mineigen{W_{t-1}}^{-1}\sum_{\tau=1}^{t-1}\sum_{i=1}^{N}\abs{A_{i,\tau}}+\max_{m\in[M(\epsilon)]}\abs{\sum_{\tau=1}^{t-1}\sum_{i=1}^{N}A_{i,\tau}D_{i,\tau}^{\beta_{m}}\Context jt^{T}W_{t-1}^{-1}\Context i\tau}\\
&\le\frac{2L_{1}\epsilon\left(\gamma^{-1}+N\right)}{\kappa\phi N}+\max_{m\in[M(\epsilon)]}\abs{\sum_{\tau=1}^{t-1}\sum_{i=1}^{N}A_{i,\tau}D_{i,\tau}^{\beta_{m}}\Context jt^{T}W_{t-1}^{-1}\Context i\tau}.
\end{align*}
The second inequality holds due to Assumption~\ref{assump:boundedness} and~\ref{assump:mean_function}, the third inequality holds by Assumption~\ref{assump:boundedness} and the last inequality holds by~\eqref{eq:estimator_conditions} and the event in~\eqref{eq:action_in_superunsaturated_arms}.
Taking the supremum over $\breve{\mathcal{B}}$ gives
\begin{equation}
\begin{split}
&\abs{\sum_{\tau=1}^{t-1}\sum_{i=1}^{N}A_{i,\tau}D_{i,\tau}^{\Impute{t-1}}\Context jt^{T}W_{t-1}^{-1}\Context i\tau}\\
&\le\sup_{\beta\in\breve{\mathcal{B}}}\abs{\sum_{\tau=1}^{t-1}\sum_{i=1}^{N}A_{i,\tau}D_{i,\tau}^{\beta}\Context jt^{T}W_{t-1}^{-1}\Context i\tau}\\
&\le\frac{2L_{1}\epsilon\left(\gamma^{-1}+N\right)}{\kappa\phi N}+\max_{m\in[M(\epsilon)]}\abs{\sum_{\tau=1}^{t-1}\sum_{i=1}^{N}A_{i,\tau}D_{i,\tau}^{\beta_{m}}\Context jt^{T}W_{t-1}^{-1}\Context i\tau}\\
&\le\frac{6L_{1}\epsilon}{\kappa\phi}+\max_{m\in[M(\epsilon)]}\abs{\sum_{\tau=1}^{t-1}\sum_{i=1}^{N}A_{i,\tau}D_{i,\tau}^{\beta_{m}}\Context jt^{T}W_{t-1}^{-1}\Context i\tau},
\end{split}
\label{eq:pred_error_covering_bound}
\end{equation}
where the last inequality holds due to $\gamma > 1/(N+1)$.
Define the filtration $\Filtration{\tau}:=\History{\tau}\cup\{\Setofcontexts{1},\ldots,\Setofcontexts{t}\}$, and $\Filtration{0}=\{\Setofcontexts{1},\ldots,\Setofcontexts{t}\}$.
Then for each $m\in [M(\epsilon)]$, let
\[
P_{u}^{(m)}:=\sum_{\tau=1}^{u}\sum_{i=1}^{N}A_{i,\tau}D_{i,\tau}^{\beta_{m}}\Context jt^{T}W_{t-1}^{-1}\Context i\tau,\quad P_{0}^{(m)}=0.
\]
Then $P_u^{(m)}$ is a martingale sequence with respect to $\Filtration{u}$ because $P_{u}^{(m)}$ is $\Filtration{u+1}$-measurable, and
\begin{align*}
\CE{P_{u}^{(m)}}{\Filtration s}
=&P_{u-1}^{(m)}+\CE{\sum_{i=1}^{N}A_{i,u}D_{i,u}^{\beta_{m}}\Context jt^{T}W_{t-1}^{-1}\Context iu}{\Filtration u}\\
=&P_{u-1}^{(m)}+\CE{\sum_{i=1}^{N}A_{i,u}}{\Filtration u}D_{i,u}^{\beta_{m}}\Context jt^{T}W_{t-1}^{-1}\Context iu\\
=&P_{u-1}^{(m)}+\CE{\sum_{i=1}^{N}A_{i,u}}{\History u}D_{i,u}^{\beta_{m}}\Context jt^{T}W_{t-1}^{-1}\Context iu\\
=&P_{u-1}^{(m)}
\end{align*}
where the third inequality holds since $\History{u}$ is independent with $\Setofcontexts{u+1},\ldots,\Setofcontexts{t}$ because of Assumption~\ref{assump:iid_contexts}.
Let
\[
E_{\gamma}:=\bigcap_{t=\Exploration}^{T}\left\{ \Action t\in S_{t}\right\} \cap\bigcap_{t=1}^{T}\left\{ \SelectionP{\Action t}t>\gamma\right\} 
\]
When $E_{\gamma}$ holds, the differences of the martingale is bounded as
\begin{align*}
\abs{P_{u}^{(m)}-P_{u-1}^{(m)}}
=&\abs{\sum_{i=1}^{N}A_{i,u}D_{i,u}^{\beta_{m}}\Context it^{T}W_{t-1}^{-1}\Context iu}\\
\le&\sqrt{\sum_{i=1}^{N}A_{i,u}^{2}}\sqrt{\sum_{i=1}^{N}\left(D_{i,u}^{\beta_{m}}\Context it^{T}W_{t-1}^{-1}\Context iu\right)^{2}}\\
\le&\sqrt{N+\SelectionP{\Action u}u^{-2}}L_{1}\norm{\beta_{m}-\beta^{*}}_{2}\sqrt{\sum_{i=1}^{N}\left(\Context it^{T}W_{t-1}^{-1}\Context iu\right)^{2}}\\
\le&\sqrt{N+\gamma^{-2}}L_{1}\norm{\beta_{m}-\beta^{*}}_{2}\sqrt{\sum_{i=1}^{N}\left(\Context it^{T}W_{t-1}^{-1}\Context iu\right)^{2}}\\
\le&2L_{1}\sqrt{1+\gamma^{-2}N^{-1}}\sqrt{\frac{\kappa}{d}\sum_{i=1}^{N}\left(\Context it^{T}W_{t-1}^{-1}\Context iu\right)^{2}}\\
\le&2L_{1}\sqrt{\frac{3\kappa N}{d}\sum_{i=1}^{N}\left(\Context it^{T}W_{t-1}^{-1}\Context iu\right)^{2}},
\end{align*}
for all $u\in[t-1]$, and $m\in[M(\epsilon)]$. 
The fourth inequality holds since $\beta_m \in \breve{\mathcal{B}}$, and the last inequality holds since $\gamma\in [1/(N+1),1/N)$.
Thus we have
\[
\left\{ \abs{P_{u}^{(m)}-P_{u-1}^{(m)}}>2L_{1}\sqrt{\frac{3N\kappa}{d}\sum_{i=1}^{N}\left(\Context it^{T}W_{t-1}^{-1}\Context iu\right)^{2}}\right\} \subset E_{\gamma}^{c}.
\]
By Lemma~\ref{lem:chung_lemma}, there exists a martingale sequence $Q_{u}^{(m)}$ such that
\[
\abs{Q_{u}^{(m)}-Q_{u-1}^{(m)}}\le2L_{1}\sqrt{\frac{3N\kappa}{d}\sum_{i=1}^{N}\left(\Context it^{T}W_{t-1}^{-1}\Context iu\right)^{2}},
\]
almost surely, and 
\[
\left\{ Q_{u}^{(m)}\neq P_{u}^{(m)}\right\} \subset\left\{ \abs{P_{u}^{(m)}-P_{u-1}^{(m)}}>2L_{1}\sqrt{\frac{3N\kappa}{d}\sum_{i=1}^{N}\left(\Context it^{T}W_{t-1}^{-1}\Context iu\right)^{2}}\right\} \subset E_{\gamma}^{c},
\]
for all $u\in[t-1]$.
By Lemma~\ref{lem:Azuma_Hoeffding_inequality}, for any $a > 0$,
\begin{align*}
&\CP{\left\{ \max_{m\in[M(\epsilon)]}\abs{P_{t-1}^{(m)}}>a\right\} \cap E_{\gamma}}{\Filtration 0}\\
&\le\CP{\max_{m\in[M(\epsilon)]}\abs{Q_{t-1}^{(m)}}>a}{\Filtration 0}\\
&\le\sum_{m=1}^{M(\epsilon)}\CP{\abs{Q_{t-1}^{(m)}}>a}{\Filtration 0}\\
&\le2M(\epsilon)\exp\left(-\frac{da^{2}}{24L_{1}^{2}N\kappa\sum_{\tau=1}^{t-1}\sum_{i=1}^{N}\left(\Context jt^{T}W_{t-1}^{-1}\Context i\tau\right)^{2}}\right)\\
&\le2M(\epsilon)\exp\left(-\frac{da^{2}}{24NL_{1}^{2}\norm{\Context jt}_{W_{t-1}^{-1}}^{2}}\right),
\end{align*}
where the last inequality holds because
\begin{equation}
\begin{split}
\kappa\sum_{\tau=1}^{t-1}\sum_{i=1}^{N}\left(\Context jt^{T}W_{t-1}^{-1}\Context i\tau\right)^{2}&\le\Context jt^{T}W_{t-1}^{-1}\left(\sum_{\tau=1}^{t-1}\mu^{\prime}\left(\Context i{\tau}^{T}\beta^{*}\right)\Context i{\tau}\Context i{\tau}^{T}\right)W_{t-1}^{-1}\Context jt\\
&\le\Context jt^{T}W_{t-1}^{-1}\Context jt.
\end{split}
\label{eq:SS_to_norm}
\end{equation}
Thus, with~\eqref{eq:pred_error_covering_bound} and the event $E_{\gamma}$, with probability at least $1-\delta/(NT)$,
\begin{align*}
\abs{\sum_{\tau=1}^{t-1}\sum_{i=1}^{N}A_{i,\tau}D_{i,\tau}^{\Impute{t-1}}\Context jt^{T}W_{t-1}^{-1}\Context i\tau}\le&\frac{6L_{1}\epsilon}{\kappa\phi}+\max_{m\in[M(\epsilon)]}\abs{\sum_{\tau=1}^{t-1}\sum_{i=1}^{N}A_{i,\tau}D_{i,\tau}^{\beta_{m}}\Context jt^{T}W_{t-1}^{-1}\Context i\tau}\\
\le&\frac{6L_{1}\epsilon}{\kappa\phi}+L_{1}\norm{\Context jt}_{W_{t-1}^{-1}}\sqrt{\frac{24N}{d}\log\frac{2M(\epsilon)NT}{\delta}}.
\end{align*}
Since the radius of $\breve{\mathcal{B}}$ is $\min \{\kappa^{1/2}N^{-1/2}d^{-1/2}, r\}$, we have a bound for the covering number $M(\epsilon)$ by
\[
M(\epsilon)\le\left(\frac{2}{\epsilon}\min\left\{ \sqrt{\frac{\kappa}{Nd}},r\right\} +1\right)^{d}\le\left(\frac{3\sqrt{\kappa }}{\sqrt{Nd}\epsilon}\right)^{d}.
\]
Setting $\epsilon=\frac{3\sqrt{\kappa }}{t-1}\frac{1}{N\sqrt{N d}}$ gives
\[
M(\epsilon)\le\left\{ N\left(t-1\right)\right\} ^{d}\le\left(NT\right)^{d},
\]
and we obtain a following bound for the first term in~\eqref{eq:prediction_error_main_term}:
\begin{equation}
\abs{\sum_{\tau=1}^{t-1}\sum_{i=1}^{N}A_{i,\tau}D_{i,\tau}^{\Impute{t-1}}\Context jt^{T}W_{t-1}^{-1}\Context i\tau}\le\frac{3}{\sqrt{\kappa}\phi N\sqrt{N}\left(t-1\right)}+4L_{1}\norm{\Context jt}_{W_{t-1}^{-1}}\sqrt{3N\log\frac{2NT}{\delta}}.
\label{eq:prediction_error_1st_term}
\end{equation}
To bound the second term in~\eqref{eq:prediction_error_main_term}, let
\[
P_{u}:=\sum_{\tau=1}^{u}\frac{\Error \tau\Indicator{\SelectionP{\Action \tau}\tau>\gamma}}{\SelectionP{\Action \tau}\tau}\Context jt^{T}W_{t-1}^{-1}\Context{\Action \tau}\tau,\quad P_{0}=0.
\]
Then $P_{u}$ is a martingale sequence with respect to the filtration $\mathcal{F}_{u}:=\History{u} \cup \{\Setofcontexts{1},\ldots,\Setofcontexts{t}\}$ and $\Filtration{0}:=\{\Setofcontexts{1},\ldots,\Setofcontexts{t}\}$ whose differences are bounded by
\begin{align*}
\abs{\Delta P_{u}}:=\abs{P_{u}-P_{u-1}}\le
&\abs{\frac{\Error u\Indicator{\SelectionP{\Action u}u>\gamma}}{\SelectionP{\Action u}u}\Context jt^{T}W_{t-1}^{-1}\Context{\Action u}u}\\
\le&\frac{2B}{\gamma}\abs{\Context jt^{T}W_{t-1}^{-1}\Context{\Action u}u}\\
\le&\frac{2B}{\gamma}\norm{\Context jt}_{W_{t-1}^{-1}}\Mineigen{W_{t-1}}^{-1/2}
\end{align*}
almost surely.
Since $W_{t-1}$ is $\Filtration{0}$-measurable, we can further bound the difference under the $\Filtration{0}$-measurable event~\eqref{eq:estimator_conditions} by
\[
\abs{\Delta P_{u}}\le\frac{2B}{\gamma}\norm{\Context jt}_{W_{t-1}^{-1}}\Mineigen{W_{t-1}}^{-1/2}\le\frac{2B}{\gamma\sqrt{\kappa\phi N(t-1)}}\norm{\Context jt}_{W_{t-1}^{-1}}.
\]
The sum of conditional variances is bounded by,
\begin{align*}
\mathcal{V}_{t-1}:=\sum_{\tau=1}^{t-1}\CE{\left(\Delta P_{\tau}\right)^{2}}{\mathcal{F}_{\tau}}
=&\sum_{\tau=1}^{t-1}\CE{\frac{\Error \tau^{2}\Indicator{\SelectionP{\Action \tau}\tau>\gamma}}{\SelectionP{\Action \tau}\tau^{2}}\left(\Context jt^{T}W_{t-1}^{-1}\Context{\Action \tau}\tau\right)^{2}}{\mathcal{F}_{\tau}}\\
\le&\gamma^{-1}\sum_{\tau=1}^{t-1}\CE{\frac{\Error \tau^{2}}{\SelectionP{\Action \tau}\tau}\left(\Context jt^{T}W_{t-1}^{-1}\Context{\Action \tau}\tau\right)^{2}}{\mathcal{F}_{\tau}}\\
=&\gamma^{-1}\sum_{\tau=1}^{t-1}\sum_{i=1}^{N}\CE{\CE{\Error \tau^{2}\left(\Context jt^{T}W_{t-1}^{-1}\Context{\Action \tau}\tau\right)^{2}}{\Filtration \tau,\Action \tau=i}}{\Filtration \tau}\\
=&\gamma^{-1}\sum_{\tau=1}^{t-1}\sum_{i=1}^{N}\mu^{\prime}\left(\Context i\tau^{T}\beta^{*}\right)\left(\Context jt^{T}W_{t-1}^{-1}\Context i\tau\right)^{2}\\
\le&\gamma^{-1}\norm{\Context jt}_{W_{t-1}^{-1}}^{2},
\end{align*}
almost surely.
The last inequality holds because of~\eqref{eq:SS_to_norm}.
Thus by Lemma~\ref{lem:freedman_inequalty}, for any $a>0$,
\begin{align*}
&\CP{\left\{ \abs{\sum_{\tau=1}^{t-1}\frac{\Error \tau}{\SelectionP{\Action \tau}\tau}\Context jt^{T}W_{t-1}^{-1}\Context{\Action \tau}\tau}>a\right\} \cap E_{t}}{\Filtration 0}\\
&\le\CP{\abs{P_{t-1}}>a}{\Filtration 0}\\
&=\CP{\left\{ \abs{P_{t-1}}>a\right\} \cap\left\{ \mathcal{V}_{t}\le\gamma^{-1}\norm{\Context jt}_{W_{t-1}^{-1}}^{2}\right\} }{\Filtration 0}\\
&\le2\exp\left(-\frac{a^{2}}{2\gamma^{-1}\norm{\Context jt}_{W_{t-1}^{-1}}^{2}+\frac{4aB\gamma^{-1}}{3\sqrt{\kappa\phi N\left(t-1\right)}}\norm{\Context jt}_{W_{t-1}^{-1}}}\right)\\
&\le2\exp\left(-\frac{a^{2}}{2\max\left\{ 2\gamma^{-1}\norm{\Context jt}_{W_{t-1}^{-1}}^{2},\frac{4aB\gamma^{-1}}{3\sqrt{\kappa\phi N\left(t-1\right)}}\norm{\Context jt}_{W_{t-1}^{-1}}\right\} }\right)\\
&=2\max\left\{ \exp\left(-\frac{\gamma a^{2}}{4\norm{\Context jt}_{W_{t-1}^{-1}}^{2}}\right),\exp\left(-\frac{a\gamma\sqrt{\kappa\phi Nt}}{\frac{4}{3}B\norm{\Context jt}_{W_{t-1}^{-1}}}\right)\right\}\\
&=2 \exp\left(-\frac{\gamma a^{2}}{4\norm{\Context jt}_{W_{t-1}^{-1}}^{2}}\right),
\end{align*}
where the last equality holds due to $t>\Exploration$.
Taking the last term equal to $\delta/(NT)$ gives
\begin{equation}
\abs{\sum_{\tau=1}^{t-1}\frac{\Error \tau}{\SelectionP{\Action \tau}\tau}\Context jt^{T}W_{t-1}^{-1}\Context{\Action \tau}\tau}\le2\norm{\Context jt}_{W_{t-1}^{-1}}\sqrt{\gamma^{-1}\log\frac{2NT}{\delta}}\le2\norm{\Context jt}_{W_{t-1}^{-1}}\sqrt{2N\log\frac{2NT}{\delta}}.
\label{eq:prediction_error_2nd_term}
\end{equation}

\paragraph{Step 3 Putting the results altogether:} 
From~\eqref{eq:prediction_error_main_term},~\eqref{eq:prediction_martingale_decomposition},~\eqref{eq:prediction_error_1st_term} and~\eqref{eq:prediction_error_2nd_term},
with probability at least $1-8\delta/T$, under the event $E_t$,
\begin{align*}
\Prederror jt\le&\mu^{\prime}\left(\Context jt^{T}\beta^{*}\right)\left\{ \left(2+4L_{1}\right)\norm{\Context jt}_{W_{t-1}^{-1}}\sqrt{3N\log\frac{2NT}{\delta}}+\frac{3}{\sqrt{\kappa}\phi N\sqrt{N}\left(t-1\right)}\right\} \\
&+\frac{D_{\mu,B,\lambda,S}-3L_{1}}{\phi^{3}\kappa^{3}\left(t-1\right)}\log\frac{4T}{\delta}\\
\le&\mu^{\prime}\left(\Context jt^{T}\beta^{*}\right)\left(2+4L_{1}\right)\norm{\Context jt}_{W_{t-1}^{-1}}\sqrt{3N\log\frac{2NT}{\delta}}+\frac{D_{\mu,B,\lambda,S}}{\phi^{3}\kappa^{3}\left(t-1\right)}\log\frac{4T}{\delta}.
\end{align*}
\end{proof}

\subsection{Proof of Lemma~\ref{lem:elliptical_potential_lemma}}
\begin{proof}
By Assumption~\ref{assump:boundedness} and~\ref{assump:mean_function} 
\begin{equation}
\max_{i\in[N]}\sqrt{w_{i,t}}s_{i,t}\le\sqrt{L_{1}\Mineigen{W_{t-1}}^{-1}}\le\sqrt{\frac{2L_{1}}{\kappa N\phi\left(t-1\right)}},
\label{eq:singleton_bound_phi}
\end{equation}
where the last inequality holds with probability at least $1-\delta/T$, by Corollary~\ref{cor:min_eigen_chernoff}.
Thus, with probability at least $1-\delta$,
\[
\sum_{t=\mathcal{\Exploration}}^{T}\max_{i\in[N]}\sqrt{w_{i,t}}s_{i,t}\le\sqrt{\frac{2L_{1}}{\kappa\phi N}}\sum_{t=\Exploration}^{T}\frac{1}{\sqrt{t-1}}\le2\sqrt{\frac{2L_{1}T}{\kappa\phi N}}.
\]
\end{proof}

\subsection{Proof of Theorem~\ref{thm:regret_bound}}
\label{subsec:proof_of_regret_bound}
\begin{proof}
By Assumption~\ref{assump:boundedness} and~\ref{assump:mean_function}, 
\begin{equation}
R(T) \le \Exploration S L_1 + \sum_{t=\Exploration}^{T} \Regret{t},
\label{eq:regret_proof_exploration_decompose}
\end{equation}
almost surely.
Define the event $E_{\gamma}:=\bigcap_{t=\Exploration}^{T}\left\{ \Action t\in S_{t}\right\} \cap\bigcap_{t=1}^{T}\left\{ \SelectionP{\Action t}t>\gamma\right\}$.
By Lemma~\ref{lem:superunsaturated_arms} and~\eqref{eq:regret_decompostion} for any $x>0$,
\begin{align*}
\Probability\left(\sum_{t=\Exploration}^{T}\Regret t>x\right)\le&\Probability\left(\left\{ \sum_{t=\Exploration}^{T}\Regret t>x\right\} \cap E_{\gamma}\right)+\delta\\
\le&\Probability\left(\left\{ \sum_{t=\Exploration}^{T}2\max_{i\in[N]}\left\{ \Prederror it+\sqrt{\kappa L_{1}}\norm{\Context it}_{W_{t-1}^{-1}}\right\} >x\right\} \cap E_{\gamma}\right)+\delta.
\end{align*}
Denote $C_{L_1}:=2+4L_1$.
Applying Lemma~\ref{lem:prediction_error} and~\eqref{eq:regret_decompostion_norms} gives
\begin{align*}
\Probability&\left(\sum_{t=\Exploration}^{T}\Regret t>x\right)\\
&\le\Probability\left(\sum_{t=\Exploration}^{T}\left\{ 4C_{L_{1}}\sqrt{3L_{1}N\log\frac{2NT}{\delta}}\max_{i\in[N]}\sqrt{w_{i,t}}s_{i,t}+\frac{D_{\mu,B,\lambda,S}}{\phi^{3}\kappa^{3}\left(t-1\right)}\log\frac{4T}{\delta}\right\} >x\right)+9\delta\\
&\le\Probability\left(4C_{L_{1}}\sqrt{3L_{1}N\log\frac{2NT}{\delta}}\sum_{t=\Exploration}^{T}\max_{i\in[N]}\sqrt{w_{i,t}}s_{i,t}+\frac{D_{\mu,B,\lambda,S}}{\phi^{3}\kappa^{3}}\log T\log\frac{4T}{\delta}>x\right)+9\delta.
\end{align*}
Applying~\eqref{eq:elliptical_bound_phi} in Lemma~\ref{lem:elliptical_potential_lemma} gives
\begin{align*}
\Probability&\left(\sum_{t=\Exploration}^{T}\Regret t\!>\!x\right)\\
&\le\!\Probability\left(4C_{L_{1}}\sqrt{3L_{1}N\log\frac{2NT}{\delta}}\sqrt{\frac{8L_{1}T}{\kappa\phi N}}+\!\frac{D_{\mu,B,\lambda,S}}{\phi^{3}\kappa^{3}}\log T\log\frac{4T}{\delta}>x\right)+10\delta,
\end{align*}
and setting
\[
x=8C_{L_{1}}L_{1}\sqrt{\frac{6T}{\kappa\phi}\log\frac{2NT}{\delta}}+\frac{D_{\mu,B,\lambda,S}}{\phi^{3}\kappa^{3}}\log T\log\frac{4T}{\delta},
\]
proves the regret bound.
\end{proof}

\subsection{Relationship between the dimension and the minimum eigenvalue of contexts}
\label{subsec:phi_d_condition}
As \citet{kim2021doubly} pointed out, due to Assumption~\ref{assump:boundedness} and~\ref{assump:minimum_eigenvalue},
\[
d\phi\le d\Mineigen{\frac{1}{N}\sum_{i=1}^{N}\Expectation\left[\Context it\Context it^{T}\right]}\le\Trace{\frac{1}{N}\sum_{i=1}^{N}\Expectation\left[\Context it\Context it^{T}\right]}=\frac{1}{N}\sum_{i=1}^{N}\Expectation\left[\Trace{\Context it\Context it^{T}}\right]\le1.
\]
This implies $\phi^{-1} \ge d$.
However, \citet{bastani2021mostly} and \citet{kim2021doubly} identified the cases when $\phi^{-1}=O(d)$.
The cases include when the average of covariance of contexts across all arms has AR(1), tri-diagonal, block diagonal matrices.
They also cover well-known distributions for contexts, such as uniform distribution, truncated multivariate normal distribution on the unit ball of $\Real^{d}$.
In the following lemma, we present a condition which covers the aforementioned distributions.

\begin{lem}
\label{lem:phi_d_condition_appendix}
Let $p_i$ be the density for the marginal distribution of $X_i \in \Real^{d}$.
For $i\in[N]$, suppose that the density of contexts $p_i$ satisfies $0 < p_{\min} < p_i(x)$ for all $x$ such that $\|x\|_2 \le 1$.
Then we have
\[
\frac{1}{N}\sum_{i=1}^{N}\Expectation\left[X_{i}X_{i}^{T}\right]\succeq\left\{ \frac{p_{\min}\text{vol}\left(\mathcal{B}_{d}\right)}{\left(d+2\right)}\right\} I_{d},
\]
where $\mathcal{B}_d$ represents the $l_2$-unit ball in $\Real^{d}$.
\end{lem}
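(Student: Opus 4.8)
The plan is to reduce the positive-semidefinite ordering to a uniform lower bound on quadratic forms and then to a single geometric integral. Writing $M := \frac{1}{N}\sum_{i=1}^{N}\Expectation[X_{i}X_{i}^{T}]$, the claim $M \succeq c\, I_d$ with $c = p_{\min}\,\text{vol}(\mathcal{B}_d)/(d+2)$ is equivalent to $u^{T}Mu \ge c$ for every unit vector $u \in \Real^{d}$. Since $u^{T}Mu = \frac{1}{N}\sum_{i=1}^{N}\Expectation[(u^{T}X_{i})^{2}]$, it suffices to prove the per-arm bound $\Expectation[(u^{T}X_{i})^{2}] \ge c$ for each $i$ and each unit $u$, after which averaging over $i$ gives the result.

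For a fixed arm and direction I would write $\Expectation[(u^{T}X_{i})^{2}] = \int_{\Real^{d}}(u^{T}x)^{2}p_{i}(x)\,dx$. Because the integrand is nonnegative, restricting the domain to the unit ball only decreases the integral, and on that set the hypothesis $p_{i}(x) > p_{\min}$ yields
\[
\Expectation[(u^{T}X_{i})^{2}] \ge p_{\min}\int_{\norm{x}_2 \le 1}(u^{T}x)^{2}\,dx =: p_{\min}\, J(u).
\]
The remaining task is to evaluate $J(u)$, which is purely geometric and independent of the distribution.

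To compute $J(u)$ I would exploit two symmetries. Rotational invariance of the unit ball makes $J(u)$ independent of the direction, so I may take $u = e_{1}$ and evaluate $\int_{\norm{x}_2 \le 1}x_{1}^{2}\,dx$; symmetry among the coordinates then gives $\int_{\norm{x}_2 \le 1}x_{1}^{2}\,dx = \frac{1}{d}\int_{\norm{x}_2 \le 1}\norm{x}_2^{2}\,dx$. Passing to spherical coordinates, $\int_{\norm{x}_2 \le 1}\norm{x}_2^{2}\,dx = S_{d-1}\int_{0}^{1}r^{d+1}\,dr = S_{d-1}/(d+2)$, where $S_{d-1}$ denotes the surface area of the unit sphere; and since $\text{vol}(\mathcal{B}_d) = S_{d-1}/d$, this equals $d\,\text{vol}(\mathcal{B}_d)/(d+2)$. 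Dividing by $d$ gives $J(u) = \text{vol}(\mathcal{B}_d)/(d+2)$, so $\Expectation[(u^{T}X_{i})^{2}] \ge p_{\min}\,\text{vol}(\mathcal{B}_d)/(d+2) = c$, which establishes the per-arm bound and hence the lemma.

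The computation is elementary and the only genuine content is the evaluation of $J(u)$; I expect no real obstacle beyond carrying out the spherical-coordinate integral and the volume/surface-area identity $\text{vol}(\mathcal{B}_d) = S_{d-1}/d$ correctly. One point worth stating explicitly is that restricting to the unit ball loses nothing here: Assumption~\ref{assump:boundedness} places the support of each $X_{i}$ inside $\{\norm{x}_2 \le 1\}$, so the restricted integral already equals the full second moment, and in any case a lower bound is all that the argument requires.
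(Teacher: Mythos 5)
Your proof is correct and follows essentially the same route as the paper's: lower-bound the density by $p_{\min}$ and reduce to the second-moment matrix of Lebesgue measure on the unit ball, whose value $\frac{\text{vol}(\mathcal{B}_d)}{d+2}I_d$ the paper simply cites from Lemma 2 of \citet{bastani2021mostly} while you compute it directly via the quadratic-form reduction and spherical coordinates. Your version is self-contained and the computation checks out, so there is nothing to correct.
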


\begin{rem}
When the marginal distribution of $X_i$ is uniform, then $p_{\min} = 1/\text{vol}\left(\mathcal{B}_{d}\right)$ and thus $\phi^{-1}=d+2$.
For the truncated multivariate normal distribution with mean $0_d$ and covariance $\Sigma$, $p_{\min}=\exp\left(-\frac{\Mineigen{\Sigma}^{-1}-\Maxeigen{\Sigma}^{-1}}{2}\right)/\text{vol}\left(\mathcal{B}_{d}\right)$ and $\phi^{-1}=\left(d+2\right)\exp\left(\frac{\Mineigen{\Sigma}^{-1}-\Maxeigen{\Sigma}^{-1}}{2}\right)$.
\end{rem}

\begin{proof}
For each $i\in[N]$,
\[
\Expectation\left[X_{i}X_{i}^{T}\right]=\int_{\Real^{d}}xx^{T}p(x)dx\succeq p_{\min}\int_{\Real^{d}}xx^{T}dx=\left\{ \frac{p_{\min}\text{vol}\left(\mathcal{B}_{d}\right)}{\left(d+2\right)}\right\} I_{d},
\]
where the last equality holds due to Lemma 2 in \citet{bastani2021mostly}.
\end{proof}

\section{Proof of Theorem~\ref{thm:fast_regret_bound}}
\label{sec:fast_regret_bound_proof}

We first introduce the intuition and challenges to prove the logarithmic cumulative regret bound~\eqref{eq:fast_regret_bound}.
For each $i\in[N]$ and $t\in[T]$, let $\Prederror{i}{t}:=|\mu(\Context{i}{t}^{T}\Estimator{t-1})-\mu(\Context{i}{t}^{T}\beta^{*})|$ and
\begin{equation}
\mathcal{G}_{i}(t):= \Prederror{i}t+\Prederror{\Optimalarm t}t+\sqrt{\kappa L_{1}}\sqrt{\norm{\Context{i}t}_{W_{t-1}^{-1}}^{2}\!\!+\!\norm{\Context{\Optimalarm t}t}_{W_{t-1}^{-1}}^{2}},
\label{eq:threshold_value}
\end{equation} 
which is a threshold value for an arm $i$ to be super-unsaturated defined in~\eqref{eq:super_unsaturated_arms}.
Recall that $\Diff{i}{t}:=\mu(\Context{\Optimalarm{t}}{t}^{T}\beta^{*})-\mu(\Context{i}{t}^{T}\beta^{*})$ is the gap between the expected reward of arm $i$ and that of the optimal arm.
If the threshold value $\mathcal{G}_{j}(t)$ is less than the gap $\Diff{j}{t}$ for all $j \neq \Optimalarm{t}$, then all arms except for the optimal arm are {\it not} super-unsaturated.
In other words, the event
$ G_{t}:=\bigcap_{j\neq\Optimalarm t}\left\{ \Diff jt>\mathcal{G}_{j}(t)\right\} $
implies $S_t=\{\Optimalarm{t}\}$.
By resampling, \texttt{DDRTS-GLM} chooses the arm in $S_t$ with high probability (Lemma~\ref{lem:superunsaturated_arms}) and thus the optimal arm with high probability.
Thus, the instantaneous regret bound is bounded by
\begin{align*}
\Regret{t} &\le \mathcal{G}_{\Action{t}}(t)\Indicator{\Action{t}\neq\Optimalarm{t}}\\
&\le \mathcal{G}_{\Action{t}}(t)\Indicator{G_t^{c}} \\
&= \mathcal{G}_{\Action{t}}(t) \Probability\left(G_t^{c}\right) + \mathcal{G}_{\Action{t}}(t) \left\{ \Indicator{G_t^{c}} - \Probability\left(G_t^{c}\right) \right\}
\end{align*}
where the second inequality holds because the event $G_t$ implies that $\Action{t}=\Optimalarm{t}$ with high probability.
To bound the first term, we use
\begin{equation}
\Probability\left(G_{t}^{c}\right)= \Probability\left(\bigcup_{j\neq\Optimalarm t}\left\{ \Diff jt\le\mathcal{G}_{j}(t)\right\} \right) \le\Probability\left(\min_{j\neq\Optimalarm t}\Diff jt\le\max_{j\neq\Optimalarm t}\mathcal{G}_{j}(t)\right)\le h\max_{j\neq\Optimalarm t}\mathcal{G}_{j}(t),
\label{eq:G_t_probability_bound}
\end{equation}
to have
\[
\mathcal{G}_{\Action{t}}(t) \Probability\left(G_t^{c}\right) \le h \mathcal{G}_{\Action{t}}(t) \max_{j \neq \Optimalarm{t}} \mathcal{G}_{j}(t)
\le h \left\{ \max_{j\in[N]} \mathcal{G}_{i}(t) \right\}^{2}.
\]
To obtain an logarithmic cumulative bound, we need
\begin{equation}
\max_{j\in[N]}\mathcal{G}_{j}(t) = \tilde{O}(t^{-1/2}),
\label{eq:challenging_problem}
\end{equation} 
in terms of $t$.

Proving~\eqref{eq:challenging_problem} is challenging when only selected contexts are used.
Let $\widehat{\beta}^{A}_{t}$ be the solution to $\sum_{\tau=1}^{t}\left\{ Y_{\tau}-\mu\left(\Context{\Action{\tau}}{\tau}^{T}\beta\right)\right\} \Context{\Action{\tau}}{\tau}=0$, which is the score equation consists of selected contexts and rewards.
Denote the Gram matrix $A_t$ consists of selected contexts only, i.e., $A_t:=\sum_{\tau=1}^{t}\mu^{\prime}(\Context{\Action{\tau}}{\tau}^T\beta^{*})\Context{\Action{\tau}}{\tau}\Context{\Action{\tau}}{\tau}^{T}+\lambda I$.
Then, the threshold value for an arm $i$ to be super-unsaturated is
\[
\mathcal{G}_{i}^{A}(t):= \APrederror{i}{t} + \APrederror{\Optimalarm{t}}{t} +\sqrt{\kappa L_{1}}\sqrt{\norm{\Context{i}t}_{A_{t-1}^{-1}}^{2}\!\!+\!\norm{\Context{\Optimalarm t}t}_{A_{t-1}^{-1}}^{2}},
\]
where $\APrederror{i}{t}:= |\mu(\Context{i}{t}^T\widehat{\beta}_{t-1}^{A})-\mu(\Context{i}{t}^{T}\beta^{*})|$.
Lemma 3 in \citet{faury2020improved} proves
\[
\APrederror{i}{t} = O\left(\sqrt{d \log t}\right) \norm{\Context{i}{t}}_{A_{t-1}^{-1}},
\]
for all $t\in[T]$ and $i\in[N]$ and this implies
\[
\max_{j\in[N]} \mathcal{G}_{j}^{A}(t) =
O\left(\sqrt{d\log t}\right) \left( \max_{j\in[N]} \norm{\Context{j}{t}}_{A_{t-1}^{-1}} + \norm{\Context{\Optimalarm{t}}{t}}_{A_{t-1}^{-1}} \right).
\]
To prove $\max_{j\in[N]} \mathcal{G}_{j}^{A}(t) = \tilde{O}(t^{-1/2})$, we need
\begin{equation}
\max_{j\in[N]} \norm{\Context{j}{t}}_{A_{t-1}^{-1}} + \norm{\Context{\Optimalarm{t}}{t}}_{A_{t-1}^{-1}} = \tilde{O}\left(t^{-1/2}\right).
\label{eq:challenging_norm_bound}
\end{equation}
Obviously,~\eqref{eq:challenging_norm_bound} is implied by $\Mineigen{A_{t-1}} = \Omega (t)$.
However, proving a lower bound for the minimum eigenvalue is reported to be challenging (See Section 5 in \citet{li2017provably}).

We solve this challenging problem by developing the DDR estimator which uses contexts from all arms.
Instead of $A_t$, we use $W_t$ and prove $\Mineigen{W_t} = \Omega(Nt)$, which implies 
\[
\max_{j \in [N] } \|\Context{j}{t}\|_{W_{t-1}^{-1}}+\|\Context{\Optimalarm{t}}{t}\|_{W_{t-1}^{-1}} = O\left(\frac{1}{\sqrt{\phi N t}}\right).
\]
By Lemma~\ref{lem:prediction_error}, 
\[
\max_{j \in [N] } \mathcal{G}_{j}(t) = O(\sqrt{N \log NT}) \left(\max_{j \in [N]} \|\Context{j}{t}\|_{W_{t-1}^{-1}}+\|\Context{\Optimalarm{t}}{t}\|_{W_{t-1}^{-1}}\right)= O\left(\sqrt{\frac{\log NT}{\phi t}}\right).
\]
This proves~\eqref{eq:challenging_problem} and a logarithmic cumulative regret bound.
The detailed proof is as follows:

\begin{proof}
Let $\MarginExp \in (\Exploration,T]$ be the number of rounds for the exploration which will be defined later.
By Lemma~\ref{lem:superunsaturated_arms}, $\Action{t}$ is super-unsaturated for all $t\in(\MarginExp,T]$, and
\[
R(T)\le2L_{1}S^{*}\MarginExp+\sum_{t=\MarginExp}^{T}\Regret t=2L_{1}S^{*}\MarginExp+\sum_{t=\MarginExp}^{T}\Regret t\Indicator{\Action t\in S_{t}},
\]
with probability at least $1-\delta$.
Define an $\History{t}$-measurable event,
\[
G_{t}:=\bigcap_{i\neq\Optimalarm t}\left\{ \Diff jt>\mathcal{G}_{i}(t)\right\},
\]
where $\Diff{i}{t}:=\mu(\Context{\Optimalarm{t}}{t}^{T}\beta^{*})-\mu(\Context{i}{t}^{T}\beta^{*})$ and $\mathcal{G}_{i}(t)$ is defined in~\eqref{eq:threshold_value}.
By definition, when $G_t$ holds, then the super-unsaturated arm has only the optimal arm, i.e. $S_t:=\{\Optimalarm{t}\}$.
Thus, $\{\Action{t}\in S_t\}\cap G_t$ implies $\{\Action{t}=\Optimalarm{t}\}$, and
\begin{align*}
R(T)\le&2L_{1}S^{*}\MarginExp+\sum_{t=\MarginExp}^{T}\Regret t\Indicator{\Action t\in S_{t}}\Indicator{G_{t}}+\sum_{t=\MarginExp}^{T}\Regret t\Indicator{\Action t\in S_{t}}\Indicator{G_{t}^{c}}
\\=&2L_{1}S^{*}\MarginExp+\sum_{t=\MarginExp}^{T}\Regret t\Indicator{\Action t\in S_{t}}\Indicator{G_{t}^{c}}.
\end{align*}
Denote $C_{L_1}:=2+4L_1$.
Because $\Action{t}$ is super-unsaturated, we can use~\eqref{eq:regret_decompostion_norms} to have with probability at least $1-8\delta$,
\begin{equation}
\begin{split}
&\sum_{t=\MarginExp}^{T}\Regret t\Indicator{\Action t\in S_{t}}\Indicator{G_{t}^{c}}\\
&\le\sum_{t=\MarginExp}^{T}\left\{ 4C_{L_{1}}\sqrt{3L_{1}N\log\frac{2NT}{\delta}}\max_{i\in[N]}\sqrt{w_{i,t}}s_{i,t}+\frac{D_{\mu,B,\lambda,S}}{\phi^{3}\kappa^{3}\left(t-1\right)}\log\frac{4T}{\delta}\right\} \Indicator{G_{t}^{c}}\\
&\le\sum_{t=\MarginExp}^{T}\left\{ 4C_{L_{1}}L_{1}\sqrt{\frac{6}{\kappa\phi\left(t-1\right)}\log\frac{2NT}{\delta}}+\frac{D_{\mu,B,\lambda,S}}{\phi^{3}\kappa^{3}\left(t-1\right)}\log\frac{4T}{\delta}\right\} \Indicator{G_{t}^{c}}\\
&:=\sum_{t=\MarginExp}^{T}Q_{t}\Indicator{G_{t}^{c}},
\end{split}
\label{eq:Q_definition}
\end{equation}
where the second inequality holds due to~\eqref{eq:singleton_bound_phi}.
Define a filtration $\Filtration{t}:=\History{t}\cup\{\Action{t},\Reward{t}\}$.
Subtracting and adding $\CP{G_t^{c}}{\Filtration{t-1}}$ gives
\begin{align*}
\sum_{t=\MarginExp}^{T}Q_{t}\Indicator{G_{t}^{c}}=&\sum_{t=\MarginExp}^{T}Q_{t}\left\{ \Indicator{G_{t}^{c}}-\CP{G_{t}^{c}}{\Filtration{t-1}}\right\} +\sum_{t=\MarginExp}^{T}Q_{t}\CP{G_{t}^{c}}{\Filtration{t-1}}\\
\le&2\sqrt{\sum_{t=\MarginExp}^{T}Q_{t}^{2}\log\frac{1}{\delta}}+\sum_{t=\MarginExp}^{T}Q_{t}\CP{G_{t}^{c}}{\Filtration{t-1}},
\end{align*}
where the second inequality holds with probability at least $1-\delta$ due to the Azuma-Hoeffding inequality (Lemma~\ref{lem:Azuma_Hoeffding_inequality}).
By the margin condition (Assumption~\ref{assum:margin_condition}),
\begin{align*}
\CP{G_{t}^{c}}{\Filtration{t-1}}
=&\CP{\bigcup_{i\neq\Optimalarm t}\left\{ \Diff it\le\mathcal{G}_{i}(t)\right\} }{\Filtration{t-1}}\\
\le&\CP{ \min_{i\neq\Optimalarm t}\Diff it\le \max_{i\in[N]}\mathcal{G}_{i}(t) }{\Filtration{t-1}}\\
\le&\CP{\min_{i\neq\Optimalarm t}\Diff it\le Q_{t}}{\Filtration{t-1}}\\
\le&hQ_{t},
\end{align*}
where the last inequality holds because of
\begin{align*}
\max_{i\in[N]}\mathcal{G}_{i}(t)\le&2\max_{i\in[N]}\left\{ \Prederror it+\sqrt{\kappa L_{1}}\norm{\Context it}_{W_{t-1}^{-1}}\right\}\\
\le&2\left\{ \mu^{\prime}\left(\Context it^{T}\beta^{*}\right)C_{L_{1}}\sqrt{3N\log\frac{2NT}{\delta}}+\sqrt{\kappa L_{1}}\right\} \max_{i\in[N]}s_{i,t}+\frac{D_{\mu,B,\lambda,S}}{\phi^{3}\kappa^{3}\left(t-1\right)}\log\frac{4T}{\delta}\\
\le&4C_{L_{1}}\sqrt{3N\log\frac{2NT}{\delta}}\max_{i\in[N]}\sqrt{w_{i,t}}s_{i,t}+\frac{D_{\mu,B,\lambda,S}}{\phi^{3}\kappa^{3}\left(t-1\right)}\log\frac{4T}{\delta}\\
\le&Q_{t},
\end{align*}
and definition of $Q_t$ in~\eqref{eq:Q_definition}.
To use the margin condition, we need $Q_t \le \rho_0$.
To verify this, we set
\begin{equation}
\MarginExp:=\frac{6\left(4C_{L_{1}}L_{1}+D_{\mu,B,\lambda,S}\right)^{2}}{\phi^{4}\kappa^{4}\rho_{0}^{2}}\log\frac{2NT}{\delta},
\label{eq:tau_0}
\end{equation}
which gives
\begin{align*}
Q_{t}\le&\frac{1}{\sqrt{t-1}}\left\{ 4C_{L_{1}}L_{1}\sqrt{\frac{6}{\kappa\phi}\log\frac{2NT}{\delta}}+\frac{D_{\mu,B,\lambda,S}}{\phi^{3}\kappa^{3}\Exploration}\log\frac{4T}{\delta}\right\} \\
\le&\frac{1}{\sqrt{t-1}}\left\{ 4C_{L_{1}}L_{1}\sqrt{\frac{6}{\kappa\phi}\log\frac{2NT}{\delta}}+\frac{D_{\mu,B,\lambda,S}}{\phi^{2}\kappa^{2}}\sqrt{\frac{\kappa}{Nd}\log\frac{4T}{\delta}}\right\} \\
\le&\frac{4C_{L_{1}}L_{1}+D_{\mu,B,\lambda,S}}{\phi^{2}\kappa^{2}\sqrt{t-1}}\sqrt{6\log\frac{2NT}{\delta}}\\
\le& \rho_0,
\end{align*}
for $t\in(\MarginExp,T]$.
The second inequality holds due to the definition of $\Exploration$ in~\eqref{eq:exploration_term}.
Thus we have with probability at least $1-10\delta$,
\begin{equation}
R(T)\le2L_{1}S\MarginExp+2\sqrt{\sum_{t=\MarginExp}^{T}Q_{t}^{2}\log\frac{1}{\delta}}+h\sum_{t=\MarginExp}^{T}Q_{t}^{2}.
\label{eq:Q_t_fast_rate}
\end{equation}
Note that
\begin{align*}
\sum_{t=\MarginExp}^{T}Q_{t}^{2}\le&2\sum_{t=\MarginExp}^{T}\frac{96C_{L_{1}}^{2}L_{1}^{2}}{\kappa\phi\left(t-1\right)}\log\frac{2NT}{\delta}+\frac{2D_{\mu,B,\lambda,S}^{2}}{\phi^{6}\kappa^{6}\left(t-1\right)^{2}}\log^{2}\frac{4T}{\delta}\\
\le&\frac{192L_{1}^{2}C_{L_{1}}^{2}}{\kappa\phi}\log T\log\frac{2NT}{\delta}+\frac{2D_{\mu,B,\lambda,S}^{2}}{\phi^{6}\kappa^{6}\MarginExp}\log^{2}\frac{4T}{\delta}\\
\le&\frac{192L_{1}^{2}C_{L_{1}}^{2}}{\kappa\phi}\log T\log\frac{2NT}{\delta}+\frac{2\rho_{0}^{2}}{\phi^{2}\kappa^{2}}\log\frac{4T}{\delta}.
\end{align*}
Thus with probability at least $1-10\delta$,
\[
R(T) \le 2L_{1}S\MarginExp+O\left(\phi^{-2}\kappa^{-2}\log NT\right)+\frac{192hL_{1}^{2}C_{L_{1}}^{2}}{\kappa\phi}\log T\log\frac{2NT}{\delta}.
\]
\end{proof}

\section{Proof of an error bound for proposed estimators}
\label{sec:general_estimator_bound}

In this section, we provide a lemma which implies error bounds for both the imputation estimator $\Impute{t}$ and for DDR estimator.

\begin{lem}
\label{lem:general_DR_bound}
Suppose the Assumptions 1-5 and the event in~\eqref{eq:action_in_superunsaturated_arms} holds with $\gamma \in [1/(N+1),1/N)$.
At round $t\in(\Exploration,T]$, let $\Estimator{t}$ be the solution of
\[
U_{t}(\beta):=\sum_{\tau=1}^{t}\sum_{i=1}^{N}\left\{ \DRreward i\tau{\breve{\beta}}-\mu(\Context i\tau^{T}\beta)\right\} \Context i\tau-\lambda\beta=0,
\]
where $\DRreward{i}{\tau}{\breve{\beta}}$ is the pseudo reward defined in~\eqref{eq:pseudo_reward} whose the imputation estimator is $\breve{\beta}$.
Suppose $\breve{\beta} \in \mathcal{B} \subset \mathcal{B}_{r}^{*}$ in Assumption~\ref{assump:mean_function} and let $\mathcal{N}(\epsilon, \|\cdot\|_2, \mathcal{B})$ be the $\epsilon$-covering number of $\mathcal{B}$.
Let $\rho := \max_{\beta\in\mathcal{B}} \|\beta-\beta^{*}\|_2$.
Then with probability at least $1-3\delta/T$,
\[
\norm{\Estimator t-\beta^{*}}_{2}\le\frac{3L_{1}\epsilon}{\kappa \phi}+\frac{6\rho L_{1}}{\kappa \phi\sqrt{t}}\sqrt{8\log\frac{4T\mathcal{N}(\epsilon,\|\cdot\|_{2},\mathcal{B})}{\delta}}+\frac{8B}{\kappa\phi\sqrt{t}}\sqrt{\log\frac{4T}{\delta}}+\frac{\lambda S^{*}}{\kappa\phi Nt}.
\]
for any $\lambda > 0$ and $\epsilon>0$.
\end{lem}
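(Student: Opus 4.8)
The plan is to treat $\Estimator{t}$ as a Z-estimator and exploit the strong monotonicity of the penalized pseudo-reward score $U_t$. Since $\mu=b'$ is nondecreasing, $-U_t$ is the gradient of a convex function, so for the root $\Estimator{t}$ of $U_t(\Estimator{t})=0$ a mean-value/one-point argument gives
\[
\Mineigen{\bar{W}_t}\,\norm{\Estimator t-\beta^{*}}_2^2\le(\Estimator t-\beta^{*})^{T}U_t(\beta^{*})\le\norm{\Estimator t-\beta^{*}}_2\,\norm{U_t(\beta^{*})}_2,
\]
where $\bar{W}_t:=\sum_{\tau=1}^{t}\sum_{i=1}^{N}\mu'(\Context i\tau^{T}\bar\beta)\Context i\tau\Context i\tau^{T}+\lambda I$ for some $\bar\beta$ on the segment joining $\Estimator{t}$ and $\beta^{*}$. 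This reduces the claim to (i) a lower bound on $\Mineigen{\bar W_t}$ and (ii) an upper bound on $\norm{U_t(\beta^{*})}_2$, since then $\norm{\Estimator t-\beta^{*}}_2\le\norm{U_t(\beta^{*})}_2/\Mineigen{\bar W_t}$. For (i), provided the whole segment lies in $\mathcal{B}_r^{*}$, Assumption~\ref{assump:mean_function} gives $\mu'\ge\kappa$, hence $\bar W_t\succeq\kappa\sum_{\tau,i}\Context i\tau\Context i\tau^{T}+\lambda I$, and Corollary~\ref{cor:min_eigen_chernoff} (spending probability $\delta/T$) yields $\Mineigen{\bar W_t}\ge\kappa\phi Nt/2$. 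The segment lies in $\mathcal{B}_r^{*}$ as soon as $\norm{\Estimator t-\beta^{*}}_2\le r$; I would close this with the standard localization argument (for $t>\Exploration$ the resulting bound is strictly below $r$, so the minimizer of the strictly concave objective cannot reach $\partial\mathcal{B}_r^{*}$).

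For (ii) I would decompose the pseudo-reward residual into an imputation part and a noise part, exactly as in the prediction-error proof:
\[
U_t(\beta^{*})=\sum_{\tau=1}^{t}\sum_{i=1}^{N}A_{i,\tau}D_{i,\tau}^{\breve{\beta}}\Context i\tau+\sum_{\tau=1}^{t}\frac{\Error\tau}{\SelectionP{\Action\tau}\tau}\Context{\Action\tau}\tau-\lambda\beta^{*},
\]
with $A_{i,\tau}:=1-\Indicator{\Action\tau=i}/\SelectionP i\tau$ and $D_{i,\tau}^{\breve{\beta}}:=\mu(\Context i\tau^{T}\breve{\beta})-\mu(\Context i\tau^{T}\beta^{*})$. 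The regularization term contributes $\norm{\lambda\beta^{*}}_2\le\lambda S^{*}$, which after division by $\Mineigen{\bar W_t}$ produces the last summand $\lambda S^{*}/(\kappa\phi Nt)$. The noise sum is a vector martingale (its conditional mean vanishes because $\CE{\Error\tau}{\History\tau,\Action\tau}=0$) whose increments have $\ell_2$-norm at most $2B/\gamma$ on the event~\eqref{eq:action_in_superunsaturated_arms}; invoking a \emph{dimension-free} vector Azuma--Hoeffding bound (the increment caps energy at $\sum_\tau\norm{\cdot}^2=O((B/\gamma)^2 t)$ regardless of $d$, so no $\sqrt d$ leaks in) gives norm $O\big((B/\gamma)\sqrt{t\log(T/\delta)}\big)$. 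Crucially $\gamma^{-1}\le N+1$, so dividing by $\Mineigen{\bar W_t}\ge\kappa\phi Nt/2$ cancels the $N$ and yields $8B\,\kappa^{-1}\phi^{-1}t^{-1/2}\sqrt{\log(4T/\delta)}$.

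The imputation sum is the delicate term because $\breve{\beta}$ is random, so I would control it \emph{uniformly} over $\mathcal{B}$ through an $\epsilon$-net $\{\beta_1,\dots,\beta_{\mathcal{N}(\epsilon,\norm\cdot_2,\mathcal{B})}\}$. The discretization error uses the Lipschitz bound $|D_{i,\tau}^{\breve{\beta}}-D_{i,\tau}^{\beta_m}|\le L_1\epsilon$ together with $\sum_{\tau,i}|A_{i,\tau}|\le t(N+\gamma^{-1})=O(Nt)$, giving a deterministic remainder $O(L_1\epsilon Nt)$ that becomes $3L_1\epsilon/(\kappa\phi)$ after division. At each fixed net point $\beta_m$, $\sum_{\tau,i}A_{i,\tau}D_{i,\tau}^{\beta_m}\Context i\tau$ is a vector martingale in the history filtration (directly, since $\CE{A_{i,\tau}}{\History\tau}=0$, so the iid-context device of Lemma~\ref{lem:prediction_error} is not even needed here) with per-step norm increment at most $(N+\gamma^{-1})L_1\rho=O(NL_1\rho)$; the dimension-free vector deviation bound plus a union bound over the $\mathcal{N}(\epsilon,\norm\cdot_2,\mathcal{B})$ points, divided by $\Mineigen{\bar W_t}$, gives precisely $6\rho L_1\kappa^{-1}\phi^{-1}t^{-1/2}\sqrt{8\log(4T\mathcal{N}(\epsilon,\norm\cdot_2,\mathcal{B})/\delta)}$ after the $N$-cancellation. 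Collecting the four contributions and allocating the budget ($\delta/T$ to the eigenvalue bound and to each martingale deviation) proves the stated inequality with probability at least $1-3\delta/T$.

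The hard part will be the uniform control of the imputation martingale over the random $\breve{\beta}$: a clean bound requires simultaneously (a) a dimension-free vector martingale inequality, so that no $\sqrt d$ enters the rate; (b) an almost-sure increment bound, which holds only on the event~\eqref{eq:action_in_superunsaturated_arms} and hence needs the stopped-martingale substitution of Lemma~\ref{lem:chung_lemma}; and (c) a net radius $\epsilon$ chosen small enough (e.g.\ $\epsilon\asymp\rho/(Nt)$) that the discretization remainder is dominated while $\log\mathcal{N}(\epsilon,\norm\cdot_2,\mathcal{B})$ stays controlled. The localization step in (i) is a secondary but genuine technical point.
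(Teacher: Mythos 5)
Your proposal is correct and follows essentially the same route as the paper: the same decomposition of $U_t(\beta^{*})$ into an imputation part, a noise part, and the $\lambda\beta^{*}$ term; the same $\epsilon$-net over $\mathcal{B}$ combined with the stopped-martingale substitution (Lemma~\ref{lem:chung_lemma}) and the dimension-free Hilbert-valued martingale bound (Lemma~\ref{lem:Hibert_concentration} via Lemma~\ref{lem:dimension_reduction}); the same use of Corollary~\ref{cor:min_eigen_chernoff} for the eigenvalue lower bound; and the same $3\delta/T$ probability accounting. The only divergence is the inversion step: you use the strong-monotonicity inequality $\Mineigen{\bar{W}_t}\norm{\Estimator{t}-\beta^{*}}_2^2\le(\Estimator{t}-\beta^{*})^{T}U_t(\beta^{*})$, whereas the paper applies the smooth-injection lemma (Lemma~\ref{lem:smooth_injection}) on the sphere $\{\beta:\norm{\beta-\beta^{*}}_2=b_t\}$ with $b_t<r$, which sidesteps the localization issue you flag because it only needs $\mu^{\prime}\ge\kappa$ on that sphere rather than along the whole segment to an a priori unlocalized $\Estimator{t}$.
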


\begin{proof}
Denote the event
\[
E_{\gamma}:= \bigcap_{t=\Exploration}^{T}\left\{ \Action t\in S_{t}\right\} \cap\bigcap_{t=1}^{T}\left\{ \SelectionP{\Action t}t>\gamma\right\},
\]
and let us fix $t\in(\Exploration,T]$ throughout the proof.
\paragraph{Step 1. The inverse map of $U_t$ is bounded.}
For $b_{t}\in(0,r)$ set $\mathcal{B}_{b_{t}}^{*}:=\{\beta:\norm{\beta-\beta^{*}}_{2}\le b_{t}\}$, where $b_{t}$ will be specified later. 
We first prove that $U_{t}:\mathcal{B}_{b_{t}}^{*}\to\Real^{d}$ is an injective function.
Since $U_{t}$ is differentiable, the mean value theorem implies that for any $\beta_{1},\beta_{2}\in\mathcal{B}_{b_{t}}^{*}$, there exists $v\in(0,1)$ such that $\bar{\beta}=v\beta_{1}+(1-v)\beta_{2}$ and,
\[
U_{t}(\beta_{1})-U_{t}(\beta_{2})=-\left\{ \sum_{\tau=1}^{t}\sum_{i=1}^{N}\mu^{\prime}\left(\Context i{\tau}^{T}\bar{\beta}\right)\Context i{\tau}\Context i{\tau}^{T}+\lambda I\right\} \left(\beta_{1}-\beta_{2}\right).
\]
Since $\bar{\beta}\in\mathcal{B}_{b_{t}}^{*}$, we have $\mu^{\prime}(x^{T}\bar{\beta})\ge\kappa>0$
for all $x$ such that $\norm x_{2}\le1$, by Assumption~\ref{assump:mean_function}. 
Thus, we have
\[
\left\{ U_{t}(\beta_{1})-U_{t}(\beta_{2})\right\} ^{T}\left(\beta_{1}-\beta_{2}\right)\le-\lambda \norm{\beta_{1}-\beta_{2}}_{2}<0,
\]
whenever $\beta_{1}\neq\beta_{2}$. Thus, $U_{t}:\mathcal{B}_{b_{t}}^{*}\to\Real^{d}$
is an injective function. 
Using Lemma~\ref{lem:smooth_injection}, we have
\begin{equation}
\inf_{\beta:\norm{\beta-\beta^{*}}_{2}=b_{t}}\norm{U_{t}(\beta)-U_{t}(\beta^{*})}_{2}\ge y,\;\norm{U_{t}(\Estimator t)-U_{t}(\beta^{*})}_{2}\le y\;\Longrightarrow\norm{\Estimator t-\beta^{*}}_{2}\le b_{t}.
\label{eq:estimation_error_bound_chen}
\end{equation}
for any $y>0$. Using mean value Theorem, for any $\beta$ such that $\norm{\beta-\beta^{*}}_{2}=b_{t}$, there exists $v\in(0,1)$ such that $\bar{\beta}=v\beta+(1-v)\beta^{*}$ 
\begin{align*}
\norm{U_{t}(\beta)-U_{t}(\beta^{*})}_{2}= & \norm{\left\{ \sum_{\tau=1}^{t}\sum_{i=1}^{N}\mu^{\prime}\left(\Context i{\tau}^{T}\bar{\beta}\right)\Context i{\tau}\Context i{\tau}^{T}+\lambda I_{d}\right\} \left(\beta-\beta^{*}\right)}_{2}\\
\ge & \Mineigen{\sum_{\tau=1}^{t}\sum_{i=1}^{N}\mu^{\prime}\left(\Context i{\tau}^{T}\bar{\beta}\right)\Context i{\tau}\Context i{\tau}^{T}+\lambda I_{d}}\norm{\beta-\beta^{*}}_{2}\\
\ge & \frac{b_{t}\kappa N\phi t}{2},
\end{align*}
where the last inequality holds with probability at least $1-\delta/T$, using Corollary~\ref{cor:min_eigen_chernoff} with the fact that
$t \ge \Exploration \ge 32 N^{-1} \phi^{-2} \log(4T/\delta)$ and $\bar{\beta}\in \mathcal{B}_{r}^{*}$.
Thus we can set $y=\frac{b_{t}\kappa N \phi t}{2}$ in~\eqref{eq:estimation_error_bound_chen} to have
\begin{equation}
\norm{U_{t}(\Estimator t)-U_{t}(\beta^{*})}_{2}\le\frac{b_{t}\kappa N\phi t}{2}\;\Longrightarrow\norm{\Estimator t-\beta^{*}}_{2}\le b_{t}
\label{eq:inverse_map_bound}
\end{equation}

\paragraph{Step 2. Bounding the norm of the score function.}
Since $U_{t}(\Estimator t)=0$, we only need to bound,
\[
\norm{U_{t}(\Estimator t)-U_{t}(\beta^{*})}_{2} =  \norm{U_{t}(\beta^{*})}_{2}.
\]
By definition of $U_t$,
\begin{equation}
\begin{split}
\norm{U_{t}(\beta^{*})}_{2}=&\norm{\sum_{\tau=1}^{t}\sum_{i=1}^{N}\left(\DRreward i\tau{\breve{\beta}}-\mu(\Context i\tau^{T}\beta^{*})\right)\Context i\tau-\lambda\beta^{*}}_{2}\\
\le&\norm{\sum_{\tau=1}^{t}\sum_{i=1}^{N}\left\{ 1-\frac{\Indicator{\Action \tau=i}}{\SelectionP i\tau}\right\} D_{i,\tau}^{\breve{\beta}}\Context i\tau+\frac{\Indicator{\Action \tau=i}}{\SelectionP i\tau}\Error \tau\Context i\tau}_{2}+\lambda\norm{\beta^{*}}_{2}\\
\le&\norm{\sum_{\tau=1}^{t}\sum_{i=1}^{N}\left\{ 1-\frac{\Indicator{\Action \tau=i}}{\SelectionP i\tau}\right\} D_{i,\tau}^{\breve{\beta}}\Context i\tau}_{2}+\norm{\sum_{\tau=1}^{t}\frac{\Error \tau}{\SelectionP{\Action \tau}\tau}\Context i\tau}_{2}+\lambda S^{*},
\end{split}
\label{eq:U_t_score_decomposition}
\end{equation}
where $D_{i,\tau}^{\breve{\beta}}:=\mu(\Context i\tau^{T}\breve{\beta})-\mu(\Context i\tau^{T}\beta^{*})$, $\Error{\tau}:=Y_{\tau}-\mu(\Context{\Action{\tau}}{\tau}^{T}\beta^{*})$ and $A_{i,\tau}=1-\frac{\Indicator{\Action \tau=i}}{\SelectionP i\tau}$.

In \citet{kim2021doubly}, the imputation estimator is $\History{\tau}$-measurable and naturally adaptive to the filtration $\History{\tau}$ which enables to use the martingale inequality directly.
However, in our analysis $\breve{\beta}$ is possibly dependent with $\Action{\tau}$, which leads to the dependency between $A_{i,\tau}$ and $D^{\breve{\beta}}_{i,\tau}$.
Thus, we cannot use the vector martingale inequality directly to~\eqref{eq:U_t_score_decomposition}.

To handle this, we bring the $\epsilon$-covers of $\mathcal{B}$ which includes $\breve{\beta}$.
For convenience, we write the covering number $\mathcal{N}(\epsilon, \|\cdot\|_2, \mathcal{B})$ by $K(\epsilon)$.
For any $\beta \in \mathcal{B}$, there exists $n \in [K(\epsilon)]$ such that $\norm{\beta_n-\beta}_2 \le \epsilon$.
Thus, for each $\beta \in \mathcal{B}$,
\begin{align*}
\norm{\sum_{\tau=1}^{t}\sum_{i=1}^{N}A_{i,\tau}D_{i,\tau}^{\beta}\Context i\tau}_{2}\le&\norm{\sum_{\tau=1}^{t}\sum_{i=1}^{N}A_{i,\tau}\left(D_{i,\tau}^{\beta}-D_{i,\tau}^{\beta_{n}}\right)\Context i\tau}_{2}+\norm{\sum_{\tau=1}^{t}\sum_{i=1}^{N}A_{i,\tau}D_{i,\tau}^{\beta_{n}}\Context i\tau}_{2}\\
\le&L_{1}\sum_{\tau=1}^{t}\sum_{i=1}^{N}\abs{A_{i,\tau}}\norm{\Context i\tau^{T}\left(\beta-\beta_{n}\right)}_{2}+\norm{\sum_{\tau=1}^{t}\sum_{i=1}^{N}A_{i,\tau}D_{i,\tau}^{\beta_{n}}\Context i\tau}_{2}\\
\le&L_{1}\epsilon\sum_{\tau=1}^{t}\left(N+\frac{1}{\SelectionP{\Action{\tau}}{\tau}}\right)+\norm{\sum_{\tau=1}^{t}\sum_{i=1}^{N}A_{i,\tau}D_{i,\tau}^{\beta_{n}}\Context i\tau}_{2}\\
\le&L_{1}\epsilon t\left(N+\gamma^{-1}\right)+\norm{\sum_{\tau=1}^{t}\sum_{i=1}^{N}A_{i,\tau}D_{i,\tau}^{\beta_{n}}\Context i\tau}_{2}.
\end{align*}
The second inequality holds due to the mean value theorem and Assumption~\ref{assump:mean_function}.
The third inequality holds due to Assumption~\ref{assump:boundedness} and the definition of $A_{i,\tau}$, and the last inequality holds due to $\SelectionP{\Action{\tau}}{\tau}>\gamma$ in event $E_\gamma$.
Thus we have for any $\beta \in \mathcal{B}$,
\[
\norm{\sum_{\tau=1}^{t}\sum_{i=1}^{N}A_{i,\tau}D_{i,\tau}^{\beta}\Context i\tau}\le L_{1}\epsilon t\left(N+\gamma^{-1}\right)+\max_{n\in[K(\epsilon)]}\norm{\sum_{\tau=1}^{t}\sum_{i=1}^{N}A_{i,\tau}D_{i,\tau}^{\beta_{n}}\Context i\tau}_{2}.
\]
Since $\breve{\beta}\in\mathcal{B}$,
\begin{equation}
\begin{split}
\norm{\sum_{\tau=1}^{t}\sum_{i=1}^{N}A_{i,\tau}D_{i,\tau}^{\breve{\beta}}\Context i\tau}_{2}&\le\sup_{\beta\in\mathcal{B}}\norm{\sum_{\tau=1}^{t}\sum_{i=1}^{N}A_{i,\tau}D_{i,\tau}^{\beta}\Context i\tau}_{2}\\
&\le L_{1}\epsilon t\left(N+\gamma^{-1}\right)+\max_{n\in[K(\epsilon)]}\norm{\sum_{\tau=1}^{t}\sum_{i=1}^{N}A_{i,\tau}D_{i,\tau}^{\beta_{n}}\Context i\tau}_{2}
\label{eq:1st_term_covering_decomposition}
\end{split}
\end{equation}
For each $n\in[K(\epsilon)]$, and $u\in[t]$,
\[
\CE{\sum_{i=1}^{N}A_{i,\tau}D_{i,\tau}^{\beta_{n}}\Context i\tau}{\History{\tau}}=\sum_{i=1}^{N}\CE{\left(1-\frac{\Indicator{\Action{\tau}=i}}{\SelectionP i{\tau}}\right)}{\History{\tau}}D_{i,\tau}^{\beta_{n}}\Context i\tau=0.
\]
Let
\[
M_{u,n}:=M_{u}(\beta_{n})=\sum_{\tau=1}^{u}\sum_{i=1}^{N}A_{i,\tau}D_{i,\tau}^{\beta_{n}}\Context i\tau,
\]
and the sequence $\{M_{u,n}\}_{u=0}^{t}$ is a $\Real^{d}$-valued martingale adapted to $\{\History{u}\}_{u=1}^{t+1}$.

Since the differences of the martingale is bounded on the event $E_{\gamma}$, not almost surely, we cannot directly apply Lemma~\ref{lem:Hibert_concentration}.
Instead, we use Lemma~\ref{lem:chung_lemma} to obtain the bound when the difference is bounded with high probability.
Since $(\Real^d,\norm{\cdot}_2)$ is a Hilbert space, we can use Lemma~\ref{lem:dimension_reduction} to find an $\Real^{2}$-valued martingale $\{N_{u,n}\}_{u=0}^{t}$ such that
\begin{equation}
\norm{N_{u,n}}_{2}=\norm{M_{u,n}}_{2}, \quad \norm{N_{u,n}-N_{u-1,n}}_{2}=\norm{M_{u,n}-M_{u-1,n}}_{2},
\label{eq:martingale_projection}
\end{equation}
for all $u\in[t]$.
Set $N_{u,n}=(N_{u,n}^{(1)},N_{u,n}^{(2)})^{T}$.
Then for each $r=1,2$ and $u\in[t]$
\begin{align*}
\abs{N_{u,n}^{(r)}-N_{u-1,n}^{(r)}}\le&\norm{N_{u,n}-N_{u-1,n}}_{2}\\
=&\norm{M_{u,n}-M_{u-1,n}}_{2}\\
\le&\sum_{i=1}^{N}\abs{1-\frac{\Indicator{\Action \tau=i}}{\SelectionP i\tau}}L_{1}\norm{\beta_{n}-\beta^{*}}_{2}\\
\le&\left(N+\SelectionP{\Action u}u^{-1}\right)L_{1} \rho,
\end{align*}
where the last inequality holds due to $\beta_n \in\mathcal{B}$, and $\rho := \max_{\beta\in\mathcal{B}} \|\beta-\beta^{*}\|_2$.
By Lemma~\ref{lem:chung_lemma}, there exists a $\Real^2$-valued martingale $\{P_{u,n}\}_{u=0}^{t}$ such that
\begin{equation}
\abs{P_{u,n}^{(r)}-P_{u-1,n}^{(r)}}\le \left(N+\gamma^{-1}\right)L_{1}\rho,\quad\{P_{u,n}\neq N_{u,n}\}\subset\{\SelectionP{\Action u}u\le\gamma\}\subseteq E_{\gamma}^{c}
\label{eq:transform_to_bounded_martingale}
\end{equation}
for all $u\in[t]$.
Now we can use~\eqref{eq:martingale_projection},~\eqref{eq:transform_to_bounded_martingale} and Lemma~\ref{lem:Hibert_concentration} to have for any $x>0$,
\begin{align*}
\Probability\left(\left\{ \max_{n\in[K(\epsilon)]}\norm{\sum_{\tau=1}^{t}\sum_{i=1}^{N}A_{i,\tau}D_{i,\tau}^{\beta_{n}}\Context i\tau}_{2}>x\right\} \cap E_{\gamma}\right)\le&\sum_{n=1}^{K(\epsilon)}\Probability\left(\left\{ \norm{M_{t,n}}_{2}>x\right\} \cap E_{\gamma}\right)\\
=&\sum_{n=1}^{K(\epsilon)}\Probability\left(\left\{ \norm{N_{t,n}}_{2}>x\right\} \cap E_{\gamma}\right)\\
\le&\sum_{n=1}^{K(\epsilon)}\Probability\left(\norm{P_{t,n}}_{2}>x\right)\\
\le&4K(\epsilon)\exp\left(-\frac{x^{2}}{8t\left(N+\gamma^{-1}\right)^{2}\rho^{2}L_{1}^{2}}\right).
\end{align*}
Thus, under the event $E_{\gamma}$, with probability at least $1-\delta/T$, we have the following bound for the first term in~\eqref{eq:U_t_score_decomposition}:
\begin{equation}
\max_{n\in[K(\epsilon)]}\norm{\sum_{\tau=1}^{t}\sum_{i=1}^{N}A_{i,\tau}D_{i,\tau}^{\beta_{n}}\Context i\tau}_{2}\le\left(N+\gamma^{-1}\right)\rho L_{1}\sqrt{8t\log\frac{4TK(\epsilon)}{\delta}}.
\label{eq:1st_bound_summary}
\end{equation}

To bound the second term in~\eqref{eq:U_t_score_decomposition}, Assumption~\ref{assump:bounded_rewards} gives $\abs{\Error{\tau}} \le  2B$, and
\begin{align*}
\norm{\frac{\Error \tau\Indicator{\SelectionP{\Action \tau}\tau>\gamma}}{\SelectionP{\Action \tau}\tau}\Context{\Action \tau}\tau}_{2}&\le2\gamma^{-1}B,\\
\CE{\frac{\Error \tau\Indicator{\SelectionP{\Action \tau}\tau>\gamma}}{\SelectionP{\Action \tau}\tau}\Context{\Action \tau}\tau}{\History \tau,\Action \tau}&=\CE{\Error \tau}{\History \tau,\Action \tau}\frac{\Indicator{\SelectionP{\Action \tau}\tau>\gamma}}{\SelectionP{\Action \tau}\tau}\Context{\Action \tau}\tau=0,
\end{align*}
for all $\tau\in[t]$.
Thus the sequence
\[
\left\{ \sum_{\tau=1}^{u}\frac{\Error{\tau}\Indicator{\SelectionP{\Action{\tau}}{\tau}>\gamma}}{\SelectionP{\Action{\tau}}{\tau}}\Context i{\tau}\right\}_{u=0}^{t},
\]
is a $\Real^{d}$-valued martingale sequence with respect to $\{\History{u}\cup{\Action{u}}\}_{u=0}^{t}$.
By Lemma~\ref{lem:Hibert_concentration}, for any $x>0$,
\begin{align*}
\Probability\left(\left\{ \norm{\sum_{\tau=1}^{t}\frac{\Error \tau}{\SelectionP{\Action \tau}\tau}\Context{\Action \tau}\tau}_{2}>x\right\} \cap E_{\gamma}\right)\le
&\Probability\left(\norm{\sum_{\tau=1}^{t}\frac{\Error \tau\Indicator{\SelectionP{\Action \tau}\tau>\gamma}}{\SelectionP{\Action \tau}\tau}\Context{\Action \tau}\tau}_{2}>x\right)\\
\le&4\exp\left(-\frac{x^{2}}{16t\gamma^{-1}B}\right).
\end{align*}
Thus, under the event $E_{\gamma}$, with probability at least $1-\delta/T$,
\begin{equation}
\norm{\sum_{\tau=1}^{t}\frac{\Error \tau}{\SelectionP{\Action \tau}\tau}\Context{\Action \tau}\tau}_{2} \le 
\frac{4B}{\gamma}\sqrt{t\log\frac{4T}{\delta}}.
\label{eq:2nd_bound_summary}
\end{equation}
Thus, by~\eqref{eq:U_t_score_decomposition},~\eqref{eq:1st_term_covering_decomposition},~\eqref{eq:1st_bound_summary} and~\eqref{eq:2nd_bound_summary}, under the event $E_{\gamma}$ and with probability at least $1-2\delta/T$
\begin{equation}
\norm{U_{t}\left(\beta^{*}\right)}_{2}\le L_{1}\epsilon t\left(N+\gamma^{-1}\right)+\left(N+\gamma^{-1}\right)\rho L_{1}\sqrt{8t\log\frac{4TK(\epsilon)}{\delta}}+\frac{4B}{\gamma}\sqrt{t\log\frac{4T}{\delta}}+\lambda S^{*}.
\label{eq:U_t_bound}
\end{equation}

\paragraph{Step 3. Applying the bound to~\eqref{eq:inverse_map_bound}}
Setting
\[
b_{t}:=\frac{\left(N+\gamma^{-1}\right)\epsilon L_{1}}{\kappa N\phi}+\frac{2\left(N+\gamma^{-1}\right)\rho L_{1}}{\kappa N\phi\sqrt{t}}\sqrt{8\log\frac{4TK(\epsilon)}{\delta}}+\frac{4B}{\kappa N\gamma\phi\sqrt{t}}\sqrt{\log\frac{4T}{\delta}}+\frac{\lambda S^{*}}{\kappa\phi Nt},
\]
in~\eqref{eq:inverse_map_bound} gives the bound $\|\Estimator{t}-\beta^{*}\|_2 \le b_t$ with probability at least $1-3\delta/T$, under the event $E_{\gamma}$.
Since $\gamma > 1/(N+1)$, we have
\begin{align*}
\norm{\Estimator t-\beta^{*}}_{2}\le&b_{t}\\
\le&\frac{3L_{1}\epsilon}{\kappa \phi}+\frac{6\rho L_{1}}{\kappa \phi\sqrt{t}}\sqrt{8\log\frac{4TK(\epsilon)}{\delta}}+\frac{8B}{\kappa\phi\sqrt{t}}\sqrt{\log\frac{4T}{\delta}}+\frac{\lambda S^{*}}{\kappa\phi Nt}.
\end{align*}
\end{proof}

\subsection{An error bound for the imputation estimator}
\label{subsec:imputation_estimator_bound}

Now we are ready to prove the tight estimation error bound for the imputation estimator stated in~\eqref{eq:imputation_estimator_condition}.

\begin{lem}
\label{lem:imputation_error_bound}
Suppose the Assumptions 1-5 and the event in~\eqref{eq:action_in_superunsaturated_arms} holds with $\gamma \in [1/(N+1),1/N)$.
Let $\Impute{t}$ be the imputation estimator for \texttt{DDRTS-GLM}.
Set
\begin{equation}
\Exploration:=\max\left\{ \frac{Nd}{\kappa},r^{2}\right\} \frac{\left(\frac{9}{4} L_{1} S +48L_1 S\sqrt{d}+8B+\frac{\lambda}{N}S^{*}\right)^{2}}{\kappa^{2}\phi^{2}}\log\frac{4T}{\delta}
\label{eq:exploration_term}
\end{equation}
Then for each $t\in(\Exploration,T]$, with probability at least $1-3\delta/T$,
\[
\norm{\Impute{t}-\beta^{*}}_2 \le \min\left\{ \sqrt{\frac{\kappa}{Nd}}, \: r \right\}.
\]
\end{lem}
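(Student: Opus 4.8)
The plan is to derive this as a direct corollary of the general estimator bound in Lemma~\ref{lem:general_DR_bound}. By construction $\Impute t$ is the root of the imputation score equation, which is precisely the estimator treated in Lemma~\ref{lem:general_DR_bound} with imputation vector $\breve\beta=\NMLE t$; so the whole proof reduces to choosing the covering set $\mathcal{B}$ and the scale $\epsilon$ correctly and then reading off the four-term bound.

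First I would fix the covering set. The reason the truncated estimator $\NMLE t$ was introduced appears exactly here: its definition forces $\norm{\NMLE t}_2\le S$ deterministically, so I can take $\mathcal{B}:=\{\beta:\norm{\beta}_2\le S\}$, a bounded ball that contains $\NMLE t$ and satisfies $\mathcal{B}\subseteq\mathcal{B}_r^{*}$ once the radius obeys $S\le r+S^{*}$. Boundedness is essential because $\NMLE t$ depends on the whole history through round $t$ and hence may be correlated with the selection indicators in the pseudo-reward~\eqref{eq:pseudo_reward}; this correlation is what blocks a direct martingale argument and is handled inside Lemma~\ref{lem:general_DR_bound} by a covering/union-bound over $\mathcal{B}$, which needs a finite covering number $\mathcal{N}(\epsilon,\norm{\cdot}_2,\mathcal{B})\le(3S/\epsilon)^{d}$. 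The diameter parameter is then $\rho=\max_{\beta\in\mathcal{B}}\norm{\beta-\beta^{*}}_2\le 2S$.

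With these inputs Lemma~\ref{lem:general_DR_bound} yields, with probability at least $1-3\delta/T$, a bound of four terms. The tuning step is to set $\epsilon=3S/(4\sqrt{t})$: this makes the leading approximation term equal to $\tfrac94 L_1 S/(\kappa\phi\sqrt{t})$ and the covering number $(4\sqrt{t})^{d}$. Using $t\le T$ and $\delta<1$ one checks $\log\{4T(4\sqrt{t})^{d}/\delta\}\le 2d\log(4T/\delta)$, so the covering-induced factor $\sqrt{8\log(\cdot)}$ collapses to $4\sqrt{d}\,\sqrt{\log(4T/\delta)}$ and the second term becomes $48 L_1 S\sqrt{d}/(\kappa\phi\sqrt{t})\cdot\sqrt{\log(4T/\delta)}$; the reward-noise and ridge terms contribute $8B$ and $\tfrac{\lambda}{N}S^{*}$ (using $1/t\le 1/\sqrt{t}$ and $\log(4T/\delta)\ge1$). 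Adding the four constants reproduces exactly the constant $C:=\tfrac94 L_1 S+48 L_1 S\sqrt{d}+8B+\tfrac{\lambda}{N}S^{*}$ appearing in~\eqref{eq:exploration_term}, giving
\[
\norm{\Impute t-\beta^{*}}_2\le\frac{C}{\kappa\phi}\sqrt{\frac1t\log\frac{4T}{\delta}}.
\]

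The last step substitutes $t>\Exploration$ with $\Exploration$ as in~\eqref{eq:exploration_term}, which is built so that the right-hand side at $t=\Exploration$ equals $1/\sqrt{\max\{Nd/\kappa,r^{2}\}}$ and hence falls at or below the threshold $\min\{\sqrt{\kappa/(Nd)},r\}$ demanded by the proof of Lemma~\ref{lem:prediction_error}; in the governing regime $Nd/\kappa\ge r^{2}$ the binding quantity is $\sqrt{\kappa/(Nd)}$ and the match is exact. I expect essentially no obstacle inside this lemma itself — the real work sits upstream in Lemma~\ref{lem:general_DR_bound}, where the dependence of $\NMLE t$ on the actions forces the covering argument and the martingale differences are bounded only on the event of~\eqref{eq:action_in_superunsaturated_arms} rather than almost surely, so the concentration is routed through the truncated martingales of Lemma~\ref{lem:chung_lemma}. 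Within the present proof the only genuine decisions are the choice of $\mathcal{B}$ and of $\epsilon=3S/(4\sqrt{t})$, both pinned down by the requirement that the resulting constant coincide with $C$.
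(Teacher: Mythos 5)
Your proposal is correct and follows essentially the same route as the paper: instantiate Lemma~\ref{lem:general_DR_bound} with $\mathcal{B}=\{\beta:\|\beta\|_2\le S\}$ (which contains $\NMLE{t}$ by construction), $\rho\le 2S$, and a covering number absorbed into $(4T)^d$, then choose $\Exploration$ so the resulting $O(1/\sqrt{t})$ bound drops below $\min\{\sqrt{\kappa/(Nd)},r\}$. The only (immaterial) deviation is your choice $\epsilon=3S/(4\sqrt{t})$ versus the paper's $\epsilon=3S/(4t)$; both keep the covering number below $(4T)^d$ and, after bounding $1/t\le 1/\sqrt{t}$ and $\log(4T/\delta)\ge 1$, collapse to the same constant $\tfrac94 L_1 S+48L_1S\sqrt{d}+8B+\tfrac{\lambda}{N}S^{*}$.
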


\begin{proof}
Since $\Impute{t}$ is a solution of
\[
\sum_{\tau=1}^{t}\sum_{i=1}^{N}\left\{ \DRreward i\tau{\NMLE t}-\mu(\Context i\tau^{T}\beta)\right\} \Context i\tau-\lambda\beta=0,
\]
and $\NMLE{t} \in \mathcal{B}_S := \{\beta \in \Real^{d}: \|\beta\|_2 \le S\}$, we can apply Lemma~\ref{lem:general_DR_bound} with
\begin{align*}
\rho&:=\max_{\beta:\norm{\beta}_{2}\le S}\norm{\beta-\beta^{*}}\le2S,\\
\epsilon&:=\frac{3S}{4t},\\
\mathcal{N}(\epsilon,\|\cdot\|_2,\mathcal{B}_S)&\le\left(1+\frac{2S}{\epsilon}\right)^{d}\le\left(\frac{3S}{\epsilon}\right)^{d}\le\left(4T\right)^{d},
\end{align*}
where the first inequality of the last line is from a bound for covering number of $\|\cdot\|_2$-ball of radius $2S$.
Applying Lemma~\ref{lem:general_DR_bound}, we have
\begin{align*}
\norm{\Impute t-\beta^{*}}_{2}\le&\frac{\frac{9}{4}L_{1}S}{\kappa\phi t}+\frac{48S L_1}{\kappa\phi\sqrt{t}}\sqrt{d\log\frac{4T}{\delta}}+\frac{8B}{\kappa\phi\sqrt{t}}\sqrt{\log\frac{4T}{\delta}}+\frac{\lambda S^{*}}{\kappa\phi Nt}\\
\le&\frac{\frac{9}{4} L_{1} S +48L_1 S\sqrt{d}+8B+\frac{\lambda}{N}S^{*}}{\kappa\phi\sqrt{t}}\sqrt{\log\frac{4T}{\delta}}.
\end{align*}
where the second inequality holds due to $\gamma > 1/(N+1)$.
Setting the last term smaller than $\min\{\sqrt{\kappa/(Nd)},r\}$ proves the Lemma.
\end{proof}

\subsection{An error bound for DDR estimator}
With the bound of the imputation estimator in Lemma~\ref{lem:imputation_error_bound}, we can derive a fast rate of the error bound of DDR estimator.
\citet{bastani2020online} prove a tail inequality with $1/\sqrt{t}$ rate under the setting where the contexts are same over all arms.
\citet{kim2021doubly} prove a tail inequality under the linear contextual bandit setting.
Under our setting, Theorem~\ref{thm:estimation_error_bound} is the first tail bound in GLB.
The tail inequality is used to bound the second-order terms by $O(1/t)$ that arises in the proof of Lemma~\ref{lem:prediction_error}.

\begin{thm}
\label{thm:estimation_error_bound} 
(A tail inequality for DDR estimator) 
Suppose the Assumptions 1-5 and the event in~\eqref{eq:action_in_superunsaturated_arms} holds with $\gamma \in [1/(N+1),1/N)$.
For each $t \in (\Exploration,T]$, let $\Estimator t$ be the DDR estimator defined by the solution of~\eqref{eq:score_equation}.
Then with probability at least $1-6\delta/T$,
\begin{equation}
\norm{\Estimator t-\beta^{*}}_{2} \!\le\! \frac{12L_{1}+8B}{\kappa\phi\sqrt{t}}\sqrt{\log\frac{4T}{\delta}}+\frac{\frac{9}{4}L_{1}r+\frac{\lambda S^{*}}{N}}{\kappa\phi t},
\label{eq:estimation_error_bound}
\end{equation}
for any $\lambda>0$.
\end{thm}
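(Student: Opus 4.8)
The plan is to derive~\eqref{eq:estimation_error_bound} as a direct specialization of the general estimation-error bound of Lemma~\ref{lem:general_DR_bound}, feeding in the \emph{tight} imputation-error control supplied by Lemma~\ref{lem:imputation_error_bound}. First I would observe that $\Estimator t$ is by construction the root of the score equation~\eqref{eq:score_equation}, whose pseudo-rewards are assembled from the imputation estimator $\Impute t$; hence Lemma~\ref{lem:general_DR_bound} applies with the containing set taken to be $\breve{\mathcal{B}}:=\{\beta:\norm{\beta-\beta^{*}}_{2}\le\min\{\sqrt{\kappa/(Nd)},r\}\}$. Since $\breve{\mathcal{B}}$ is centered at $\beta^{*}$ with radius at most $r$, it satisfies $\breve{\mathcal{B}}\subset\mathcal{B}_{r}^{*}$, so the hypothesis of Lemma~\ref{lem:general_DR_bound} is met, and Lemma~\ref{lem:imputation_error_bound} guarantees $\Impute t\in\breve{\mathcal{B}}$ with probability at least $1-3\delta/T$ for $t>\Exploration$. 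On this event the radius parameter obeys $\rho=\max_{\beta\in\breve{\mathcal{B}}}\norm{\beta-\beta^{*}}_{2}\le\sqrt{\kappa/(Nd)}$ as well as $\rho\le r$.

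Next I would fix the net resolution to $\epsilon=3r/(4t)$. This choice is engineered so that the deterministic term $3L_{1}\epsilon/(\kappa\phi)$ of Lemma~\ref{lem:general_DR_bound} becomes exactly $\tfrac{9}{4}L_{1}r/(\kappa\phi t)$, which, together with the untouched regularization term $\lambda S^{*}/(\kappa\phi Nt)$, reproduces the $1/t$ part $(\tfrac94 L_1 r+\tfrac{\lambda S^{*}}{N})/(\kappa\phi t)$ of the target. With this $\epsilon$ the covering number satisfies $\mathcal{N}(\epsilon,\norm{\cdot}_{2},\breve{\mathcal{B}})\le(1+2\rho/\epsilon)^{d}\le(4t)^{d}\le(4T)^{d}$, so $\log\mathcal{N}\le d\log(4T)$. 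The crux of the argument is then the cancellation of dimension: the $\sqrt{d}$ entering through the log-covering-number entropy is absorbed by the $\sqrt{d}$ in the denominator of $\rho$. Concretely, $\rho\sqrt{8\log(4T\mathcal{N}/\delta)}\le\sqrt{(8\kappa/(Nd))(d\log(4T)+\log(4T/\delta))}$, whose dominant contribution is $\sqrt{8\kappa\log(4T)/N}$ and is free of $d$. This is exactly where the tightness of the imputation radius $\sqrt{\kappa/(Nd)}$, rather than an $O(1)$ bound, is indispensable, and it is what keeps the leading $1/\sqrt{t}$ term of~\eqref{eq:estimation_error_bound} free of any $\sqrt{d}$ factor.

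Finally I would substitute these estimates into the four-term bound of Lemma~\ref{lem:general_DR_bound} and read off the two-term target termwise: terms one and four give the $1/t$ part as above, while the (now $d$-free) second term together with the reward-noise third term $8B(\kappa\phi\sqrt{t})^{-1}\sqrt{\log(4T/\delta)}$ collapse into the single coefficient $(12L_{1}+8B)$ via routine constant bookkeeping (bounding $\log(4T)\le\log(4T/\delta)$ and absorbing the $d$-free prefactor into $12L_{1}$). A union bound over the event $\Impute t\in\breve{\mathcal{B}}$ (probability $\ge1-3\delta/T$) and the failure event inside Lemma~\ref{lem:general_DR_bound} (probability $\ge1-3\delta/T$) yields the stated $1-6\delta/T$. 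I expect the main obstacle to be precisely the second step: one must check carefully that the entropy-versus-radius trade-off genuinely leaves a $d$-independent quantity and that the surviving constant can be dominated by $12L_{1}$; everything else is the mechanical mapping of Lemma~\ref{lem:general_DR_bound} onto the claimed form.
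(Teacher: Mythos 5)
Your proposal matches the paper's proof essentially step for step: both invoke Lemma~\ref{lem:imputation_error_bound} to place $\Impute{t}$ in the ball $\breve{\mathcal{B}}$ of radius $\min\{\sqrt{\kappa/(Nd)},r\}$, apply Lemma~\ref{lem:general_DR_bound} with a net of resolution $\epsilon=\Theta(1/t)$ so that the covering number is at most $(4T)^{d}$, exploit the cancellation $\rho\sqrt{d}=O(\sqrt{\kappa/N})$ to remove the dimension from the leading $1/\sqrt{t}$ term, and take a union bound of the two $3\delta/T$ failure events. The only (immaterial) difference is your choice $\epsilon=3r/(4t)$ versus the paper's $\epsilon=\tfrac{3}{4t}\sqrt{\kappa/(Nd)}$, and your constant bookkeeping for the entropy term is about as loose as the paper's own.
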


\begin{proof}
The DDR estimator is the solution of
\[
\sum_{\tau=1}^{t}\sum_{i=1}^{N}\left\{ \DRreward i\tau{\Impute{t} t}-\mu(\Context i\tau^{T}\beta)\right\} \Context i\tau-\lambda\beta=0,
\]
By Lemma~\ref{lem:imputation_error_bound}, $\Impute{t} \in \breve{\mathcal{B}}:=\{\beta: \|\beta-\beta^{*}\|_2 \le \min\{\kappa^{1/2} N^{-1/2}d^{-1/2}, r \}\}$ with probability at least $1-3\delta/T$.
Thus, we can apply Lemma~\ref{lem:general_DR_bound} with
\begin{align*}
\rho&:=\max_{\beta\in\breve{\mathcal{B}}}\norm{\beta-\beta^{*}}_{2}\le\min\left\{ \sqrt{\frac{\kappa}{Nd}},r\right\},\\
\epsilon&:=\frac{3}{4t}\sqrt{\frac{\kappa}{Nd}},\\
\mathcal{N}\left(\epsilon,\norm{\cdot}_{2}\breve{\mathcal{B}}\right)&\le\left(1+\frac{2}{\epsilon}\min\left\{ \sqrt{\frac{\kappa}{Nd}},r\right\} \right)^{d}\le\left(\frac{3}{\epsilon}\sqrt{\frac{\kappa}{Nd}}\right)^{d}\le\left(4T\right)^{d}.
\end{align*}
to have with probability at least $1-6\delta/T$,
\begin{align*}
\norm{\Estimator t-\beta^{*}}_{2}\le&\frac{\frac{9}{4}L_{1}}{\kappa\phi t}\min\left\{ \sqrt{\frac{\kappa}{Nd}},r\right\} +\frac{12L_{1}}{\kappa\phi\sqrt{t}}\min\left\{ \sqrt{\frac{\kappa}{Nd}},r\right\} \sqrt{d\log\frac{4T}{\delta}}+\frac{8B}{\kappa\phi\sqrt{t}}\sqrt{\log\frac{4T}{\delta}}\\
&+\frac{\lambda S^{*}}{\kappa\phi Nt}\\
\le&\frac{12L_{1}+8B}{\kappa\phi\sqrt{t}}\sqrt{\log\frac{4T}{\delta}}+\frac{\frac{9}{4}L_{1}r+\frac{\lambda S^{*}}{N}}{\kappa\phi t}.
\end{align*}
\end{proof}

\section{A dimension-free bound for Hilbert-valued martingales}
In this section, we present a dimension-free bound for Hilbert-valued martingales whose norm is bounded.
In \citet{kim2021doubly}, the dimension-free bounds for the vector-valued and matrix-valued martingales are proved.
Lemma~\ref{lem:Hibert_concentration} generalizes the case to Hilbert-valued martingales when the differences are bounded. 
This inequality is useful to eliminate the dependency on the dimension of the Hilbert space.

\begin{lem}
\label{lem:Hibert_concentration}
(A dimension-free bound for Hilbert-valued martingales with bounded differences) 
Let $\mathcal{H}$ be a Hilbert space equipped with norm $\norm{\cdot}_{\mathcal{H}}$ and $\{N_{t}\}_{t=0}^{\infty}$ be a $\mathcal{H}$-valued martingale sequence with $N_{0}=0$. 
Suppose for each $t$, there exists a constant $b_{t}>0$ such that $\norm{N_{t}-N_{t-1}}_{\mathcal{H}}\le b_t$, almost surely.
Then for each $t$, 
\[
\Probability\left(\norm{N_{t}}_{\mathcal{H}}>x\right)\le4\exp\left(-\frac{x^{2}}{8\sum_{\tau=1}^{t}b_{\tau}^{2}}\right),
\]
holds with any $x>0$, and with probability at least $1-\delta$,
\[
\norm{N_{t}}_{\mathcal{H}}\le \sqrt{8\sum_{\tau=1}^{t} b_{\tau}^{2}\log\frac{4}{\delta}}.
\]
\end{lem}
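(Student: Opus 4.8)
The plan is to eliminate the dependence on $\dim\mathcal{H}$ by compressing the martingale into $\Real^{2}$ while preserving the only two quantities that enter the bound---the norm of each partial sum and the norm of each increment---and then running the scalar Azuma--Hoeffding inequality coordinate by coordinate. Concretely, I would first invoke Lemma~\ref{lem:dimension_reduction} to produce an $\Real^{2}$-valued martingale $\{M_{t}\}_{t\ge0}$, with $M_{0}=0$, adapted to the same filtration, such that $\norm{M_{t}}_{2}=\norm{N_{t}}_{\mathcal{H}}$ and $\norm{M_{t}-M_{t-1}}_{2}=\norm{N_{t}-N_{t-1}}_{\mathcal{H}}\le b_{t}$ almost surely for every $t$. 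This reduction is exactly the step that renders the bound dimension-free, and it is the only nonroutine ingredient; everything downstream is a two-dimensional calculation that never refers back to $\mathcal{H}$. I therefore expect the reduction itself (i.e. the construction underlying Lemma~\ref{lem:dimension_reduction}, which realizes a Hilbert-valued martingale by rotating successive partial sums into a common plane) to be the conceptual crux, while the remaining estimates are mechanical.

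Writing $M_{t}=(M_{t}^{(1)},M_{t}^{(2)})$, each coordinate sequence $\{M_{t}^{(r)}\}$, $r\in\{1,2\}$, is a real-valued martingale, and since $\abs{M_{\tau}^{(r)}-M_{\tau-1}^{(r)}}\le\norm{M_{\tau}-M_{\tau-1}}_{2}\le b_{\tau}$ almost surely, its increments inherit the same bounds $b_{\tau}$. The key elementary observation is the geometric inclusion
\[
\left\{\norm{N_{t}}_{\mathcal{H}}>x\right\}=\left\{\norm{M_{t}}_{2}>x\right\}\subseteq\left\{\abs{M_{t}^{(1)}}>\tfrac{x}{2}\right\}\cup\left\{\abs{M_{t}^{(2)}}>\tfrac{x}{2}\right\},
\]
which holds because $\abs{M_{t}^{(1)}}\le x/2$ and $\abs{M_{t}^{(2)}}\le x/2$ together force $\norm{M_{t}}_{2}^{2}\le x^{2}/2<x^{2}$.

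Next I would apply the scalar Azuma--Hoeffding inequality (Lemma~\ref{lem:Azuma_Hoeffding_inequality}) to each coordinate (and its negative) at threshold $x/2$ with increment bounds $b_{\tau}$, giving $\Probability(\abs{M_{t}^{(r)}}>x/2)\le2\exp(-x^{2}/(8\sum_{\tau=1}^{t}b_{\tau}^{2}))$ for $r=1,2$; the factor $8$ in the exponent arises precisely from squaring the halved threshold, $(x/2)^{2}/(2\sum b_{\tau}^{2})=x^{2}/(8\sum b_{\tau}^{2})$, and accounts for the otherwise ``loose'' constant in the statement. A union bound over the two coordinates applied to the inclusion above then yields the claimed tail estimate $\Probability(\norm{N_{t}}_{\mathcal{H}}>x)\le4\exp(-x^{2}/(8\sum_{\tau=1}^{t}b_{\tau}^{2}))$. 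Finally, equating the right-hand side to $\delta$ and solving for $x$ gives $x=\sqrt{8\sum_{\tau=1}^{t}b_{\tau}^{2}\log(4/\delta)}$, which is the high-probability bound $\norm{N_{t}}_{\mathcal{H}}\le\sqrt{8\sum_{\tau=1}^{t}b_{\tau}^{2}\log(4/\delta)}$ holding with probability at least $1-\delta$.
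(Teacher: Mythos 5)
Your proposal is correct and follows essentially the same route as the paper: reduce to an $\Real^{2}$-valued martingale via Lemma~\ref{lem:dimension_reduction}, apply the scalar Azuma--Hoeffding inequality to each coordinate at threshold $x/2$, and take a union bound (the paper justifies the inclusion via $\norm{M_{t}}_{2}\le\abs{M_{t}^{(1)}}+\abs{M_{t}^{(2)}}$ rather than your squared-norm argument, but this is an immaterial difference).
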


\begin{proof}
By Lemma~\ref{lem:dimension_reduction}, there exists a $\Real^{2}$-valued
martingale $M_{t}$ such that 
\[
\norm{N_{t}}_{\mathcal{H}}=\norm{M_{t}}_{2},\;\norm{N_{t}-N_{t-1}}_{\mathcal{H}}=\norm{M_{t}-M_{t-1}}_{2},
\]
for any $t=1,2,\ldots.$, and $M_{0}=0$. Set $M_{t}=(M_{t}^{(1)},M_{t}^{(2)})$.
Then for each $t$ and $r=1,2$,
\begin{align*}
\abs{M_{t}^{(r)}-M_{t-1}^{(r)}}\le & \norm{M_{t}-M_{t-1}}_{2}=\norm{N_{t}-N_{t-1}}_{2}.
\end{align*}
Thus $\abs{M_{t}^{(r)}-M_{t-1}^{(r)}}\le b_{t}$ for all $t$, almost surely. 
By Lemma~\ref{lem:Azuma_Hoeffding_inequality}, for any $x>0$, 
\[
\Probability\left(\abs{M_{t}^{(r)}}\ge x\right)\le2\exp\left(-\frac{x^{2}}{2\sum_{\tau=1}^{t}b_{\tau}^{2}}\right).
\]
Since
\[
\norm{N_{t}}_{\mathcal{H}}=\norm{M_{t}}_{2}\le\abs{M_{t}^{(1)}}+\abs{M_{t}^{(2)}},
\]
by Lemma~\ref{lem:Azuma_Hoeffding_inequality},
\begin{align*}
\Probability\left(\norm{N_{t}}_{\mathcal{H}}\ge x\right)\le & \Probability\left(\abs{M_{t}^{(1)}}>\frac{x}{2}\right)+\Probability\left(\abs{M_{t}^{(2)}}>\frac{x}{2}\right)\\
\le & 4\exp\left(-\frac{x^{2}}{8\sum_{\tau=1}^{t}b_{\tau}^{2}}\right).
\end{align*}
Setting the last term smaller than $\delta \in (0,1)$ proves the Lemma.
\end{proof}

\subsection{A dimension-free bound for the Gram matrix}
Using this inequality we can obtain a dimension-free bound for the Gram matrix.

\begin{cor}
\label{cor:min_eigen_chernoff} 
Suppose the Assumptions 1-5 holds.
Then for each $t \ge 32 N^{-1} \phi^{-2} \log(4T/\delta)$, with probability at least $1-\delta/T$,
\begin{equation}
\Mineigen{\sum_{\tau=1}^{t}\sum_{i=1}^{N}\mu^{\prime}\left(\Context i{\tau}^{T}\beta\right)\Context i{\tau}\Context i{\tau}^{T}+\lambda I_{d}}\ge\frac{\phi N \kappa t}{2},
\label{eq:min_eigen_chernoff}
\end{equation}
holds for any $\beta\in\mathcal{B}_{r}^{*}$, $\delta\in(0,1)$ and $\lambda > 0$,
\end{cor}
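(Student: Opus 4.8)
The plan is to strip the problem down to an unweighted Gram matrix, recenter it by its mean, and then control the stochastic fluctuation with the dimension-free martingale inequality of Lemma~\ref{lem:Hibert_concentration} applied in the Frobenius-norm Hilbert space $(\Real^{d\times d},\norm{\cdot}_{F})$. First, by Assumption~\ref{assump:mean_function} we have $\mu^{\prime}(\Context{i}{\tau}^{T}\beta)\ge\kappa$ for every $\beta\in\mathcal{B}_{r}^{*}$ and every context with $\norm{\Context{i}{\tau}}_{2}\le1$, so
\[
\sum_{\tau=1}^{t}\sum_{i=1}^{N}\mu^{\prime}(\Context{i}{\tau}^{T}\beta)\Context{i}{\tau}\Context{i}{\tau}^{T}+\lambda I_{d}\succeq\kappa\sum_{\tau=1}^{t}\sum_{i=1}^{N}\Context{i}{\tau}\Context{i}{\tau}^{T}.
\]
Because $\lambda I_{d}\succeq0$, it therefore suffices to prove $\Mineigen{\sum_{\tau,i}\Context{i}{\tau}\Context{i}{\tau}^{T}}\ge N\phi t/2$ with probability at least $1-\delta/T$, and then multiply by $\kappa$. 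A pleasant feature of this first reduction is that it eliminates $\beta$ entirely, so the bound will hold \emph{uniformly} over all $\beta\in\mathcal{B}_{r}^{*}$ for free.

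Next I would recenter. Setting $\Sigma_{i,\tau}:=\Expectation[\Context{i}{\tau}\Context{i}{\tau}^{T}]$, Assumption~\ref{assump:minimum_eigenvalue} gives $\sum_{\tau,i}\Sigma_{i,\tau}\succeq N\phi t\,I_{d}$, so Weyl's inequality yields
\[
\Mineigen{\sum_{\tau,i}\Context{i}{\tau}\Context{i}{\tau}^{T}}\ge N\phi t-\norm{\sum_{\tau,i}\left(\Context{i}{\tau}\Context{i}{\tau}^{T}-\Sigma_{i,\tau}\right)}_{\mathrm{op}}.
\]
I would then view the centered rank-one terms as increments of a martingale in $(\Real^{d\times d},\norm{\cdot}_{F})$, adapted to the filtration that reveals $\Setofcontexts{1},\ldots,\Setofcontexts{\tau}$ round by round; Assumption~\ref{assump:iid_contexts} makes each $\Setofcontexts{\tau}$ independent of the past, so the conditional mean of the increment vanishes. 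Each increment has Frobenius norm bounded by a constant (using $\norm{\Context{i}{\tau}}_{2}\le1$ from Assumption~\ref{assump:boundedness}, whence $\norm{\Context{i}{\tau}\Context{i}{\tau}^{T}}_{F}\le1$ and $\norm{\Sigma_{i,\tau}}_{F}\le1$). Since $\norm{\cdot}_{\mathrm{op}}\le\norm{\cdot}_{F}$, Lemma~\ref{lem:Hibert_concentration} (with $\delta$ replaced by $\delta/T$) bounds the fluctuation by $\sqrt{8\,(\#\text{increments})\,\log(4T/\delta)}$ on an event of probability at least $1-\delta/T$. Requiring this quantity to be at most $N\phi t/2$ then produces a threshold of exactly the stated form $t\gtrsim N^{-1}\phi^{-2}\log(4T/\delta)$.

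The main obstacle lies in this last step and concerns the dependence on $N$. To obtain the factor $N^{-1}$ in the threshold one must take the martingale increments to be the \emph{individual} arm terms $\Context{i}{\tau}\Context{i}{\tau}^{T}-\Sigma_{i,\tau}$, so that the sum of squared increment bounds scales like $O(Nt)$ rather than the $O(N^{2}t)$ one would get from round-level increments $\sum_{i}\Context{i}{\tau}\Context{i}{\tau}^{T}-\sum_{i}\Sigma_{i,\tau}$ (whose Frobenius norm can be as large as $O(N)$). The delicate point is that Assumption~\ref{assump:iid_contexts} only guarantees independence \emph{across} rounds and explicitly allows the $N$ contexts \emph{within} a round to be correlated, so one must choose the filtration and the centering carefully to keep the martingale property intact while retaining $O(1)$ increments. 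Handling this within-round correlation while preserving both the martingale structure and the $\sqrt{Nt}$ scaling is where the real work of the argument concentrates; everything else (the $\kappa$-reduction, Weyl's inequality, and the bookkeeping that turns the fluctuation bound into the $N\phi t/2$ guarantee) is routine.
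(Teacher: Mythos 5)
Your proposal follows the paper's own proof essentially step for step: the same reduction $\Mineigen{\sum_{\tau,i}\mu^{\prime}(\Context i{\tau}^{T}\beta)\Context i{\tau}\Context i{\tau}^{T}+\lambda I_{d}}\ge\kappa\Mineigen{\sum_{\tau,i}\Context i{\tau}\Context i{\tau}^{T}}$ (which indeed makes the bound uniform in $\beta$ for free), the same recentering via $\Mineigen{A+B}\ge\Mineigen{A}+\Mineigen{B}$ and $\Mineigen{C}\ge-\norm{C}_{F}$ together with Assumption~\ref{assump:minimum_eigenvalue}, and the same application of Lemma~\ref{lem:Hibert_concentration} in the Frobenius-norm Hilbert space. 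So there is no methodological divergence to report.

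The one step you flag as unresolved, however, is a genuine issue --- and it is equally unresolved in the paper. The paper's stated tail $4\exp(-N\phi^{2}t/32)$ is exactly what Lemma~\ref{lem:Hibert_concentration} produces from $Nt$ increments of Frobenius norm $O(1)$, i.e.\ from the per-arm decomposition you describe. But with per-arm increments centered at the \emph{unconditional} mean $\Sigma_{i,\tau}=\Expectation[\Context i{\tau}\Context i{\tau}^{T}]$, the sequence is not a martingale difference sequence with respect to any arm-by-arm filtration: since Assumption~\ref{assump:iid_contexts} allows the $N$ contexts within a round to be correlated, $\Expectation[\Context i{\tau}\Context i{\tau}^{T}\mid \Context 1{\tau},\ldots,\Context{i-1}{\tau}]\neq\Sigma_{i,\tau}$ in general; and centering at the conditional mean instead destroys the link to Assumption~\ref{assump:minimum_eigenvalue}. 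The decomposition that Assumption~\ref{assump:iid_contexts} cleanly supports is the round-level one, with increments $\sum_{i}(\Context i{\tau}\Context i{\tau}^{T}-\Sigma_{i,\tau})$ of Frobenius norm at most $2N$, which yields $4\exp\left(-\phi^{2}t/128\right)$ and hence a threshold $t\gtrsim\phi^{-2}\log(4T/\delta)$ \emph{without} the factor $N^{-1}$. That weaker statement still suffices for every downstream use in the paper, since $\Exploration=\Omega(\kappa^{-3}\phi^{-2}Nd^{2}\log T)$ dominates $\phi^{-2}\log(4T/\delta)$, but it is not the corollary as written. In short: you have correctly isolated the only non-routine point of the argument, but neither your proposal nor the paper supplies the justification needed for the per-arm scaling; to make your write-up airtight you should either switch to round-level increments and weaken the threshold accordingly, or add an assumption (e.g.\ conditional independence or a per-arm conditional-covariance lower bound) under which the per-arm martingale is valid.
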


\begin{proof}
Set $\XX i{\tau}=\Context i{\tau}\Context i{\tau}^{T}$. 
By Assumption 1 and 3, we have for any $\lambda > 0 $
\[
\Mineigen{\sum_{\tau=1}^{t}\sum_{i=1}^{N}\mu^{\prime}\left(\Context i{\tau}^{T}\beta\right)\XX i{\tau} + \lambda I_{d}}\ge\kappa\Mineigen{\sum_{\tau=1}^{t}\sum_{i=1}^{N}\XX i{\tau}},
\]
for all $\beta \in \mathcal{B}_r^{*}$.
Since $\Mineigen A=-\Maxeigen{-A}\ge-\norm A_{F}$, for $A\in\Real^{d \times d}$,
\begin{align*}
\Mineigen{\sum_{\tau=1}^{t}\sum_{i=1}^{N}\XX i{\tau}}\ge&\Mineigen{\sum_{\tau=1}^{t}\sum_{i=1}^{N}\XX i{\tau}-\Expectation\left[\sum_{\tau=1}^{t}\sum_{i=1}^{N}\XX i{\tau}\right]}+\Mineigen{\Expectation\left[\sum_{\tau=1}^{t}\sum_{i=1}^{N}\XX i{\tau}\right]}\\
\ge&-\norm{\sum_{\tau=1}^{t}\sum_{i=1}^{N}\XX i{\tau}-\Expectation\left[\sum_{\tau=1}^{t}\sum_{i=1}^{N}\XX i{\tau}\right]}_{2}+\phi Nt\\
\ge&-\norm{\sum_{\tau=1}^{t}\sum_{i=1}^{N}\XX i{\tau}-\Expectation\left[\sum_{\tau=1}^{t}\sum_{i=1}^{N}\XX i{\tau}\right]}_{F}+\phi Nt,
\end{align*}
where the second inequality holds due to Assumption 5.
Thus,
\begin{equation}
\begin{split}
\Probability&\left(\Mineigen{\sum_{\tau=1}^{t}\sum_{i=1}^{N}\mu^{\prime}
\left(\Context i{\tau}^{T}\beta\right)\XX i{\tau}+I_{d}}\le\frac{\kappa N\phi t}{2}\right)\\
&\le\Probability\left(\Mineigen{\sum_{\tau=1}^{t}\sum_{i=1}^{N}\XX i{\tau}}\le\frac{N\phi t}{2}\right)\\
&\le\Probability\left(\norm{\sum_{\tau=1}^{t}\sum_{i=1}^{N}\XX i{\tau}-\Expectation\left[\sum_{\tau=1}^{t}\sum_{i=1}^{N}\XX i{\tau}\right]}_{F}\!\!>\frac{N\phi t}{2}\right).
\end{split}
\end{equation}
By Assumption~\ref{assump:boundedness} and~\ref{assump:iid_contexts} and Lemma~\ref{lem:Hibert_concentration},
\[
\Probability\left(\norm{\sum_{\tau=1}^{t}\sum_{i=1}^{N}\XX i{\tau}-\Expectation\left[\sum_{\tau=1}^{t}\sum_{i=1}^{N}\XX i{\tau}\right]}_{F}\!\!>\frac{N\phi t}{2}\right)\le4\exp\left(-\frac{N\phi^{2}t}{32}\right)
\]
Setting the right term smaller than $\delta/T$ proves the result.
\end{proof}

\section{Technical lemmas}

\begin{lem}
\label{lem:chung_lemma}
\citep[Lemma 1, Theorem 32]{chung2006concentration}
For a filtration $\Filtration 0\subset\Filtration 1\subset\cdots\subset\Filtration T$, suppose each random variable $X_{t}$ is $\Filtration t$-measurable martingale, for $0\le t\le T$. 
Let $B_{t}$ denote the bad set associated with the following admissible condition: 
\[
\abs{X_{t}-X_{t-1}}\le c_{t},
\] 
for $1\le t\le T$, where $c_{1},\ldots,c_{n}$ are non-negative numbers. 
Then there exists a collection of random variables $Y_{0},\ldots,Y_{T}$ such that $Y_{t}$ is $\Filtration t$-measurable martingale such that 
\[
\abs{Y_{t}-Y_{t-1}}\le c_{t},
\]
and $\{\omega:Y_{t}(\omega)\neq X_{t}(\omega)\}\subset B_{t}$, for $0\le t\le T$.  
\end{lem}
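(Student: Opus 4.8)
The plan is to produce $(Y_t)$ by \emph{freezing} the martingale $(X_t)$ the first time an increment violates the admissible bound, so that $Y$ coincides with $X$ up to that moment and stops evolving afterwards. Concretely, I would introduce the first bad time
\[
\tau := \min\{\, t \ge 1 : \abs{X_t - X_{t-1}} > c_t \,\},
\]
which is a stopping time with respect to $\{\Filtration{t}\}$ because $\{\abs{X_t - X_{t-1}} > c_t\} = B_t \in \Filtration{t}$, and then set $Y_t := X_{t \wedge \tau}$. The stopped process of a martingale is again a martingale adapted to the same filtration, so the martingale requirement is immediate. Moreover $Y_t$ and $X_t$ can differ only after freezing has occurred, i.e.\ on $\{\tau \le t-1\} = \bigcup_{s<t} B_s$, which gives the agreement property $\{Y_t \neq X_t\} \subseteq \bigcup_{s \le t} B_s$; in the intended application each $B_s$ lies inside $E_{\gamma}^{c}$, so this union is still contained in $E_{\gamma}^{c}$, which is all that the subsequent Azuma--Hoeffding step needs.

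The increments of the stopped process are $Y_t - Y_{t-1} = (X_t - X_{t-1})\Indicator{\tau \ge t}$, and these are bounded by $c_t$ on $\{\tau > t\}$ and vanish on $\{\tau < t\}$; the one place where the bound can fail is the transition step $\{\tau = t\}$, where the retained increment is exactly the offending one. To repair this single increment I would replace it on $B_t$ by a truncated, conditionally centered version: writing $D_t := X_t - X_{t-1}$ and $\hat D_t := D_t \Indicator{\abs{D_t}\le c_t}$, one subtracts the $\Filtration{t-1}$-measurable compensator $\CE{\hat D_t}{\Filtration{t-1}}$ so that the modified increment keeps conditional mean zero, preserving the martingale identity, while agreeing with $D_t$ wherever the step is good.

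The main obstacle is reconciling all three demands at the transition step simultaneously: the centering correction $\CE{\hat D_t}{\Filtration{t-1}}$ needed to keep $Y$ a martingale is a priori unbounded, so enforcing $\abs{Y_t - Y_{t-1}} \le c_t$ almost surely \emph{while} matching $X$ off the bad set is delicate, and it is exactly this point that the cited construction of \citet{chung2006concentration} handles. I would therefore either invoke their lemma directly, or carry out the explicit freezing-plus-compensation argument above, verifying by induction on $t$ that the compensator acts only on the $\Filtration{t}$-measurable bad set and that the resulting process satisfies the two displayed conclusions of the lemma. Once $(Y_t)$ is in hand, the later uses (e.g.\ in Lemma~\ref{lem:prediction_error}) follow by applying a bounded-difference inequality to $Y$ and transferring the conclusion back to $X$ on the event where no bad step occurs.
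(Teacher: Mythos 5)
There is a genuine gap here, and it is exactly at the step you yourself flag as ``delicate.'' First, note that the paper offers no proof of this lemma at all: it is cited from \citet{chung2006concentration}, and the Remark immediately following it exhibits an explicit martingale for which the conclusion fails (increments equal to $2-(t+1)^{2}/\delta$ with probability $\delta/(t+1)^{2}$ and $\approx 1$ otherwise, with $c_t=1$), deriving a contradiction via Azuma--Hoeffding. So the statement as written is false, and no proof strategy can close the gap without strengthening the hypotheses. Your truncate-and-recenter repair is where this bites: to keep $Y$ a martingale \emph{and} keep $Y_t=X_t$ off $B_t$, the entire correction must be supported on $B_t$, so its size on $B_t$ is forced to be at least $\abs{\CE{D_t\Indicator{B_t}}{\Filtration{t-1}}}/\CP{B_t}{\Filtration{t-1}}$; when $B_t$ has tiny conditional probability but carries a large conditional mean (as in the paper's counterexample), no replacement increment bounded by $c_t$ can restore the zero conditional mean. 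Subtracting the compensator $\CE{\hat D_t}{\Filtration{t-1}}$ globally instead preserves the martingale property but destroys agreement with $X$ on $B_t^{c}$, so all three requirements cannot hold simultaneously. A secondary issue: your freezing construction only yields $\{Y_t\neq X_t\}\subseteq\bigcup_{s\le t}B_s$, which is weaker than the stated containment in the single set $B_t$ (though the union is what the downstream application uses).

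What the paper actually does is keep the citation but sidestep the defect in its application: in the section on computing the selection probability it clips $\SelectionP{i}{t}\ge\gamma/2$, so the bad events relevant to Lemma~\ref{lem:prediction_error} and Lemma~\ref{lem:general_DR_bound} cannot be of the pathological small-probability/large-increment form, and the bounded-difference argument goes through on $E_{\gamma}$. If you want a rigorous self-contained route, the cleaner fix is to avoid the lemma entirely and apply Azuma--Hoeffding (or Freedman) directly to the stopped/truncated martingale $\sum_{\tau\le t}D_\tau\Indicator{\abs{D_\tau}\le c_\tau}$ conditioned on the event that no truncation occurs, rather than asserting the existence of a globally bounded martingale agreeing with $X$ off each $B_t$.
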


\begin{rem}
We found a counter example where Lemma~\ref{lem:chung_lemma} does not hold.
Suppose for each $t\in[T]$, $\Filtration{t}:=\{X_1,\ldots,X_t\}$ and
\[
X_{t}-X_{t-1}:=
\begin{cases}
2-\frac{{t+1}^{2}}{\delta} & \text{with probability }\frac{\delta}{{t+1}^{2}},\\
2-\frac{1}{1-\frac{\delta}{{t+1}^{2}}} & \text{with probability }1-\frac{\delta}{{t+1}^{2}}.
\end{cases}
\]
Then $X_t$ is $\Filtration{t}$ measurable martingale  with $X_0=0$.
Set $B_t:=\{|X_t-X_{t-1}|>1\}$.
By Lemma~\ref{lem:chung_lemma}, there exists a $\Filtration{t}$-measurable martingale $Y_t$ such that $|Y_t-Y_{t-1}|<1$ and
$\{\omega:Y_{t}(\omega)\neq X_{t}(\omega)\}\subset B_{t}$, for $0\le t\le T$.
By Lemma~\ref{lem:Azuma_Hoeffding_inequality},
\[
\Probability\left(X_{T}\ge\sqrt{2T\log\frac{1}{\delta}}\right)\le\Probability\left(\left\{ X_{T}\ge\sqrt{2T\log\frac{1}{\delta}}\right\} \cap B_{T}^{c}\right)+\Probability\left(B_{T}\right)\le\Probability\left(Y_{T}\ge\sqrt{2T\log\frac{1}{\delta}}\right)+\sum_{t=1}^{T}\frac{\delta}{{t+1}^{2}}\le2\delta.
\]
However, by definition of $X_t$,
\[
\Probability\left(X_{T}\ge\sqrt{2T\log\frac{1}{\delta}}\right)\ge\Probability\left(X_{T}\ge T\right)\ge\Probability\left(X_{T}=2T-\sum_{t=1}^{T}\frac{1}{1-\frac{\delta}{\left(t+1\right)^{2}}}\right)=1-\sum_{t=1}^{T}\frac{\delta}{\left(t+1\right)^{2}}\ge1-\delta,
\]
holds for $\delta \ge e^{-T/2}$, which is a contradiction to the first inequality.
\label{rem:chung_lemma}
\end{rem}

\begin{lem}
\label{lem:dimension_reduction} \citep[Lemma 2.3]{lee2016} Let $\{N_{t}\}$
be an martingale on a Hilbert space $(\mathcal{H},\norm{\cdot}_{\mathcal{H}})$.
Then there exists an $\Real^{2}$-valued martingale $\{M_{t}\}$ such
that for any time $t\ge0$, $\norm{M_{t}}_{2}=\norm{N_{t}}_{\mathcal{H}}$
and $\norm{M_{t+1}-M_{t}}_{2}=\norm{N_{t+1}-N_{t}}_{\mathcal{H}}$.
\end{lem}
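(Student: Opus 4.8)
The plan is to build the reduced martingale $\{M_t\}$ one increment at a time, after first enlarging the probability space by an independent sequence of signs. Concretely, I would introduce i.i.d.\ Rademacher variables $\{\epsilon_t\}_{t\ge1}$, with $\Probability(\epsilon_t=1)=\Probability(\epsilon_t=-1)=1/2$, independent of the entire filtration $\{\Filtration t\}$ carrying $\{N_t\}$, and set $\mathcal{G}_t:=\sigma\bigl(\Filtration t\cup\sigma(\epsilon_1,\ldots,\epsilon_t)\bigr)$. I would start from $M_0:=(\norm{N_0},0)$ and maintain, as an induction hypothesis at each stage, the three properties (i) $\norm{M_t}_2=\norm{N_t}_{\mathcal H}$, (ii) $\norm{M_{t+1}-M_t}_2=\norm{N_{t+1}-N_t}_{\mathcal H}$, and (iii) $\CE{M_{t+1}}{\mathcal{G}_t}=M_t$. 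Thus everything reduces to a single geometric step: given $M_t$ with $\norm{M_t}_2=\norm{N_t}_{\mathcal H}$, construct a $\mathcal{G}_{t+1}$-measurable increment realizing (i)--(iii).

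For the one-step construction, write $r_t:=\norm{N_t}_{\mathcal H}$ and $\Delta:=N_{t+1}-N_t$. When $r_t>0$ I would put $u:=M_t/r_t$, let $u^{\perp}$ be the rotation of $u$ by $\pi/2$, and define
\[
a:=\frac{\langle N_t,\Delta\rangle_{\mathcal H}}{r_t},\qquad
\delta_{t+1}:=a\,u+\epsilon_{t+1}\sqrt{\norm{\Delta}_{\mathcal H}^{2}-a^{2}}\;u^{\perp},
\]
where the square root is real because $a^{2}\le\norm{\Delta}_{\mathcal H}^{2}$ by Cauchy--Schwarz; in the degenerate stratum $r_t=0$ (equivalently $N_t=0$, equivalently $M_t=0$ by (i)) I would simply take $\delta_{t+1}:=\epsilon_{t+1}\norm{\Delta}_{\mathcal H}(1,0)$. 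In both cases $\norm{\delta_{t+1}}_2^{2}=a^{2}+(\norm{\Delta}_{\mathcal H}^{2}-a^{2})=\norm{\Delta}_{\mathcal H}^{2}$, which is (ii). Since $\langle M_t,\delta_{t+1}\rangle_2=r_t a=\langle N_t,\Delta\rangle_{\mathcal H}$, the cosine-rule expansion
\[
\norm{M_{t+1}}_2^{2}=r_t^{2}+2\langle M_t,\delta_{t+1}\rangle_2+\norm{\delta_{t+1}}_2^{2}
=r_t^{2}+2\langle N_t,\Delta\rangle_{\mathcal H}+\norm{\Delta}_{\mathcal H}^{2}=\norm{N_{t+1}}_{\mathcal H}^{2}
\]
recovers (i) at stage $t+1$, so the norm-matching is forced by matching the single scalar inner product $\langle N_t,\Delta\rangle_{\mathcal H}$.

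It then remains to verify the martingale property (iii). Because $u$ and $u^{\perp}$ are $\mathcal{G}_t$-measurable, $\CE{\delta_{t+1}}{\mathcal{G}_t}=\CE{a}{\mathcal{G}_t}\,u+\CE{b}{\mathcal{G}_t}\,u^{\perp}$ with $b:=\epsilon_{t+1}\sqrt{\norm{\Delta}_{\mathcal H}^{2}-a^{2}}$. The essential point is that the signs are independent of $\Filtration{\infty}$, so conditioning on $\mathcal{G}_t$ acts on $\Filtration{\infty}$-measurable quantities exactly as conditioning on $\Filtration t$; hence $\CE{\Delta}{\mathcal{G}_t}=\CE{\Delta}{\Filtration t}=0$ yields $\CE{a}{\mathcal{G}_t}=r_t^{-1}\langle N_t,\CE{\Delta}{\Filtration t}\rangle_{\mathcal H}=0$, while $\CE{b}{\mathcal{G}_t}=0$ because $\epsilon_{t+1}$ has mean zero and is independent of both $\mathcal{G}_t$ and $\Delta$. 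Thus $\CE{M_{t+1}}{\mathcal{G}_t}=M_t$. I expect the main obstacle to be precisely this martingale verification: the norm identities pin down $\delta_{t+1}$ only up to the sign of its $u^{\perp}$-component, and the whole argument hinges on randomizing that sign with auxiliary variables that are independent of the \emph{entire} original filtration, so that $\Delta$ keeps conditional mean zero and the orthogonal part averages out. The only residual care is the degenerate case $N_t=0$ and the Cauchy--Schwarz feasibility guaranteeing the orthogonal magnitude is real.
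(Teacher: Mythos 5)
Your construction is correct. Note that the paper itself offers no proof of this lemma: it is imported verbatim as a citation to Lemma~2.3 of Lee (2016), which in turn is the classical dimension-reduction result of Kallenberg and Sztencel for Hilbert-space-valued martingales. What you have written is essentially the standard proof of that cited result: match the radial component of the increment via $a=\langle N_t,\Delta\rangle_{\mathcal H}/\norm{N_t}_{\mathcal H}$ so that the cosine rule forces $\norm{M_{t+1}}_2=\norm{N_{t+1}}_{\mathcal H}$, put the leftover magnitude $\sqrt{\norm{\Delta}_{\mathcal H}^2-a^2}$ (real by Cauchy--Schwarz) in the orthogonal direction, and randomize its sign with Rademacher variables independent of the whole original filtration so that both components of the increment have vanishing conditional mean. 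The only points you leave implicit are routine: the event $\{N_t=0\}$ is $\mathcal{G}_t$-measurable so the two-branch definition is measurable; the identity $\CE{\langle N_t,\Delta\rangle_{\mathcal H}}{\mathcal{G}_t}=\langle N_t,\CE{\Delta}{\mathcal{G}_t}\rangle_{\mathcal H}$ is the pull-out property of the Bochner conditional expectation applied to the bounded factor $N_t/\norm{N_t}_{\mathcal H}$; integrability of $M_t$ follows from $\abs{a},\abs{b}\le\norm{\Delta}_{\mathcal H}$; and the resulting $\{M_t\}$ is a martingale for the enlarged filtration $\{\mathcal{G}_t\}$ on an enlarged probability space, which the statement of the lemma permits since it does not fix the filtration. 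One genuine caveat worth recording: the construction only produces the martingale on a product extension of the original space, so wherever the paper uses this lemma (e.g.\ inside Lemma~\ref{lem:Hibert_concentration}) the tail bounds must be read as statements about $\norm{N_t}_{\mathcal H}$, whose law is unchanged by the extension -- which is indeed how they are used.
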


\begin{lem}
\citep{azuma1967weighted} 
\label{lem:Azuma_Hoeffding_inequality} 
(Azuma-Hoeffding inequality)
If a super-martingale $(Y_{t};t\ge0)$ corresponding to filtration
$\Filtration t$, satisfies $\abs{Y_{t}-Y_{t-1}}\le c_{t}$ for some
constant $c_{t}$, for all $t=1,\ldots,T,$ then for any $a\ge0$,
\[
\Probability\left(Y_{T}-Y_{0}\ge a\right)\le e^{-\frac{a^{2}}{2\sum_{t=1}^{T}c_{t}^{2}}}.
\]
\end{lem}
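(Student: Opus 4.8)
The plan is to use the standard exponential (Chernoff) method adapted to supermartingales with bounded increments. First I would write $D_t := Y_t - Y_{t-1}$ for the increments, so that $Y_T - Y_0 = \sum_{t=1}^{T} D_t$ with $\abs{D_t} \le c_t$ almost surely, and the supermartingale property gives $\CE{D_t}{\Filtration{t-1}} \le 0$. For any $\lambda > 0$, Markov's inequality applied to $e^{\lambda(Y_T - Y_0)}$ yields $\Probability(Y_T - Y_0 \ge a) \le e^{-\lambda a}\, \Expectation[e^{\lambda(Y_T - Y_0)}]$, so the task reduces to controlling the moment generating function of the summed increments.

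The key step is a conditional Hoeffding bound for each increment. Since $x \mapsto e^{\lambda x}$ is convex, for $x \in [-c_t, c_t]$ I would use the chord bound $e^{\lambda x} \le \frac{c_t - x}{2 c_t} e^{-\lambda c_t} + \frac{c_t + x}{2 c_t} e^{\lambda c_t}$. Taking $\CE{\cdot}{\Filtration{t-1}}$ and writing $m := \CE{D_t}{\Filtration{t-1}} \le 0$, the right-hand side becomes $\frac{c_t - m}{2 c_t} e^{-\lambda c_t} + \frac{c_t + m}{2 c_t} e^{\lambda c_t}$, which is increasing in $m$ for $\lambda, c_t > 0$; hence over $m \le 0$ it is maximized at $m = 0$, giving $\CE{e^{\lambda D_t}}{\Filtration{t-1}} \le \cosh(\lambda c_t) \le e^{\lambda^2 c_t^2 / 2}$. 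This is the only place the supermartingale (rather than martingale) hypothesis enters, and the monotonicity argument disposes of the inequality $\CE{D_t}{\Filtration{t-1}} \le 0$ cleanly.

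Next I would peel off the increments one at a time using the tower property. Conditioning on $\Filtration{T-1}$ and pulling out the $\Filtration{T-1}$-measurable factor $e^{\lambda(Y_{T-1} - Y_0)}$, the bound above gives $\Expectation[e^{\lambda(Y_T - Y_0)}] \le e^{\lambda^2 c_T^2 / 2}\, \Expectation[e^{\lambda(Y_{T-1} - Y_0)}]$; iterating down to $t = 1$ produces $\Expectation[e^{\lambda(Y_T - Y_0)}] \le \exp\left(\frac{\lambda^2}{2} \sum_{t=1}^{T} c_t^2\right)$. Combining with the Chernoff step gives $\Probability(Y_T - Y_0 \ge a) \le \exp\left(-\lambda a + \frac{\lambda^2}{2}\sum_{t=1}^{T} c_t^2\right)$ for every $\lambda > 0$.

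Finally I would optimize the free parameter. The exponent is a quadratic in $\lambda$ minimized at $\lambda = a / \sum_{t=1}^{T} c_t^2$, and substituting this value yields exactly $\Probability(Y_T - Y_0 \ge a) \le \exp\left(-\frac{a^2}{2 \sum_{t=1}^{T} c_t^2}\right)$. The main obstacle is the conditional Hoeffding estimate: one must verify that the convexity/chord argument survives conditioning on $\Filtration{t-1}$ and that the sign of the conditional mean is exploited in the correct direction. Once that lemma is in place, the remainder is a routine tower-property induction followed by a one-line minimization over $\lambda$.
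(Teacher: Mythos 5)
Your proof is correct and complete: the Chernoff/Markov step, the conditional Hoeffding bound via the chord inequality together with the monotonicity-in-$m$ argument that correctly exploits $\CE{D_t}{\Filtration{t-1}}\le 0$, the tower-property peeling, and the optimization $\lambda = a/\sum_{t=1}^{T}c_t^2$ are all the standard ingredients, each applied correctly. The paper itself gives no proof of this lemma --- it is imported by citation from \citet{azuma1967weighted} --- so there is nothing to compare against; your argument is the canonical one and would serve as a self-contained substitute for the citation.
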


\begin{lem}
\label{lem:smooth_injection}
\citep[Lemma A]{chen1999strong}
Let $H$ be a smooth injection from $\Real^p$ to $\Real^p$ with $H(x_0)=y_0$.
Define $B_{\delta}(x_0):=\{x\in\Real^{p},\norm{x-x_0}_2 \le \delta \}$ and $S_{\delta}=\{x\in\Real^p,\norm{x-x_0}=\delta \}$.
Then $\inf_{x\in S_{\delta}(x_0)}\norm{H(x)-y_0}_2\ge r$ implies:
\[
H^{-1}\left(B_r(y_0)\right) \subseteq B_{\delta}\left(x_0\right)
\]
\end{lem}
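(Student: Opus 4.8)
The plan is to deduce the statement from \emph{Brouwer's invariance of domain} together with the injectivity of $H$, treating it as a surjectivity-onto-a-ball result that injectivity then converts into a containment of preimages. First I would note that smoothness plays no essential role: since $H:\Real^{p}\to\Real^{p}$ is a continuous injection between spaces of equal dimension, invariance of domain guarantees that $H$ is an open map, so that $V:=H(\Real^{p})$ is open and $H:\Real^{p}\to V$ is a homeomorphism with continuous inverse $H^{-1}:V\to\Real^{p}$. The hypothesis $\inf_{x\in S_{\delta}(x_0)}\norm{H(x)-y_0}_{2}\ge r$ then reads exactly as the disjointness of the open ball $W:=\{y:\norm{y-y_0}_{2}<r\}$ from the compact set $H(S_{\delta}(x_0))$, and this disjointness is what drives the argument.

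The key step is to prove $W\subseteq H(B_{\delta}^{\circ}(x_0))$, where $B_{\delta}^{\circ}(x_0):=\{x:\norm{x-x_0}_{2}<\delta\}$ is the open ball. I would run a connectedness argument inside the connected set $W$. The set $A:=W\cap H(B_{\delta}^{\circ}(x_0))$ is open and nonempty, since $y_0=H(x_0)\in A$. Writing $H(B_{\delta}(x_0))=H(B_{\delta}^{\circ}(x_0))\cup H(S_{\delta}(x_0))$ and using $W\cap H(S_{\delta}(x_0))=\emptyset$, one checks $W\setminus A=W\setminus H(B_{\delta}(x_0))$, which is open because $B_{\delta}(x_0)$ is compact and hence $H(B_{\delta}(x_0))$ is closed. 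Thus $A$ is clopen in $W$; since $W$ is connected and $A\neq\emptyset$, we conclude $A=W$, i.e.\ $W\subseteq H(B_{\delta}^{\circ}(x_0))$.

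Finally I would invoke injectivity to pass from surjectivity onto $W$ to the desired preimage containment. For any $y\in W$ the previous step gives some $x\in B_{\delta}^{\circ}(x_0)$ with $H(x)=y$, and by injectivity this $x$ is the unique preimage of $y$ in all of $\Real^{p}$; hence $H^{-1}(W)\subseteq B_{\delta}^{\circ}(x_0)\subseteq B_{\delta}(x_0)$. To upgrade from the open ball $W$ to the closed ball $B_{r}(y_0)$, take any $y$ with $\norm{y-y_0}_{2}\le r$ that admits a preimage $x^{*}=H^{-1}(y)$ (points with no preimage impose no constraint), approximate it from inside by $y_n:=y_0+(1-\tfrac{1}{n})(y-y_0)\in W$, and use continuity of $H^{-1}$ on $V$ to get $x^{*}=\lim_n H^{-1}(y_n)$ with each $H^{-1}(y_n)\in B_{\delta}^{\circ}(x_0)$; since $B_{\delta}(x_0)$ is closed, $x^{*}\in B_{\delta}(x_0)$. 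This yields $H^{-1}(B_{r}(y_0))\subseteq B_{\delta}(x_0)$.

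The main obstacle is the clopen/connectedness step: it is precisely here that the hypothesis $\inf_{x\in S_{\delta}(x_0)}\norm{H(x)-y_0}_{2}\ge r$ is used (to force $W\cap H(S_{\delta}(x_0))=\emptyset$), and one must carefully separate the images of the open ball and of the sphere. An alternative route that avoids invariance of domain is degree theory: the Brouwer degree $\deg(H-y_0,\,B_{\delta}^{\circ}(x_0),\,c)$ is well defined for every $c$ with $\norm{c}_{2}<r$ because $H(x)-y_0\neq c$ on $S_{\delta}(x_0)$, is constant in $c$ on that ball by homotopy invariance, equals $\pm1$ at $c=0$ since $H$ is injective, and a nonzero degree forces a solution of $H(x)=y_0+c$ inside $B_{\delta}^{\circ}(x_0)$; injectivity again makes it unique. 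I would present the invariance-of-domain version, as it is the most self-contained.
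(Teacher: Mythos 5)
The paper offers no proof of this lemma: it is imported verbatim as Lemma~A of \citet{chen1999strong}, so there is no in-paper argument to compare against, and your proposal must be judged as a self-contained derivation. Judged that way, it is correct. Invariance of domain is indeed the right tool (and genuinely needed: a smooth injection such as $x\mapsto x^{3}$ can have a vanishing Jacobian, so the inverse function theorem alone would not make $H$ an open map); the hypothesis $\inf_{x\in S_{\delta}(x_0)}\norm{H(x)-y_0}_2\ge r$ is precisely the disjointness of the open ball $W$ of radius $r$ about $y_0$ from the compact set $H(S_{\delta}(x_0))$; and that disjointness is what makes $A=W\cap H(B^{\circ}_{\delta}(x_0))$ both open (openness of $H$) and closed in $W$ (via $W\setminus A=W\setminus H(B_{\delta}(x_0))$ with $H(B_{\delta}(x_0))$ compact), so connectedness of $W$ gives $W\subseteq H(B^{\circ}_{\delta}(x_0))$ and injectivity converts this into $H^{-1}(W)\subseteq B^{\circ}_{\delta}(x_0)$. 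The passage from the open ball $W$ to the closed ball $B_{r}(y_0)$ by radial approximation, using continuity of $H^{-1}$ on the open image $H(\Real^{p})$ and closedness of $B_{\delta}(x_0)$, is also handled correctly, including the observation that points of $B_{r}(y_0)$ outside the range impose no constraint. The degree-theoretic alternative you sketch is likewise a standard and valid route. No gaps.
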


\begin{lem}
\label{lem:freedman_inequalty}
\citep[Theorem 4]{dani2008stochastic} (\citeauthor{freedman1975tail}) Suppose $X_1$,\ldots,$X_T$ is a martingale difference sequence adapted to the filtration $\Filtration{0},\ldots,\Filtration{T-1}$, and $b$ is an uniform upper bound on the steps $X_i$.
Let $V$ denote the sum of conditional variances,
\[
V_T := \sum_{t=1}^{T} \CV{X_t^2}{\Filtration{t-1}}.
\]
Then, for every $a,v>0$
\[
\Probability\left(\left\{ \sum_{t=1}^{T}X_{t}\ge a\right\} \cap\left\{ V_{T}\le v\right\} \right)\le\exp\left(-\frac{a^{2}}{2v+2ab/3}\right).
\]
\end{lem}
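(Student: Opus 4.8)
The plan is to prove Lemma~\ref{lem:freedman_inequalty} by the exponential supermartingale (Cram\'er--Chernoff) method adapted to martingales, which is the standard route to Freedman-type bounds. Write $S_n:=\sum_{t=1}^{n}X_t$ for the partial sums and let $V_n:=\sum_{t=1}^{n}\CV{X_t}{\Filtration{t-1}}$ denote the accumulated conditional variance, so that $V_T=V$. The first step is a one-step moment bound: I would establish the pointwise inequality $e^{\lambda x}\le 1+\lambda x+\varphi(\lambda)x^2$ for all $x\le b$ and $\lambda>0$, where $\varphi(\lambda):=(e^{\lambda b}-1-\lambda b)/b^2$, which follows because the map $x\mapsto(e^{\lambda x}-1-\lambda x)/x^2$ is nondecreasing. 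Applying this with $x=X_t\le b$ and taking conditional expectation, the martingale-difference property $\CE{X_t}{\Filtration{t-1}}=0$ annihilates the linear term and yields $\CE{e^{\lambda X_t}}{\Filtration{t-1}}\le 1+\varphi(\lambda)\,\CV{X_t}{\Filtration{t-1}}\le \exp\!\big(\varphi(\lambda)\,\CV{X_t}{\Filtration{t-1}}\big)$, using $1+u\le e^u$.

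Next I would assemble the supermartingale $M_n:=\exp\!\big(\lambda S_n-\varphi(\lambda)V_n\big)$ with $M_0=1$. Factoring out the $\Filtration{n-1}$-measurable terms and inserting the one-step bound gives $\CE{M_n}{\Filtration{n-1}}\le M_{n-1}$, hence $\Expectation[M_T]\le 1$. The decisive observation is how the event $\{V_T\le v\}$ enters: since $\varphi(\lambda)\ge 0$, on $\{S_T\ge a\}\cap\{V_T\le v\}$ one may replace the \emph{random} $V_T$ by the deterministic $v$ to obtain $M_T\ge\exp\!\big(\lambda a-\varphi(\lambda)v\big)$. A Chernoff/Markov step then gives
\[
\Probability\left(\{S_T\ge a\}\cap\{V_T\le v\}\right)\le e^{-\lambda a+\varphi(\lambda)v}\,\Expectation[M_T]\le \exp\!\big(-\lambda a+\varphi(\lambda)v\big),
\]
valid for every $\lambda>0$.

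It remains to optimize over $\lambda$. I would bound $\varphi(\lambda)\le \lambda^2/\big(2(1-\lambda b/3)\big)$ for $0<\lambda b<3$, so the exponent is at most $-\lambda a+\lambda^2 v/\big(2(1-\lambda b/3)\big)$, and then choose $\lambda=a/(v+ab/3)$; a short computation shows $1-\lambda b/3=v/(v+ab/3)$, so this choice lands exactly on the exponent $-a^2/(2v+2ab/3)$, proving the claim (the constraint $\lambda b<3$ holds automatically since $ab<3v+ab$). The main obstacle I anticipate is the correct handling of the random accumulated variance $V_T$ inside the Chernoff bound: it is precisely the intersection with $\{V_T\le v\}$, together with $\varphi(\lambda)\ge0$, that lets one dispense with any deterministic a priori control of the variance, which is the whole point of Freedman's refinement over Azuma--Hoeffding. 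Securing the sharp constant $2ab/3$ rather than a looser one hinges on the auxiliary inequality for $\varphi(\lambda)$ and the exact minimizing choice of $\lambda$, so care is needed there.
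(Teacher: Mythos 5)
The paper does not prove this lemma at all: it is imported verbatim as a known result (Freedman's inequality, in the form stated as Theorem~4 of \citet{dani2008stochastic}), so there is no in-paper argument to compare yours against. Your blind proof is the standard and correct derivation: the one-step bound $e^{\lambda x}\le 1+\lambda x+\varphi(\lambda)x^2$ for $x\le b$ with $\varphi(\lambda)=(e^{\lambda b}-1-\lambda b)/b^2$, the supermartingale $M_n=\exp(\lambda S_n-\varphi(\lambda)V_n)$ with $\Expectation[M_T]\le 1$, restriction to the event $\{S_T\ge a\}\cap\{V_T\le v\}$ to replace the random $V_T$ by $v$, the bound $\varphi(\lambda)\le \lambda^2/(2(1-\lambda b/3))$, and the choice $\lambda=a/(v+ab/3)$ all check out, and the final exponent $-a^2/(2v+2ab/3)$ is recovered exactly with the constraint $\lambda b<3$ automatically satisfied. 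One small remark: the paper's statement writes $V_T=\sum_{t=1}^{T}\CV{X_t^2}{\Filtration{t-1}}$, which is a typo for the sum of conditional variances of the $X_t$ themselves (equivalently $\sum_t\CE{X_t^2}{\Filtration{t-1}}$ for a martingale difference sequence); your proof uses the correct quantity, which is the one actually needed both here and where the lemma is invoked in bounding $\mathcal{V}_{t-1}$ in the proof of Lemma~\ref{lem:prediction_error}.
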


\section{Additional experiment results}
In this section, we provide additional experiment results with confidence intervals.
We run the same experiment as in \ref{subsec:simluation_data} except that $T=2000$ and a fixed $\beta^{*}$ over the 5 repeated runs. 
Figure~\ref{fig:_sim_sd} shows the comparison of the average and standard deviation of the cumulative regrets.
The proposed algorithm outperforms four other candidates in all four scenarios, and their ranges of one standard deviation do not overlap with those of other models.

\begin{figure}[t]
\centering
\begin{tabular}{cc}
\includegraphics[width=0.45\textwidth]{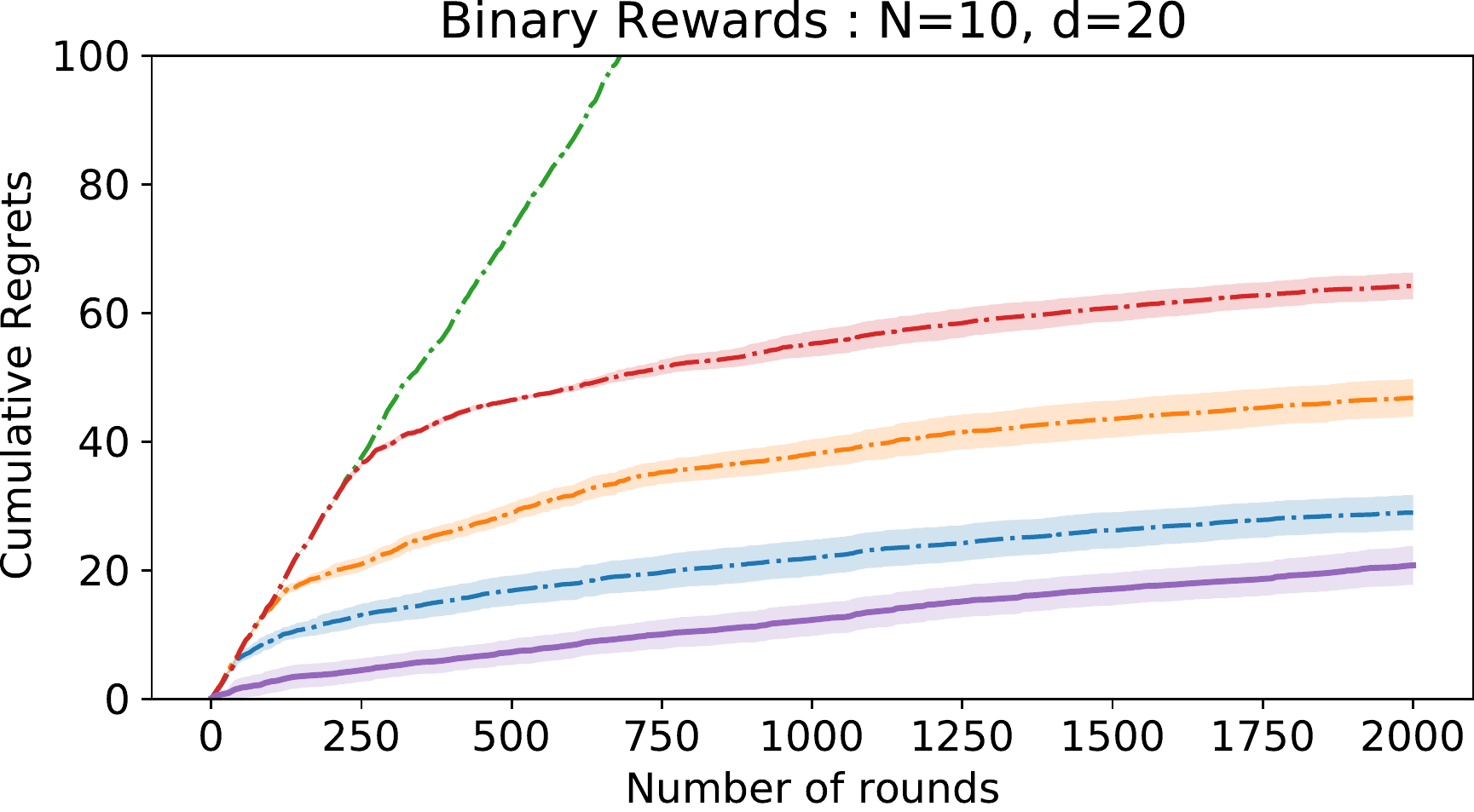}& \includegraphics[width=0.45\textwidth]{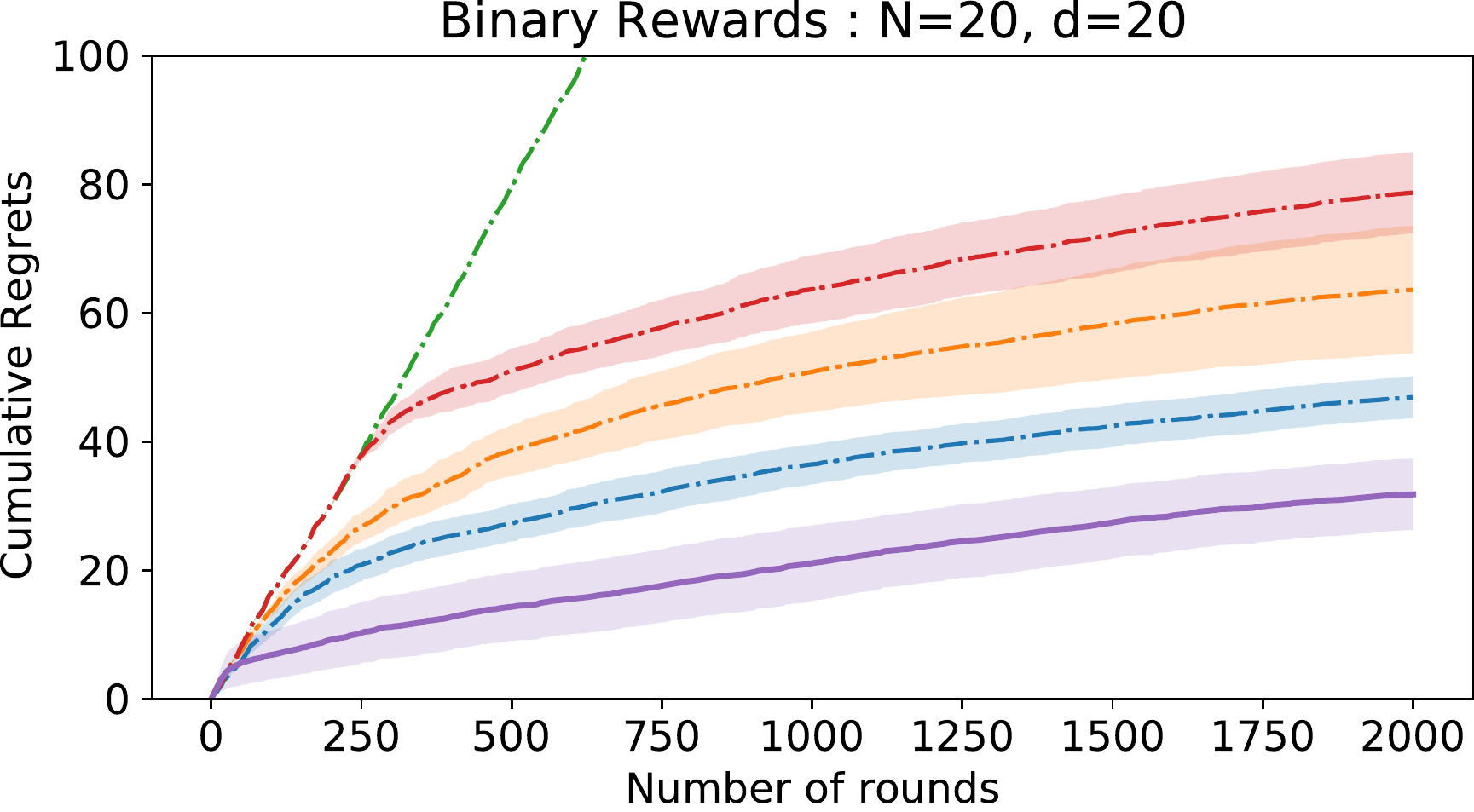}\\ \includegraphics[width=0.45\textwidth]{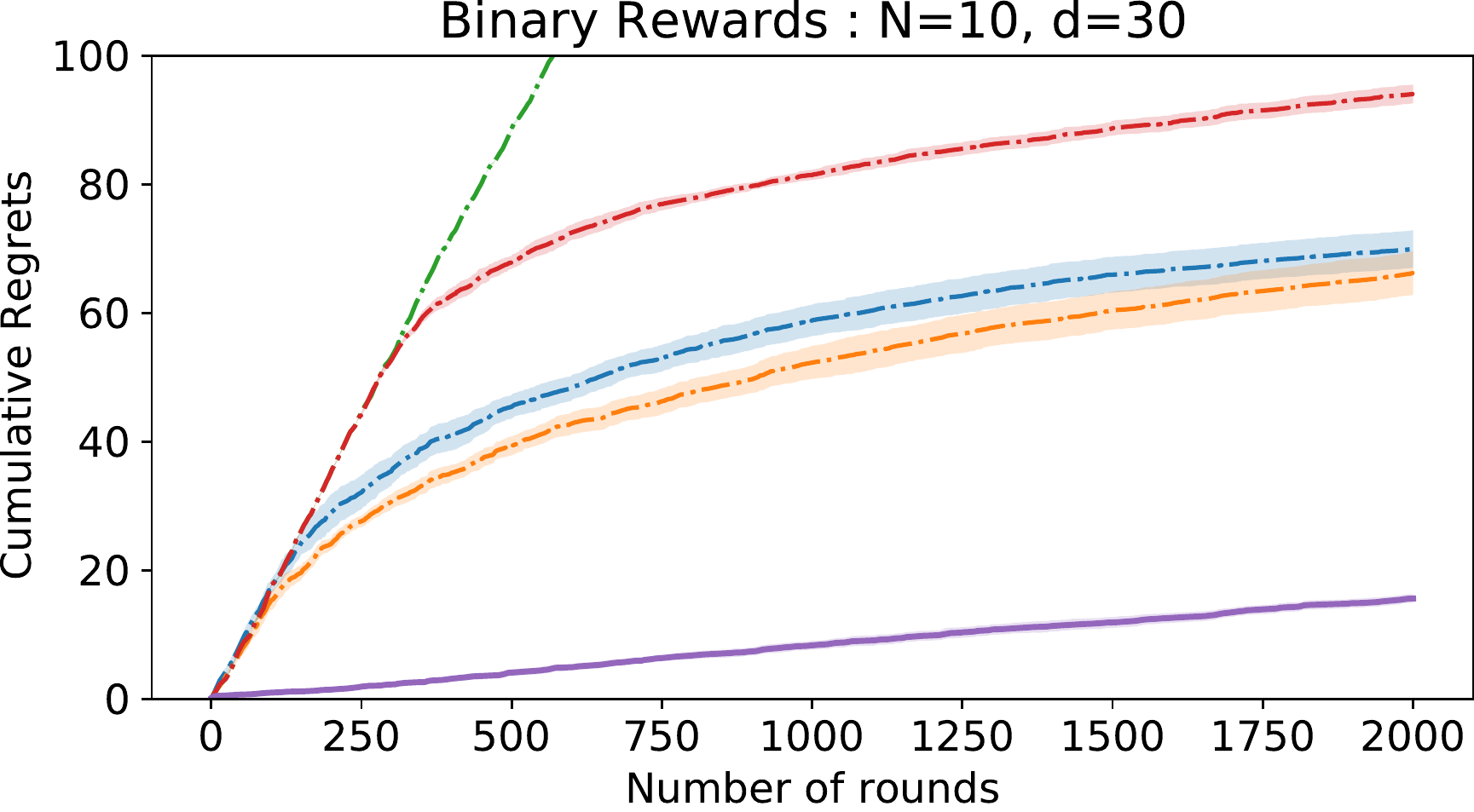}& \includegraphics[width=0.45\textwidth]{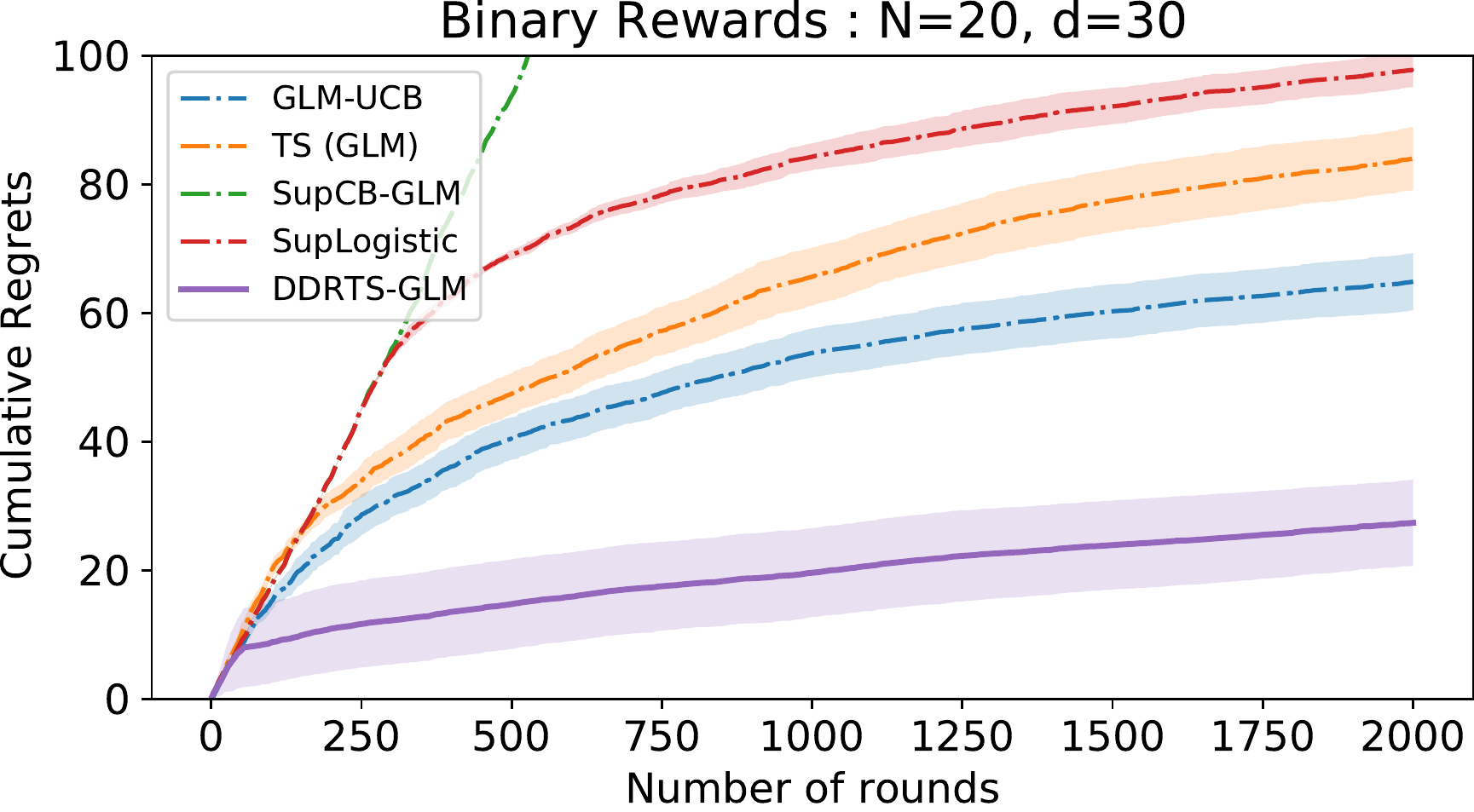}
\end{tabular}
\caption{Comparison of the average cumulative regret on synthetic dataset over 5 repeated runs with $T=2000$. The shaded area in the figures represents the one standard deviation range at each round.}
\label{fig:_sim_sd}
\end{figure}

\section{Limitations}
\label{sec:limitations}
\begin{enumerate}
    \item 
    In our proposed algorithm \texttt{DDRTS-GLM}, there are more computations in resampling, imputation estimator and DDR estimator than those in $\texttt{LinTS}$ variants.
    In resampling, computation of $\Tildepi{m_t}{t}$ for at most $M_t$ times is required. 
    In computing the imputation estimator and DDR estimator, we use all contexts and the time complexity for estimation increases in $N$.
    However, these additional computations are minor when running the algorithm because resampling does not occur in most of the rounds and the estimation consists of strictly convex optimization.
    
    \item 
    The regret bound~\eqref{eq:regret_bound_phi} holds for certain context distributions which satisfies Assumption \ref{assump:iid_contexts} and Assumption~\ref{assump:minimum_eigenvalue}.
    The $\tilde{O}(\sqrt{\kappa^{-1}dT})$ regret bound holds for several practical cases stated in Section~\ref{subsec:phi_d_condition}.
    These assumptions do not hold in general.
    Even with this limitations, our work is the first among those for \texttt{LinTS} variants to propose novel regret analyses achieving an $\tilde{O}(\sqrt{\kappa^{-1}dT})$ regret bound for GLBs.
    
    \item 
    The lower bound is missing under our settings and assumptions and the optimality in our setting is not proved.
    However, finding a lower bound for the GLB problem is challenging and this will require another substantial work.
    To our knowledge, any lower bound for GLB problems is yet to be reported even in standard assumptions.
\end{enumerate}

\section{Computation of the selection probability}
\label{sec:pi_computation}
We refer to Section H in \citet{kim2021doubly} which proposed a Monte-Carlo estimate and showed that the estimate is efficiently computable.
In this section, we provide details of how to compute the selection probability, $\SelectionP{i}{t}$.
Because the counter example in Remark~\ref{rem:chung_lemma} can be applied our case when $N=2$ and $\Tildepi{1}{t}=1-\delta/t^2$ and $\Tildepi{2}{t}=\delta/t^2$, for some $t\in[T]$.
To avoid this example, we adjust our $\Tildepi{i}{t}$ to $\SelectionP{i}{t}$ such that
\[
{\pi}_{i,t}:=
\begin{cases}
\gamma+\epsilon_{t} & \Tildepi{i}{t}>\gamma\\
\frac{\gamma}{2} & \Tildepi{i}{t}\le\gamma
\end{cases}
\]
where $\epsilon_{t}>0$ is a constant that makes $\sum_{i=1}^{N}\SelectionP{i}{t}=1$ hold for each $t\in[T]$.
In this way, we obtain 
\[
\min_{i\in[N]}\SelectionP{i}{t} \ge \frac{\gamma}{2} \ge \frac{1}{2(N+1)} > \frac{\delta}{t^2}, 
\]
and this avoids the counter example in Remark~\ref{rem:chung_lemma}.
By resampling $\SelectionP{i}{t}$ instead of $\Tildepi{i}{t}$, we obtain the same theoretical results and the regret bounds in Theorem~\ref{thm:regret_bound} and Theorem~\ref{thm:fast_regret_bound} hold accordingly.

\end{document}